\newcommand*{\fullref}[1]{\hyperref[{#1}]{\autoref*{#1} \nameref*{#1}}}
\newtheorem{theorem}{Theorem}
\newtheorem{lemma}{Lemma}
\newtheorem{corollary}{Corollary}
\newtheorem*{proof}{Proof}
\newcommand{\tmix}{t_{\operatorname{mix}}}
\def\*#1{\boldsymbol{#1}}
\newcommand{\nospaceparagraph}[1]{\textbf{#1}\hspace{0.5em}}
\title{Moniqua: Modulo Quantized Communication\\ in Decentralized SGD}
\author{Yucheng Lu\thanks{Corresponds to: yl2967@cornell.edu} }
\author{Christopher De Sa\thanks{Corresponds to: cdesa@cs.cornell.edu} }
\affil{Department of Computer Science, Cornell\ University}
\date{}
\begin{document}

\maketitle

\begin{abstract}
Running Stochastic Gradient Descent (SGD) in a decentralized fashion has shown promising results.
In this paper we propose Moniqua, a technique that allows decentralized SGD to use quantized communication.
We prove in theory that Moniqua communicates a provably bounded number of bits per iteration, while converging at the same asymptotic rate as the original algorithm does with full-precision communication.
Moniqua improves upon prior works in that it (1) requires zero additional memory, (2) works with 1-bit quantization, and (3) is applicable to a variety of decentralized algorithms.
We demonstrate empirically that Moniqua converges faster with respect to wall clock time than other quantized decentralized algorithms. 
We also show that Moniqua is robust to very low bit-budgets, allowing  $1$-bit-per-parameter communication without compromising validation accuracy when training ResNet20 and ResNet110 on CIFAR10.
\end{abstract}

\section{Introduction}\label{introduction}
Stochastic gradient descent (SGD), as a widely adopted optimization algorithm for machine learning, has shown promising performance when running in parallel \cite{zhang2004solving,bottou2010large,dean2012large,goyal2017accurate}.
However, the communication bottleneck among workers\footnote{A worker could refer to any computing unit that is capable of computing, communicating and has local memory such as CPU, GPU, or even a single thread, etc.} can substantially slow down the training \cite{alistarh2018brief}. 
State-of-the-art frameworks such as TensorFlow \cite{abadi2016tensorflow}, CNTK \cite{seide2016cntk} and MXNet \cite{chen2015mxnet} are built in a centralized fashion, where workers exchange gradients either via a centralized parameter server \cite{li2014scaling, li2014communication} or the MPI AllReduce operation \cite{gropp1999using}. Such a design, however, puts heavy pressure on the central server and strict requirements on the underlying network. In other words, when the underlying network is poorly constructed, i.e. high latency or low bandwidth, it can easily cause degradation of training performance due to communication congestion in the central server or stragglers (slow workers) in the system.

There are two general approaches to deal with these problems: (1) decentralized training \cite{lian2017can,lian2017asynchronous,tang2018d,hendrikx2018accelerated} and (2) quantized communication\footnote{For brevity, in this paper we generally refer to lossy compression methods including quantization, sparsification, etc, as ``quantization.''} \cite{zhang2017zipml,alistarh2017qsgd,wen2017terngrad}.
In decentralized training, all the workers are connected to form a graph and each worker communicates only with neighbors by averaging model parameters between two adjacent optimization steps. This balances load and is robust to scenarios where workers can only be partially connected or the communication latency is high.
On the other hand, 
quantized communication reduces the amount of data exchanged among workers, leading to faster convergence with respect to wall clock time \cite{alistarh2017qsgd,seide20141,doan2018convergence,zhang2017zipml,wang2018atomo}. This is especially useful when the communication bandwidth is restricted.

At this point, a natural question is: \textit{Can we apply quantized communication to decentralized training, and thus benefit from both of them?} 
Unfortunately, directly combining them together negatively affects the convergence rate \citep{tang2018communication}. This happens because existing quantization techniques are mostly designed for centralized SGD, where workers communicate via exchanging gradients \citep{alistarh2017qsgd,seide20141,wangni2018gradient}. Gradients are robust to quantization since they get smaller in magnitude near local optima and in some sense carry less information, causing quantization error to approach zero \citep{de2018high}.
In contrast, decentralized workers are communicating the model parameters, which do not necessarily get smaller around local optima and thus the quantization error does not approach zero without explicitly increasing precision \cite{tang2018distributed}.
Previous work solved this problem by adding an error tracker to compensate for quantization errors \cite{tang2019texttt} 
or 
adding replicas of neighboring models and focusing on quantizing model-difference which does approach zero \cite{koloskova2019decentralized,tang2018communication}.
However, these methods have limitations in that: 
(1) the extra replicas or error tracking incurs substantial memory overhead that is proportional to size of models and the graph (more details in Section~\ref{related work});
and (2) these methods are either limited to constant step size or biased quantizers~\cite{koloskova2019decentralized,tang2018communication,tang2019texttt}.

To address these problems, in this paper we propose Moniqua, an additional-memory-free method for decentralized training to use quantized communication. Moniqua supports non-constant step size and biased quantizers.
Our contribution can be summarized as follows:
\begin{itemize}[nosep]
    \item We show by example that naively quantizing communication in decentralized training can fail to converge asymptotically. (Section~\ref{preliminary})
    \item We propose \textbf{Moniqua}, a general algorithm that uses \textbf{mo}dular arithmetic for commu\textbf{ni}cation \textbf{qua}ntization in decentralized training. We prove applying Moniqua achieves the same asymptotic convergence rate as the baseline full-precision algorithm (D-PSGD) while supporting extreme low bit-budgets. (Section~\ref{Moniqua intro})
    \item We apply Moniqua to decentralized algorithms with variance reduction and asynchronous communication ($D^2$ and AD-PSGD) and prove Moniqua enjoys the same asymptotic rate as with full-precision communication when applied to these cases. (Section~\ref{Scalability intro})
    \item We empirically evaluate Moniqua and show it outperforms all the related algorithms given an identical quantizer. We also show Moniqua is scalable and works with 1-bit quantization. (Section~\ref{Experiments})
\end{itemize}

\paragraph{Intuition behind Moniqua.}
In decentralized training, workers communicate to average their model parameters \cite{lian2017can}. As the algorithm converges, all the workers will approach the same stationary point as they reach consensus \cite{tang2018communication}. 
As a result, the difference in the same coordinate of models on two workers is becoming small.
Suppose $x$ and $y$ are the $i$th coordinates of models on workers $w_x$ and $w_y$, respectively. If we somehow know in advance that $|x-y|<\theta$, then if $w_y$ needs to obtain $x$, it suffices to fetch $x\bmod 2\theta$ rather than $x$ from $w_x$.
Note that $x\bmod 2\theta$ is generally a smaller number than $x$, which means to obtain the same absolute error, fewer bits are needed compared to fetching $x$ directly.
Formally, this intuition is captured in the following lemma.
\begin{lemma}\label{modulo_lemma}
Define the modulo operation $\bmod$ as the follows. For any $z\in\mathbb{R}$ and $a\in\mathbb{R}^{+}$,
\begin{equation}\label{modulo_definition}
    \{ z\bmod a \} = \{z+na | n\in\mathbb{N}\}\cap[-a/2,a/2)
\end{equation}
then for any $x,y\in\mathbb{R}$, if $|x-y|<\theta$, then
\begin{align*}
    x = (x\bmod 2\theta - y\bmod 2\theta)\bmod 2\theta + y.
\end{align*}
\end{lemma}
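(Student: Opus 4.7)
The plan is to unfold the definition of the modulo operation and reduce the claim to showing that $(x-y) \bmod 2\theta = x-y$ under the hypothesis $|x-y|<\theta$. First, I introduce shorthand $u = x \bmod 2\theta$ and $v = y \bmod 2\theta$. By the definition in \eqref{modulo_definition}, there exist integers $n_x, n_y \in \mathbb{N}$ (actually in $\mathbb{Z}$, but the sign is absorbed into $n$) such that $u = x + 2\theta n_x$ and $v = y + 2\theta n_y$, with both $u,v \in [-\theta,\theta)$.

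Next, I subtract these two identities to get $u - v = (x-y) + 2\theta(n_x - n_y)$. The key observation is that $u - v$ and $x - y$ differ by an integer multiple of $2\theta$, so they lie in the same equivalence class modulo $2\theta$; consequently $(u-v)\bmod 2\theta = (x-y)\bmod 2\theta$, provided the mod operation is well-defined as a function of the equivalence class. To make this rigorous I would verify uniqueness of the representative: since any interval of the form $[-a/2, a/2)$ of length $a$ contains exactly one element of each coset $\{z + na : n \in \mathbb{Z}\}$, the value of $z \bmod a$ is uniquely determined.

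With that in hand, the hypothesis $|x-y|<\theta$ places $x-y$ inside the fundamental domain $[-\theta,\theta) = [-(2\theta)/2,(2\theta)/2)$, so by the definition directly $(x-y)\bmod 2\theta = x-y$. Chaining the two equalities gives $(u-v)\bmod 2\theta = x - y$, and adding $y$ to both sides yields the desired identity
\[
(x \bmod 2\theta - y \bmod 2\theta)\bmod 2\theta + y = x.
\]

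The whole argument is essentially bookkeeping with cosets, so I do not anticipate any real obstacle; the one spot that deserves care is the well-definedness step, i.e.\ justifying that adding an integer multiple of $2\theta$ to the input of $\bmod 2\theta$ leaves the output unchanged. This is the only place where the specific shape of the fundamental domain $[-a/2,a/2)$ actually enters the argument, and I would state and use it as a brief preliminary observation before performing the substitution.
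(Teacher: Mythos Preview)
Your proposal is correct and follows essentially the same route as the paper: write $x$ and $y$ in terms of their residues modulo $2\theta$, use that adding an integer multiple of $2\theta$ does not change the residue to reduce to $(x-y)\bmod 2\theta$, and then invoke $|x-y|<\theta$ to drop the outer $\bmod$. The paper's proof is terser and does not spell out the well-definedness step you flag, but the underlying argument is the same.
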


\section{Related Work}\label{related work}

\begin{table}[t]
\caption{Comparison among Moniqua and baseline algorithms, where workers form a graph with $n$ vertices and $m$ edges. $d$ refers to the model dimension. Detailed discussion can be found in Section~\ref{related work}. The additional memory refers to the space complexity required additional to the baseline full-precision communication decentralized training algorithm (D-PSGD).}
\label{algorithm comparison table}
\begin{center}
\begin{tabular}{ccccccc}
\toprule
& DCD-PSGD & ECD-PSGD & ChocoSGD & DeepSqueeze & \textbf{Moniqua}\\
\midrule
Supports biased quantizers & No  & No & Yes & Yes & \textbf{Yes} \\
\midrule
Supports 1-bit quantization & No  & No & Yes & No & \textbf{Yes} \\
\midrule
Works beyond D-PSGD & No  & No & No & No & \textbf{Yes} \\
\midrule
Non-constant Step Size & No  & No & No & No & \textbf{Yes} \\
\midrule
Additional Memory & $\Theta(md)$  & $\Theta(md)$ & $\Theta(md)$ & $\Theta(nd)$ & \textbf{0} \\
\bottomrule
\end{tabular}
\end{center}
\end{table}

\nospaceparagraph{Decentralized Stochastic Gradient Descent (SGD).}
Decentralized algorithms \cite{mokhtari2015decentralized,sirb2016consensus,lan2017communication,wu2018decentralized} have been widely studied with consideration of communication efficiency, privacy and scalability. In the domain of large-scale machine learning, D-PSGD was the first Decentralized SGD algorithm that was proven to enjoy the same asymptotic convergence rate $O(1/\sqrt{Kn})$ (where $K$ is the number of total iterations and $n$ is the number of workers) as centralized algorithms~\cite{lian2017can}. 
After D-PSGD came $D^2$, which improves D-PSGD and is applicable to the case where workers are not sampling from identical data sources~\cite{tang2018d}.
Another extension was AD-PSGD, which lets workers communicate \emph{asynchronously} and has a convergence rate of $O(1/\sqrt{K})$~\cite{lian2017asynchronous}. 
Other relevant work includes: \citet{he2018cola}, which investigates decentralized learning on linear models; \citet{nazari2019dadam}, which introduces decentralized algorithms with online learning; \citet{zhang2019asynchronous}, which
analyzes the case when workers cannot mutually communicate; and \citet{assran2018stochastic}, which investigates Decentralized SGD specifically for deep learning.

\paragraph{Quantized Communication in Centralized SGD.}
Prior research on quantized communication is often focused on centralized algorithms, such as randomized quantization~\cite{doan2018convergence,suresh2017distributed,zhang2017zipml} and randomized sparsification~\cite{wangni2018gradient,stich2018sparsified,wang2018atomo,alistarh2018convergence}. Many examples of prior work focus on studying quantization in the communication of deep learning tasks specifically~\cite{han2015deep,wen2017terngrad,grubic2018synchronous}. 
\citet{alistarh2017qsgd} proposes QSGD, which uses an encoding-efficient scheme, and discusses its communication complexity. Another method, 1bitSGD, quantizes exchanged gradients with one bit per parameter and shows great empirical success on speech recognition~\cite{seide20141}. Other work discusses the convergence rate under sparsified or quantized communication~\cite{jiang2018linear,stich2018sparsified}. \citet{acharya2019distributed} theoretically analyzes sublinear communication for distributed training.

\paragraph{Quantized Communication in Decentralized SGD.}
Quantized communication for decentralized algorithms is a rising topic in the optimization community.
Previous work has proposed decentralized algorithms with quantized communication for strongly convex objectives~\cite{DBLP:journals/corr/abs-1806-11536}. 
Following that, 
\citet{tang2018communication} proposes DCD/ECD-PSGD, which quantizes communication via estimating model difference.
Furthermore,
\citet{tang2019texttt} proposes DeepSqueeze, which applies an error-compensation method \cite{wu2018error} to decentralized setting.
\citet{koloskova2019decentralized} proposed ChocoSGD, a method that lets workers estimate remote models with a local estimator, which supports arbitrary quantization by tuning the communication matrix.
\paragraph{How Moniqua improves on prior works.}
We summarize the comparison among Moniqua and other baseline algorithms in Table~\ref{algorithm comparison table}.
Specifically, Moniqua works with a wider range of quantizers (those with biased estimation or extremely restricted precision, e.g. 1bit per parameter) with theoretical guarantees.
It enjoys several statistical benefits such as supporting non-constant step sizes and can be extended to different scenarios that are beyond synchronous setting (D-PSGD).
Most importantly, it prevents the algorithms from trading memory with bandwidth, requiring zero additional memory in the implementation.

\section{Setting and Notation}\label{preliminary}
In this section, we introduce our notation and the general assumptions we will make about the quantizers for our results to hold.
Then we describe D-PSGD~\citep{lian2017can}, the basic algorithm for Decentralized SGD, and we show how naive quantization can fail in decentralized training.

\paragraph{Quantizers.}\label{quantizer_intro}
Throughout this paper, we assume that we use a quantizer $\mathcal{Q}_\delta$ that has 
bounded error
\begin{equation}\label{property quantizer}
\textstyle
\left\|\mathcal{Q}_\delta(\*x)-\*x\right\|_\infty \leq \delta\hspace{1em} \text{when}\hspace{1em}  \*x\in\left[-\frac{1}{2},\frac{1}{2}\right)^d
\end{equation}
where $\delta$ is some constant.
Note that in this assumption, we do not assume any bound for $\*x$ outside $\left[-\frac{1}{2},\frac{1}{2}\right)^d$: as will be shown later, a bound in this region is sufficient for our theory.
This assumption holds for both linear \citep{gupta2015deep,de2017understanding} and non-linear \citep{stich2018local,alistarh2017qsgd} quantizers.
In general, a smaller $\delta$ denotes more fine-grained quantization requiring more bits.
For example, a biased linear quantizer can achieve (\ref{property quantizer}) by rounding a scalar $x$ to the nearest number in the set $\{2 \delta n \mid n \in \mathbb{Z} \}$; this will require about $\delta^{-1}$ quantization points to cover the interval $[-1/2,1/2)$, so such a linear quantizer can satisfy (\ref{property quantizer}) using only $\left\lceil\log_2\left(\frac{1}{2\delta}+1\right)\right\rceil$ bits \citep{li2017training,gupta2015deep}.

\paragraph{Decentralized parallel stochastic gradient descent (D-PSGD).}
D-PSGD \citep{lian2017can} is the first and most basic Decentralized SGD algorithm.
In D-PSGD, $n$ workers are connected to form a graph. Each worker $i$ stores a copy of model $\*x\in\mathbb{R}^d$ and a local dataset $\mathcal{D}_i$ and collaborates to optimize
\begin{equation}\label{Sync_objective}
\textstyle
\min_{\*x\in\mathbb{R}^d} f(\*x) = \frac{1}{n}\sum_{i=1}^{n}\underbrace{\mathbb{E}_{\xi\sim\mathcal{D}_i}f_i(\*x;\xi)}_{f_i(\*x)}.
\end{equation}
where $\xi$ is a data sample from $\mathcal{D}_i$. In each iteration of D-PSGD, worker $i$ computes a local gradient sample using $\mathcal{D}_i$.
Then it \emph{averages} its model parameters with its neighbors according to a symmetric and doubly stochastic matrix $\*W$, where $\*W_{ij}$ denotes the ratio worker $j$ averages from worker $i$. 
Formally: Let $\*x_{k,i}$ and $\*{\tilde{g}}_{k,i}$ denote local model and sampled gradient on worker $i$ at $k$-th iteration, respectively. Let $\alpha_k$ denote the step size. The update rule of D-PSGD can be expressed as:
\begin{equation*}\label{DT_basic_update_rule}
\begin{aligned}
\*x_{k+1,i} = \sum\nolimits_{j=1}^{n}\*x_{k,j}\*W_{ji} - \alpha_k\*{\tilde{g}}_{k,i} = \*x_{k,i} \underbrace{-\sum\nolimits_{j=1}^{n}(\*x_{k,i}-\*x_{k,j})\*W_{ji}}_{\text{communicate to reduce difference}} \underbrace{- \alpha_k\*{\tilde{g}}_{k,i}}_{\text{gradient step}}
\end{aligned}
\end{equation*}
From (\ref{DT_basic_update_rule}) we can see the update of a single local model contains two parts: communication to reduce model difference and a gradient step.
\citet{lian2017can} shows that all local models in D-PSGD reach the same stationary point.

\paragraph{Failure with naive quantization.}
Here, we illustrate why naively quantizing communication in decentralized training \textemdash directly quantizing the exchanged data\textemdash can fail to converge asymptotically even on a simple problem. This naive approach with quantizer $\mathcal{Q}_\delta$ can be represented by
\begin{equation}\label{naivequantizationupdaterule}
\*x_{k+1,i} = \*x_{k,i}\*W_{ii} + \sum\nolimits_{j\neq i}\mathcal{Q}_\delta(\*x_{k,j})\*W_{ji} - \alpha_k\*{\tilde{g}}_{k,i}
\end{equation}
Based on Equation~\ref{naivequantizationupdaterule}, we obtain the following theorem.
\begin{theorem}\label{quadratic}
For some constant $\delta$, suppose that we use an unbiased linear quantizer $\mathcal{Q}$ with representable points $\{ \delta n \mid n \in \mathbb{Z} \}$ to learn on the quadratic objective function $f(\*x) = (\*x - \delta\*{1}/2)^\top (\*x - \delta\*{1}/2) / 2$ with the direct quantization approach (\ref{naivequantizationupdaterule}).
Let $\phi$ denote the smallest value of a non-zero entry in $W$.
Regardless of what step size we adopt, it will always hold for all iterations $k$ and local model indices $i$ that $\mathbb{E}\left\|\nabla f(\*x_{k,i})\right\|^2 \geq \frac{\phi^2\delta^2}{8(1+\phi^2)}$.
That is, the local iterates will fail to asymptotically converge to a region of small gradient magnitude in expectation.
\end{theorem}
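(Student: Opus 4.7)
The plan is to reduce to a single coordinate (both the quadratic and the update rule decouple across coordinates), setting $y_{k,i} := (\*x_{k,i})_\ell - \delta/2$, so that $\tilde g_{k,i} = y_{k,i}$ and the $\ell$-th gradient coordinate is exactly $y_{k,i}$. Writing $e_{k,j} := \mathcal{Q}(x_{k,j}) - x_{k,j}$ for the quantization error and using $\sum_j W_{ji} = 1$, the naive update (\ref{naivequantizationupdaterule}) becomes
\[
y_{k+1,i} = \sum_{j} W_{ji}\, y_{k,j} - \alpha_k \, y_{k,i} + \sum_{j \neq i} W_{ji}\, e_{k,j}.
\]
Since $\mathcal{Q}$ is the unbiased linear quantizer onto $\{\delta n : n \in \mathbb{Z}\}$, conditional on $\mathcal{F}_k$ the $\{e_{k,j}\}$ are mean zero and pairwise independent with variance $\sigma^2(x_{k,j}) := \delta^2 t(1-t)$ where $t := \{x_{k,j}/\delta\}$ is the fractional part.

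The structural lemma I would first establish is the identity $\sigma^2(x) = \delta^2/4 - d(y)^2$, where $d(y)$ is the signed distance from $y$ to the nearest integer multiple of $\delta$; since $|d(y)| \leq |y|$ always, this yields the crucial ``noise-floor'' inequality
\[
y^2 + \sigma^2(x) \geq \tfrac{\delta^2}{4}.
\]
This captures the no-free-lunch phenomenon the theorem exploits: the optimum $\delta/2$ lies exactly midway between two representable points, so one cannot simultaneously have a small gradient ($y \approx 0$) \emph{and} small quantization noise ($\sigma^2(x) \approx 0$).

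Next, applying the law of total variance to $y_{k+1,i}$ and discarding the non-negative conditional-mean-squared term gives $\mathbb{E}[y_{k+1,i}^2] \geq \sum_{j \neq i} W_{ji}^2 \, \mathbb{E}[\sigma^2(x_{k,j})]$. Retaining only one neighbour $j^\star$ with $W_{j^\star i} \geq \phi$ (which exists because $W$ is doubly stochastic with non-trivial off-diagonal entries) and combining with the noise-floor inequality yields the coupled bound
\[
\mathbb{E}[y_{k+1,i}^2] + \phi^2 \, \mathbb{E}[y_{k,j^\star}^2] \geq \tfrac{\phi^2 \delta^2}{4}.
\]
Summing over coordinates to replace $y^2$ with $\|\nabla f\|^2$, and then a worst-case balancing between the two terms --- the smallest value of $\max(u, v)$ subject to $u + \phi^2 v \geq \phi^2 \delta^2/4$ with $u, v \geq 0$ is $\phi^2 \delta^2 / (4(1+\phi^2))$, with an additional factor of $1/2$ coming from the slack $\max(a,b) \geq (a+b)/2$ needed to force the bound at both iterates --- gives the claimed floor $\phi^2 \delta^2 / (8(1+\phi^2))$.

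The main obstacle is precisely this coupled, self-referential structure: the inequality only forces at least one of $\mathbb{E}[y_{k+1,i}^2]$ and $\mathbb{E}[y_{k,j^\star}^2]$ to be large, whereas the statement asks the bound to hold at every $(k,i)$. Upgrading ``some $(k,i)$'' to ``every $(k,i)$'' requires propagating the inequality both along time and along the communication graph, using the symmetry $W_{ij}=W_{ji}$ to swap the roles of $i$ and $j^\star$, and showing that no hypothetical violation at a single iterate is consistent with the recursion applied at its neighbours. The argument is step-size agnostic because $\alpha_k$ enters only through the dropped conditional-mean-squared term, which directly justifies the theorem's ``regardless of what step size we adopt'' clause.
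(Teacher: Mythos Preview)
Your noise-floor inequality $y^2+\sigma^2(x)\ge\delta^2/4$ is exactly the mechanism the paper exploits, and the coordinate reduction and law-of-total-variance lower bound are both correct. But the gap you flag in your last paragraph is real, and the proposed fix---propagating the coupled inequality along time and along the graph via $W_{ij}=W_{ji}$---does not close it: the relation $\mathbb{E}[y_{k+1,i}^2]+\phi^2\,\mathbb{E}[y_{k,j^\star}^2]\ge\phi^2\delta^2/4$ genuinely permits $\mathbb{E}[y_{k,i}^2]$ to be arbitrarily small at an isolated $(k,i)$ while a neighbour compensates, so no symmetry or graph-chasing argument upgrades ``some $(k,i)$'' to ``every $(k,i)$''.

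The paper avoids the coupling altogether, and the divergence from your argument is precisely at the step where you discard the conditional-mean-squared term and retain only the injected variance $\sum_{j\ne i}W_{ji}^2\,\sigma^2(x_{k,j})$. The paper instead groups the update as $y_{k+1,i}=(1-\alpha_k)y_{k,i}+\Omega_{k,i}$ with $\Omega_{k,i}=\sum_{j\ne i}W_{ji}\bigl(\mathcal{Q}(x_{k,j})-x_{k,i}\bigr)$, assumes for contradiction that $\mathbb{E}[y_{k,i}^2]$ is below the threshold, and lower-bounds $\mathbb{E}[\Omega_{k,i}^2]$ term by term via
\[
\mathbb{E}\bigl|\mathcal{Q}(x_{k,j})-x_{k,i}\bigr|^2
\;\ge\;\tfrac12\,\mathbb{E}\bigl|\mathcal{Q}(x_{k,j})-x^*\bigr|^2-\mathbb{E}\bigl|y_{k,i}\bigr|^2
\;\ge\;\tfrac{\delta^2}{8}-\mathbb{E}[y_{k,i}^2].
\]
The crucial observation is that $|\mathcal{Q}(\cdot)-x^*|\ge\delta/2$ holds \emph{deterministically} for every input, since every lattice point $\delta n$ is at least $\delta/2$ from the midpoint $x^*=\delta/2$. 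Hence this lower bound involves only $y_{k,i}$, never $y_{k,j}$, and the coupling disappears: under the contradiction hypothesis one gets $\mathbb{E}[\Omega_{k,i}^2]\ge\phi^2\delta^2/\bigl(8(1+\phi^2)\bigr)$ and thus $\mathbb{E}[y_{k+1,i}^2]\ge\phi^2\delta^2/\bigl(8(1+\phi^2)\bigr)$, the desired contradiction. In short, the missing idea is to keep the full second moment $\mathbb{E}|\mathcal{Q}(x_{k,j})-x_{k,i}|^2$ rather than only its variance, and to exploit the deterministic lattice-to-midpoint distance to decouple from worker~$j$.
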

Theorem~\ref{quadratic} shows that naively quantizing communication in decentralized SGD, even with an unbiased quantizer, any local model can fail to converge on a simple quadratic objective.
This is not satisfying, since, it implies we would need more advanced quantizers which are likely to require more system resources such as memory.
In the following section, we propose a technique, Moniqua, that solves this problem.

\section{Moniqua}\label{Moniqua intro}
In Section~\ref{introduction}, we described the basic idea behind Moniqua: to use modular arithmetic to decrease the magnitude of the numbers we are quantizing. We now describe how Moniqua implements this intuition with a given quantizer $\mathcal{Q}_\delta$.
Consider the two-scalar example from Section~\ref{introduction}. Suppose we know $y$ and $|x-y|<\theta$ and need to fetch $x$ from a remote host via a quantizer $\mathcal{Q}_\delta$ to recover $x$. We've shown in Section~\ref{preliminary} that fetching and using $\mathcal{Q}_\delta(x)$ leads to divergence. Instead, we define a parameter $B_\theta = (2 \theta)/(1-2\delta)$ and then use the modulo operation and fetch $\mathcal{Q}_\delta\left((x /  B_\theta) \bmod 1\right)$ from the remote host, from which we can approximately recover $x$ as
\begin{equation}
\hat{x}=\left( B_\theta\mathcal{Q}_\delta\left((x/ B_\theta)\bmod 1\right)-y\right)\bmod B_\theta+y.
\label{eqnMoniquaFetch}
\end{equation}
Note that inside the quantizer we rescale $x$ to $x/ B_\theta$, which is required for (\ref{property quantizer}) to apply.
This approach has quantization error bounded proportional to the original bound $\theta$, as shown in the following lemma.
\begin{lemma}\label{modulo_numerical_lemma}
    For any scalars $x,y\in\mathbb{R}$, if $|x-y|<\theta$ and if $\delta<\frac{1}{2}$, then if we set $B_\theta = (2 \theta)/(1-2\delta)$ and $\hat x$ as in (\ref{eqnMoniquaFetch}),
    \[
        \left| \hat x - x \right|
        \leq \delta B_\theta = \theta \cdot (2 \delta)/(1-2\delta).
    \]
\end{lemma}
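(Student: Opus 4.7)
The plan is to track the quantity $\hat{x} - x$ through the expression (\ref{eqnMoniquaFetch}) and show that the quantization error inside the modulo survives unchanged because the choice of $B_\theta$ keeps the relevant quantity inside the ``safe'' half-period $[-B_\theta/2, B_\theta/2)$ where the modulo acts as the identity.

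First I would set $u = (x/B_\theta) \bmod 1$ and $q = \mathcal{Q}_\delta(u)$. The modulo range forces $u \in [-1/2, 1/2)$, which is precisely the interval where the quantizer assumption (\ref{property quantizer}) applies, so $|q - u| \leq \delta$. Since $u$ differs from $x/B_\theta$ by an integer, we have $B_\theta u = x + n_0 B_\theta$ for some $n_0 \in \mathbb{Z}$, and therefore $B_\theta q = x + n_0 B_\theta + r$ where $r := B_\theta(q - u)$ satisfies $|r| \leq \delta B_\theta$.

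Next I would substitute this into (\ref{eqnMoniquaFetch}). The argument of the outer modulo becomes
\[
B_\theta q - y = (x - y) + r + n_0 B_\theta,
\]
and since modulo is invariant under adding integer multiples of $B_\theta$, this reduces to $((x-y) + r) \bmod B_\theta$. Now comes the crucial calibration step: using $|x - y| < \theta$ and $|r| \leq \delta B_\theta$,
\[
|(x-y) + r| < \theta + \delta B_\theta = \theta + \delta \cdot \frac{2\theta}{1 - 2\delta} = \frac{\theta}{1 - 2\delta} = \frac{B_\theta}{2}.
\]
This is exactly the point of the definition $B_\theta = (2\theta)/(1 - 2\delta)$: it is the smallest constant for which the perturbed difference $(x-y) + r$ is guaranteed to lie in $[-B_\theta/2, B_\theta/2)$, where $\bmod\, B_\theta$ is the identity.

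Finally, I would conclude $(B_\theta q - y) \bmod B_\theta = (x-y) + r$, whence $\hat x = y + (x - y) + r = x + r$ and $|\hat{x} - x| = |r| \leq \delta B_\theta$, as claimed. The main (only, really) subtle point is the algebraic check that the choice of $B_\theta$ precisely balances the two error sources (the consensus gap $\theta$ and the post-scaling quantization error $\delta B_\theta$) so that their sum does not overflow the half-period; everything else is bookkeeping with the mod definition in (\ref{modulo_definition}).
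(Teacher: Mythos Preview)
Your proof is correct and hinges on the same calibration identity $\theta + \delta B_\theta = B_\theta/2$ that the paper uses. The organization differs slightly: the paper starts from the quantity $B_\theta\,\mathcal{Q}_\delta((x/B_\theta)\bmod 1) - B_\theta((x/B_\theta)\bmod 1) + x$ (your $x+r$), adds and subtracts $y$, and then invokes Lemma~\ref{modulo_lemma} twice to massage it into the form $\hat x$, whereas you go in the other direction and track the error $r$ through $\hat x$ directly using only the two basic modulo facts (shift-invariance by multiples of $B_\theta$ and identity on $[-B_\theta/2,B_\theta/2)$). Your route is a bit cleaner since it bypasses Lemma~\ref{modulo_lemma} entirely; the paper's route makes the dependence on that lemma explicit, which ties the argument back to the intuition of Section~\ref{introduction}. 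Either way the substantive content is the single inequality $|(x-y)+r| < \theta + \delta B_\theta = B_\theta/2$, and you have that right.
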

Importantly, since the quantization error is decreasing with $\theta$, if we are able to prove a decentralized algorithm approaches consensus and use this proof to give a bound of the form $| x - y | < \theta$, this bound will give us a compression procedure (\ref{eqnMoniquaFetch}) with smaller error as our consensus bound improves.
We formalize this approach as Moniqua (Algorithm~\ref{Moniqua algo}). 
(Note that all the division and mod operations in Algorithm~\ref{Moniqua algo} act element-wise.)
\begin{algorithm}[t]
	\caption{Pseudo-code of Moniqua on worker $i$}\label{Moniqua algo}
	\begin{algorithmic}[1]
		\Require initial point $\*x_{0,i} = \*x_0$, step size $\{\alpha_k\}_{k\geq 0}$, the a priori bound $\{\theta_k\}_{k\geq 0}$, communication matrix $\*W$, number of iterations $K$, quantizer $\mathcal{Q}_\delta$, neighbor list $\mathcal{N}_i$
		
		\For{$k=0,1,2,\cdots,K-1$}
			\State Compute a local stochastic gradient $\*{\tilde{g}}_{k,i}$ with data sample $\xi_{k,i}$ and current weight $\*x_{k,i}$
			\State Send modulo-ed model to neighbors: 
			\[
			\*q_{k,i}= \mathcal{Q}_\delta\left( \left( \*x_{k,i} / B_{\theta_k} \right) \bmod 1\right)
			\]
			\State Compute local biased term $\*{\hat{x}}_{k,i}$ as:
			\[
			\*{\hat{x}}_{k,i}=\*q_{k,i}B_{\theta_k}- \*x_{k,i}\bmod { B_{\theta_k}}+\*x_{k,i}
			\]
			\State Recover model received from worker $j$ as:
			\[
			\*{\hat{x}}_{k,j}=\left(\*q_{k,j}B_{\theta_k}-\*x_{k,i}\right)\bmod B_{\theta_k}+\*x_{k,i}
			\]
			\State Average with neighboring workers: 
			\[
			\*x_{k+\frac{1}{2},i} \leftarrow \*x_{k,i} + \sum_{j\in\mathcal{N}_i}(\*{\hat{x}}_{k,j}-\*{\hat{x}}_{k,i})\*W_{ji}
			\]
			\State Update the local weight with local gradient: 
			\[
			\*x_{k+1,i} \leftarrow \*x_{k+\frac{1}{2},i} - \alpha_k\*{\tilde{g}}_{k,i}
			\]
		\EndFor
		\State \textbf{return} Averaged model $\*{\overline{X}}_K=\frac{1}{n}\sum\nolimits_{i=1}^{n}\*x_{K,i}$
	\end{algorithmic}
\end{algorithm}

Note that in line 4 and 6, we compute and cancel out a local biased term, this is to cancel out the extra noise which may be brought to the global model. As we will show in the supplementary material, cancelling out this local biased term reduces extra noise to the algorithm.
And in Algorithm~\ref{Moniqua algo}, we consider the general case where $\theta$ can be a iteration dependent bound. As will be shown later, a constant $\theta$ also guarantees convergence.

We now proceed to analyze the convergence rate of Algorithm~\ref{Moniqua algo}.
We use the following common assumptions for analyzing decentralized optimization algorithms \citep{lian2017can,tang2018communication,koloskova2019decentralized}.
{\begingroup
\setlength{\abovedisplayskip}{3pt}
\setlength{\belowdisplayskip}{3pt}
\begin{enumerate}[label=(A\arabic*),font=\bfseries,itemsep=2pt,topsep=0pt]
	\item \textbf{Lipschitzian gradient.} All the functions $f_i$ have $L$-Lipschitzian gradients.\label{Assumption1}
	\begin{align*}
	    \|\nabla f_i(\*x) - \nabla f_i(\*y)\| \leq L\|\*x-\*y\|, \forall \*x, \*y\in\mathbb{R}^d
	\end{align*}
	\item \textbf{Spectral gap.} The communication matrix $\*W$ is a symmetric doubly stochastic matrix and
	\[\max\{|\lambda_2(\*W)|, |\lambda_n(\*W)|\}=\rho<1,\] 
	where $\lambda_i(\*W)$ denotes the the $i$th largest eigenvalue of $\*W$.\label{Assumption2}
	\item \textbf{Bounded variance.} There exist non-negative constants $\sigma$ and $\varsigma \in \mathbb{R}$ such that
	\begin{align*}
	\mathbb{E}_{\xi_i\sim\mathcal{D}_i}\left\|\nabla\tilde{f}_i(\*x;\xi_i) - \nabla f_i(\*x)\right\|^2 \leq & \sigma^2\\\mathbb{E}_{i\sim\{1,\cdots,n\}}\left\|\nabla f_i(\*x) - \nabla f(\*x)\right\|^2 \leq &\varsigma^2
	\end{align*}
	where $\nabla\tilde{f}_i(\*x;\xi_i)$ denotes the gradient sample on worker $i$ computed via data sample $\xi_i$.\label{Assumption3}
	\item \textbf{Initialization.} All the local models are initialized with the same weight: $\*x_{0,i}=\*x_0$ for all $i$, and without loss of generality $\*x_0=\*0$. \label{Assumption4}
	\item \textbf{Bounded gradient magnitude.} For some constant $G_\infty$, the norm of a sampled gradient is bounded by $\left\|\*{\tilde{g}}_{k,i}\right\|_\infty \leq G_\infty$, for all $i$ and $k$. \label{Assumption5}
\end{enumerate}
\endgroup}

Lemma~\ref{modulo_numerical_lemma} states that the error bound from quantization is proportional to $\theta$. In other words, a tight estimation or choice on the $\theta$ will lead to smaller quantization error in the algorithm.
We present these parameter choices in Theorem~\ref{Moniqua convergence rate}, along with the resulting convergence rate for Moniqua.
\begin{theorem}\label{Moniqua convergence rate}
Consider adopting a non-increasing step size scheme $\{\alpha_t\}_{t\geq 0}$ such that there exists constant $C_\alpha>0$ and $\eta$ ($0<\eta\leq 1$) that for any $k,t\geq 0$, $\frac{\alpha_k}{\alpha_{k+t}}\leq C_\alpha\eta^t$, set $\theta_k=\frac{2\alpha_kG_\infty C_\alpha\log(16n)}{1-\eta\rho}$ and $\delta=\frac{1-\eta\rho}{8C_\alpha^2\eta\log(16n)+2(1-\eta\rho)}$, then Algorithm~\ref{Moniqua algo} converges at the following rate:
{
\begin{align*}
\sum_{k=0}^{K-1}\alpha_k \mathbb{E}\left\|\nabla f(\*{\overline{X}}_k)\right\|^2 
\leq 4(\mathbb{E}f(\*0) - \mathbb{E}f^*) + \frac{2\sigma^2L}{n}\sum_{k=0}^{K-1}\alpha_k^2 + \frac{8(\sigma^2+3\varsigma^2)L^2}{(1-\rho)^2}\sum_{k=0}^{K-1}\alpha_k^3 + \frac{8G_\infty^2dL^2}{(1-\rho)^2C_\alpha^2}\sum_{k=0}^{K-1}\alpha_k^3
\end{align*}}
where $f^*=\inf_{\*x}f(\*x)$.
\end{theorem}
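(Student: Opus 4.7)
The plan is to reduce Moniqua to a quantization-perturbed instance of D-PSGD and then adapt the standard decentralized SGD Lyapunov analysis. The argument splits into three stages: (i) rewrite the Moniqua update in matrix form as $X_{k+1} = X_k W + \Delta_k - \alpha_k \tilde G_k$, where $\Delta_k$ collects the aggregate quantization perturbation on each worker; (ii) establish by induction on $k$ the consensus bound $\max_{i,j} \|\*x_{k,i} - \*x_{k,j}\|_\infty \leq \theta_k$ at every iteration, which is exactly what is needed to invoke Lemma~\ref{modulo_numerical_lemma} and to bound each coordinate of $\Delta_k$ by $\delta B_{\theta_k} = 2\delta\theta_k/(1-2\delta)$; (iii) plug this bounded perturbation into the standard analysis of \citet{lian2017can}. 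The local-biased-term cancellation in lines 4--5 of Algorithm~\ref{Moniqua algo} is precisely what makes the perturbations induced by incoming neighbors symmetric, so that the averaged iterate still evolves as $\*{\bar x}_{k+1} = \*{\bar x}_k - \alpha_k \*{\bar g}_k$.

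For step (ii), I would unroll the recursion on the per-worker disagreement from the mean. Assumption~\ref{Assumption2} gives $\ell_2$ contraction at rate $\rho$, Assumption~\ref{Assumption5} bounds each incoming gradient by $\alpha_t G_\infty$ in $\ell_\infty$, and the quantization feedback contributes at most $2\delta\theta_t/(1-2\delta)$ per coordinate. Converting $\ell_2$ contraction to $\ell_\infty$ across the $n$ worker coordinates costs a $\log(16n)$ factor, and the step-size decay assumption $\alpha_t/\alpha_k \leq C_\alpha \eta^{k-t}$ collapses the telescoping sum into a geometric series of ratio $\eta\rho$. The result is an inequality of the form
\[
\max_{i,j}\|\*x_{k,i}-\*x_{k,j}\|_\infty \;\leq\; \frac{\alpha_k C_\alpha G_\infty \log(16n)}{1-\eta\rho} \;+\; \frac{2\delta}{1-2\delta}\cdot\frac{4C_\alpha^2 \eta \log(16n)}{1-\eta\rho}\cdot \theta_k.
\]
The stated choice $\delta = (1-\eta\rho)/(8 C_\alpha^2\eta\log(16n)+2(1-\eta\rho))$ makes the second term at most $\tfrac{1}{2}\theta_k$, and the prescribed $\theta_k=2\alpha_k G_\infty C_\alpha\log(16n)/(1-\eta\rho)$ absorbs the first term, so the induction closes.

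For step (iii), once $\|\Delta_k\|_\infty$ is controlled coordinate-wise, summing over $d$ dimensions yields $\|\Delta_k\|_F^2 = O(d\,\alpha_k^2 G_\infty^2/C_\alpha^2)$. Apply $L$-smoothness (Assumption~\ref{Assumption1}) to $f(\*{\bar x}_{k+1})$, split $\*{\bar g}_k$ into its mean and its stochastic part using Assumption~\ref{Assumption3}, and follow the Lian--Tang descent-lemma template. The initialization gap produces $4(\mathbb{E}f(\*0)-\mathbb{E}f^*)$, the per-worker variance produces $2\sigma^2 L/n \cdot \sum\alpha_k^2$, and bounding the expected consensus error $\sum_i\mathbb{E}\|\*x_{k,i}-\*{\bar x}_k\|^2$ via the spectral gap recursion gives the $8(\sigma^2+3\varsigma^2)L^2/(1-\rho)^2 \cdot \sum\alpha_k^3$ term. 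The extra $\|\Delta_k\|_F^2$ contribution propagates through the same recursion and yields the final $8G_\infty^2 d L^2/((1-\rho)^2 C_\alpha^2) \cdot \sum\alpha_k^3$ term.

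The main obstacle is closing the induction in step (ii): the consensus bound appears on both sides of the inequality because the quantization perturbation is itself proportional to $\theta_k$, so the argument is not a one-shot estimate but a self-referential fixed-point condition. The delicate calibration of $\delta$ against $C_\alpha$, $\eta$, $\rho$, and $\log(16n)$ is what keeps this self-feedback strictly contractive; most of the technical bookkeeping of the proof will live in verifying that the constants propagate correctly through the unrolling and the $\ell_2$-to-$\ell_\infty$ conversion. Everything downstream is a straightforward perturbation of the standard D-PSGD derivation.
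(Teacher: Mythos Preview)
Your proposal is correct and follows essentially the same route as the paper. The three stages you outline---matrix reformulation with perturbation $\Delta_k = (\hat X_k - X_k)(W-I)$, the self-referential induction on $\|X_k(e_i-e_j)\|_\infty < \theta_k$ closed by the calibrated $\delta$, and the perturbed D-PSGD descent argument---exactly mirror the paper's Lemma~\ref{dss_bound} and Lemma~\ref{dss ready lemma}; you have also correctly identified the inductive fixed-point as the crux and the mechanism by which the $\log(16n)$ factor enters (the paper obtains it via $\|W^t(e_i-e_j)\|_1 \le \min\{2\sqrt{n}\rho^t,\,2\}$ and a truncation of the geometric sum, which is precisely your ``$\ell_2$-to-$\ell_\infty$ conversion'' described in different language).
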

Theorem~\ref{Moniqua convergence rate} shows that the priori bound $\theta_k$ is proportional to the step size and increases at the logarithmic speed when system size $n$ increases. The two-constant assumption on the step size prevents it from decreasing too fast. As a rapidly decreasing step size would prevent us from obtaining such a priori bound in theory.
This assumption generally holds for most of the step size schemes.
Just as baseline algorithms, by setting step size to a constant, we can obtain a concrete convergence bound as shown in the following corollary.
\begin{corollary}\label{Moniqua convergence rate corollary}
If we adopt a step size scheme where $\alpha_k=\frac{1}{\varsigma^{2/3}K^{1/3}+\sigma\sqrt{K / n}+2L}$ in Theorem~\ref{Moniqua convergence rate},
then the output of Algorithm~\ref{Moniqua algo} converges at the asymptotic rate
{
\begin{align*}
\frac{1}{K}\sum_{k=0}^{K-1}\mathbb{E}\left\|\nabla f(\*{\overline{X}}_k)\right\|^2 \lesssim & \frac{1}{K} + \frac{\sigma}{\sqrt{nK}} + \frac{\varsigma^{\frac{2}{3}}}{K^{\frac{2}{3}}} + \frac{(\sigma^2+G_\infty^2d)n}{\sigma^2K+n}.
\end{align*}}
\end{corollary}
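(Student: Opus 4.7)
The plan is to treat the corollary as a direct substitution exercise: plug the constant step size into the bound from Theorem~\ref{Moniqua convergence rate} and bound each of the four resulting terms separately against the right-hand side. First I would verify the hypotheses of Theorem~\ref{Moniqua convergence rate} hold for a constant step size: the ratio $\alpha_k/\alpha_{k+t}=1$ is dominated by $C_\alpha\eta^t$ with $C_\alpha=1$ and $\eta=1$, and $\eta\rho=\rho<1$ by Assumption~\ref{Assumption2}, so the required inequalities for $\theta_k$ and $\delta$ are well-defined (in particular they depend only on $n,\rho$, not on $K$).

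Next I would specialize the conclusion of Theorem~\ref{Moniqua convergence rate}. With $\alpha_k\equiv\alpha$, the geometric sums collapse to $\sum_{k=0}^{K-1}\alpha_k=K\alpha$, $\sum_{k=0}^{K-1}\alpha_k^2=K\alpha^2$, $\sum_{k=0}^{K-1}\alpha_k^3=K\alpha^3$, so dividing both sides by $K\alpha$ yields
\[
\frac{1}{K}\sum_{k=0}^{K-1}\mathbb{E}\left\|\nabla f(\*{\overline X}_k)\right\|^2
\leq \frac{4\Delta}{K\alpha} + \frac{2\sigma^2L\,\alpha}{n} + \frac{8(\sigma^2+3\varsigma^2)L^2\alpha^2}{(1-\rho)^2} + \frac{8G_\infty^2 dL^2\alpha^2}{(1-\rho)^2},
\]
where $\Delta=\mathbb{E}f(\*0)-\mathbb{E}f^*$. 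I would then use the three one-sided bounds implied by $1/\alpha=\varsigma^{2/3}K^{1/3}+\sigma\sqrt{K/n}+2L$, namely $\alpha\leq\varsigma^{-2/3}K^{-1/3}$, $\alpha\leq\sqrt{n}/(\sigma\sqrt{K})$, and $\alpha\leq 1/(2L)$, to handle the easy pieces. The first bracket becomes $4\Delta(\varsigma^{2/3}K^{1/3}+\sigma\sqrt{K/n}+2L)/K$, which delivers $\tfrac{\varsigma^{2/3}}{K^{2/3}}+\tfrac{\sigma}{\sqrt{nK}}+\tfrac{1}{K}$ (up to constants absorbed in $\lesssim$). The second bracket is at most $2\sigma L/\sqrt{nK}$. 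The $\varsigma^2$ portion of the third bracket is at most $24L^2\varsigma^{2/3}/((1-\rho)^2 K^{2/3})$.

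The main obstacle — and the reason the final term in the corollary has the unusual denominator $\sigma^2 K+n$ — is bounding the remaining contributions $\sigma^2 L^2\alpha^2$ and $G_\infty^2 d L^2\alpha^2$ in a way that does not degenerate when $\sigma=0$. For these I would square $1/\alpha\geq\sigma\sqrt{K/n}+2L$ to obtain $1/\alpha^2\geq\sigma^2K/n+4L^2$, and then multiply through by $n$ to get
\[
\alpha^2\leq\frac{n}{\sigma^2K+4L^2 n}\lesssim\frac{n}{\sigma^2 K+n}
\]
(with the $L^2$ swept into the $\lesssim$). Applying this to the $\sigma^2$ and $G_\infty^2 d$ pieces yields a combined contribution $\lesssim (\sigma^2+G_\infty^2 d)n/(\sigma^2K+n)$, matching the final term. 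Collecting the four contributions finishes the proof. The only subtlety worth highlighting explicitly is that the constants hidden by $\lesssim$ depend on $L$, $(1-\rho)^{-1}$, and $\Delta$, but not on $K$, $n$, $\sigma$, $\varsigma$, $d$, or $G_\infty$, which is why the displayed rate is the correct asymptotic statement.
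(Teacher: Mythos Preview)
Your proposal is correct and follows essentially the same route as the paper's proof: set $C_\alpha=\eta=1$ for the constant step size, divide the Theorem~\ref{Moniqua convergence rate} bound by $K\alpha$, expand $1/\alpha$ for the first term, and use the one-sided bounds $\alpha\le\sqrt{n}/(\sigma\sqrt{K})$, $\alpha\le\varsigma^{-2/3}K^{-1/3}$, and $\alpha^2\le n/(\sigma^2K+4L^2n)$ for the remaining terms. The paper carries out exactly these substitutions (writing out the intermediate constants before collapsing to the $\lesssim$ form), so there is nothing further to add.
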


\paragraph{Consistent with D-PSGD.} Note that D-PSGD converges at the asymptotic rate of $O(\sigma / \sqrt{nK}+\varsigma^{\frac{2}{3}}/K^{\frac{2}{3}}+n/K)$, and thus Moniqua has the same asymptotic rate as D-PSGD~\citep{lian2017can}.
That is, the asymptotic convergence rate is not negatively impacted by the quantization.

\paragraph{Robust to large $d$.} In Assumptions~\ref{Assumption3} and~\ref{Assumption5}, we use $l_2$-norm and $l_\infty$-norm to bound sample variance and gradient magnitude, respectively.
Note that, when $d$ gets larger, the variance $\sigma^2$ will also tend to grow proportionally.
So, the last term will tend to remain $n/K$ asymptotically with large $d$.

\paragraph{Bound on the Bits.}
The specific number of bits required by Moniqua depends on the underlying quantizer ($\mathcal{Q}_\delta$). 
If we use nearest neighbor rounding \citep{gupta2015deep} with a linear quantizer as $\mathcal{Q}_\delta$ in Theorem~\ref{Moniqua convergence rate}, it suffices to use at each step a number of bits $\mathcal{B}$ for each parameter sent, where
\begin{align*}
\textstyle
\mathcal{B} \leq \left\lceil\log_2\left(\frac{1}{2\delta}+1\right)\right\rceil = \left\lceil\log_2\left(\frac{4\log_2(16n)}{1-\rho}+3\right)\right\rceil
\end{align*}
Note that this bound is independent of model dimension $d$. When the system scales up, the number of required bits grows at a rate of $O\left(\log\log n\right)$. Note that, this is a general bound on the number of bits required by Moniqua using the same communication matrix $\*W$ as the baseline.
To enforce a even more restricted bit-budget (e.g. 1 bit), Moniqua can still converge at the same rate by adjusting the communication matrix.

\paragraph{1-bit Quantization.}
We can also add a consensus step \cite{tang2019texttt,koloskova2019decentralized} to allow Moniqua to use 1 bit per number. Specifically, we adopt a slack communication matrix $\*{\overline{W}}=\gamma \*W+(1-\gamma)\*I$ and tune $\gamma$ as a hyperparameter. We formalize this result in the following Theorem.
\begin{theorem}\label{arbitrary lemma}
Consider using a communication matrix in the form of $\*{\overline{W}} = \gamma \*W + (1-\gamma)\*I$. If we set $\theta=\frac{2\alpha G_\infty\log(16n)}{\gamma(1-\rho)}$, 
$\gamma=\frac{2}{1-\rho + \frac{16\delta^2}{(1-2\delta)^2}\cdot\frac{64\log(4n)\log(K)}{1-\rho}}$,
and $\alpha=\frac{1}{\varsigma^{\frac{2}{3}}K^{\frac{1}{3}}+\sigma\sqrt{\frac{K}{n}}+2L}$, then the output of Algorithm~\ref{Moniqua algo} converges at the asymptotic rate
{
\begin{align*}
\frac{1}{K}\sum_{k=0}^{K-1}\mathbb{E}\left\|\nabla f(\*{\overline{X}}_k)\right\|^2
\lesssim \frac{\sigma}{\sqrt{nK}} + \frac{1}{K} + \frac{\varsigma^{\frac{2}{3}}\delta^4\log^2(n)\log^2(K)}{K^{\frac{2}{3}}(1-2\delta)^4}+\frac{\sigma^2n\delta^4\log^2(n)\log^2(K)}{(\sigma^2K+n)(1-2\delta)^4}
+\frac{n\delta^6\log^4(n)\log^2(K)}{(\sigma^2K+n)(1-2\delta)^6}
\end{align*}}
\end{theorem}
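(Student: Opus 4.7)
The plan is to reduce Theorem~\ref{arbitrary lemma} to the analysis already developed for Theorem~\ref{Moniqua convergence rate}, applied to the damped matrix $\*{\overline{W}}=\gamma\*W+(1-\gamma)\*I$, while carefully tracking how the free parameter $\gamma$ buys us the freedom to keep $\delta$ bounded away from $0$ (which is what makes $1$-bit quantization feasible, since Theorem~\ref{Moniqua convergence rate} alone forces $\delta\to 0$ as $\rho\to 1$).

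First I would record the spectral properties of $\*{\overline{W}}$. Since $\*{\overline{W}}$ is a convex combination of two symmetric doubly stochastic matrices it inherits both properties, and its eigenvalues are $\gamma\lambda_i(\*W)+(1-\gamma)$. Hence, for any $\gamma\in(0,1]$, $\max(|\lambda_2(\*{\overline{W}})|,|\lambda_n(\*{\overline{W}})|)=1-\gamma(1-\rho)$, so the effective spectral-gap parameter is $1-\rho(\*{\overline{W}})=\gamma(1-\rho)$. This explains the factor $\gamma(1-\rho)$ in the stated $\theta$: plugging $1-\rho(\*{\overline{W}})=\gamma(1-\rho)$ into the $\theta$-rule from Theorem~\ref{Moniqua convergence rate} (with the constant-step-size choice $C_\alpha=\eta=1$) returns exactly $\theta=2\alpha G_\infty\log(16n)/(\gamma(1-\rho))$.

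Next I would re-run the consensus argument underlying Theorem~\ref{Moniqua convergence rate}, carefully tracking the two places where $\gamma$ enters. The off-diagonal entries of $\*{\overline{W}}$ are all scaled by $\gamma$, so when the quantization perturbations $\*{\hat{x}}_{k,j}-\*x_{k,j}$ feed into the averaging step their contribution to $\mathbb{E}\|\*x_{k,i}-\*{\overline{X}}_k\|^2$ picks up an additional factor of $\gamma^2$. Combining this with Lemma~\ref{modulo_numerical_lemma}, which bounds each per-coordinate quantization error by $2\delta\theta/(1-2\delta)$, and the geometric smoothing with rate $\rho(\*{\overline{W}})=1-\gamma(1-\rho)$, the consensus error takes the schematic form
\[
\mathbb{E}\|\*x_{k,i}-\*{\overline{X}}_k\|^2 \lesssim \frac{\alpha^2\log^2(n)\bigl(\sigma^2+\varsigma^2+G_\infty^2 d\bigr)}{\gamma^2(1-\rho)^2} + \frac{\delta^2}{(1-2\delta)^2}\cdot\frac{\gamma\,\theta^2\,\log(K)}{1-\rho}.
\]
Substituting the prescribed $\theta$ makes both terms scale identically in $\alpha$, and equating them (up to constants and log factors) pins down $\gamma$ as the reciprocal of an expression proportional to $(1-\rho)+\frac{16\delta^2}{(1-2\delta)^2}\cdot\frac{64\log(4n)\log(K)}{1-\rho}$, recovering the stated choice. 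Feeding the resulting consensus bound back into the $L$-smoothness descent inequality for $f(\*{\overline{X}}_k)$ from Theorem~\ref{Moniqua convergence rate}, and then plugging in the specific $\alpha=1/(\varsigma^{2/3}K^{1/3}+\sigma\sqrt{K/n}+2L)$, produces the stated four-term asymptotic rate after telescoping and averaging over $k$.

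The main obstacle will be the bookkeeping around the a priori bound itself. Moniqua's correctness relies on an \emph{inductive} guarantee that $|\*x_{k,i}-\*x_{k,j}|\le\theta$ holds coordinate-wise for all $k$, but $\theta$ in turn controls the quantization error, which re-enters the consensus recursion. One must therefore verify this self-referential fixed point is satisfiable at the stated $\gamma$, $\delta$ and $\theta$: in particular, that the denominator defining $\gamma$ is at least $2$ (so $\gamma\in(0,1]$) whenever $\delta<1/2$, and that the induction closes uniformly for all $k\le K$ so that Lemma~\ref{modulo_numerical_lemma} is applicable at every step. Once this a priori bound is established, the remaining work is a routine substitution into the descent inequality, analogous to how Corollary~\ref{Moniqua convergence rate corollary} is derived from Theorem~\ref{Moniqua convergence rate}.
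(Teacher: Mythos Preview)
Your reduction to the $\*{\overline{W}}$-analysis and the identification of the effective spectral gap $1-\rho(\*{\overline{W}})=\gamma(1-\rho)$ are correct, and so is your observation that the quantization contribution to the averaging step carries an extra factor of $\gamma$. But the step where you claim the accumulated quantization error contributes $\frac{\delta^2}{(1-2\delta)^2}\cdot\frac{\gamma\,\theta^2\,\log(K)}{1-\rho}$ is where the argument breaks. If you literally ``re-run the consensus argument underlying Theorem~\ref{Moniqua convergence rate}'' (i.e.\ the deterministic induction in Lemma~\ref{dss_bound}), the quantization sum $\sum_{t}\gamma\*\Omega_t\*{\overline{W}}^{k-t}(\*e_i-\*e_j)$ is bounded by $\gamma\cdot 2\delta B_\theta\sum_t\|\*{\overline{W}}^t(\*e_i-\*e_j)\|_1\lesssim \frac{\gamma\,\delta B_\theta\log n}{\gamma(1-\rho)}=\frac{\delta B_\theta\log n}{1-\rho}$, and the $\gamma$'s cancel. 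Closing the induction then forces $\delta\lesssim(1-\rho)/\log n$ exactly as in Theorem~\ref{Moniqua convergence rate}, which defeats the purpose. There is no mechanism in a purely deterministic worst-case bound that produces the favorable $\gamma$ (rather than $1/\gamma$) and the $\log(K)$ you wrote down.

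What the paper actually does is change technique at this point: it observes that, under the induction hypothesis, the partial sums $u_t=\sum_{m\le t}\*e_h^\top\*\Omega_{k-m}\*{\overline{W}}^m(\*e_i-\*e_j)$ form a bounded-difference martingale with increments $|u_t-u_{t-1}|\le 2\delta B_\theta\min\{2\sqrt{n}\,\overline{\rho}^{\,t},2\}$, and applies Azuma's inequality. This replaces the $\ell_1$ sum $\sum_t c_t\sim 1/(\gamma(1-\rho))$ by the $\ell_2$ quantity $\sqrt{\sum_t c_t^2\cdot\log(1/\epsilon)}\sim\sqrt{\log(1/\epsilon)/(\gamma(1-\rho))}$, and it is precisely this square-root gain that lets a small $\gamma$ compensate for an arbitrary fixed $\delta<1/2$. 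The $\log(K)$ in the stated $\gamma$ comes from choosing $\epsilon=1/K^2$ so that the a~priori bound holds for all $K$ steps with probability $(1-\epsilon)^K\to 1$; the final rate is obtained by conditioning on this high-probability event and controlling the complement with the crude bound $\mathbb{E}\|\nabla f(\*{\overline{X}}_k)\|^2\le G_\infty^2 d$. Your proposal is missing this probabilistic machinery entirely, and without it the induction cannot close for arbitrary $\delta$.
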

Note that the dominant term in Theorem~\ref{arbitrary lemma} is still $O(\sigma/\sqrt{nK})$, which means Moniqua converges at the asymptotic rate the same as full precision D-PSGD \cite{lian2017can} even with more restricted bits-budget. Note that in Theorem~\ref{arbitrary lemma}, the only requirement on the quantizer is $\delta<\frac{1}{2}$. Considering the properties of our quantizer (\ref{property quantizer}), this version of Moniqua allowes us to use 1 bit in general per parameter.

\section{Scalable Moniqua}\label{Scalability intro}
So far, we have discussed how Moniqua, along with baseline algorithms, modifies D-PSGD to use communication quantization. 
Note that the basic idea of using modular arithmetic in quantized communication is invariant to the algorithm being used. In light of this, in this section we show Moniqua is general enough to be applied on other decentralized algorithms that are beyond D-PSGD.
Previous work has extended D-PSGD to $D^2$~\cite{tang2018d} (to make Decentralized SGD applicable to workers sampling from different data sources) and AD-PSGD \cite{lian2017asynchronous}~(an asynchronous version of D-PSGD). In this section, we  prove Moniqua is applicable to both of these algorithms.

\paragraph{Moniqua with Decentralized Data}
Decentralized data refers to the case where all the local datasets $\mathcal{D}_i$ are not identically distributed~\cite{tang2018d}. More explicitly, the outer variance $\mathbb{E}_{i\sim\{1,\cdots,n\}}\left\|\nabla f_i(\*x) - \nabla f(\*x)\right\|^2$ is no longer bounded by $\varsigma^2$ as assumed in D-PSGD (Assumption~\ref{Assumption3}). We apply Moniqua to $D^2$ \cite{tang2018d}, a decentralized algorithm designed to tackle this problem by reduing the variance over time.
Applying Moniqua on $D^2$ can be explicitly expressed\footnote{For brevity, the detailed pseudo code can be found in the supplemenraty material.} as:
{
\begin{align*}
\*X_{k+\frac{1}{2}} &= 2\*X_k - \*X_{k-1} - \alpha_k\*{\tilde{G}}_k + \alpha_{k-1}\*{\tilde{G}}_{k-1}\\
\*X_{k+1} &= \*X_{k+\frac{1}{2}}\*W + (\*{\hat{X}}_{k+\frac{1}{2}}-\*X_{k+\frac{1}{2}})(\*W-\*I)
\end{align*}}
where $\*X_k$, $\*{\tilde{G}}_k$ and $\*{\hat{X}}_{k+\frac{1}{2}}$ are matrix in the shape of $\mathbb{R}^{d\times n}$, where their $i$-th column are $\*x_{k,i}$, $\*{\tilde{g}}_{k,i}$ and $\*{\hat{x}}_{{k+\frac{1}{2}},i}$ respectively. And $\*X_{-1}$ and $\*{\tilde{G}}_{-1}$ are $\*0^{d\times n}$ by convention. Based on this, we obtain the following convergence theorem.
\begin{theorem}\label{thmMoniquaD2}
If we apply Moniqua on $D^2$ in a setting where $\theta=(6D_1n+8)\alpha G_\infty$, $\delta=\frac{1}{12nD_2+2}$ and $\alpha_k=\alpha=\frac{1}{\sigma\sqrt{K/n}+2L}$ where $D_1$ and $D_2$ are two constants\footnote{they only depend on the eigenvalues of $\*W$ (definition can be found in supplementary material)}, applying Moniqua on $D^2$ has the following asymptotic convergence rate:
{
\begin{align*}
\frac{1}{K}\sum_{k=0}^{K-1}\mathbb{E}\left\|\nabla f(\*{\overline{X}}_k)\right\|^2 \lesssim \frac{1}{K} + \frac{\sigma}{\sqrt{nK}} + \frac{(\sigma^2+G_\infty^2d)n}{\sigma^2K+n}.
\end{align*}}
\end{theorem}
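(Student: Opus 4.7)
The plan is to extend the Moniqua proof to $D^2$ by treating the Moniqua step as a bounded additive perturbation to the exact $D^2$ recursion. Define the quantization error matrix $\*E_k = \*{\hat{X}}_{k+\frac{1}{2}} - \*X_{k+\frac{1}{2}}$; by Lemma~\ref{modulo_numerical_lemma} each entry of $\*E_k$ has magnitude at most $\delta B_\theta = 2\delta\theta/(1-2\delta)$, as long as $|\*x_{k+\frac{1}{2},i} - \*x_{k+\frac{1}{2},j}| < \theta$ holds coordinate-wise between every neighboring pair. Using Assumption~\ref{Assumption5} and a constant step size $\alpha$, the per-step displacement on each worker is $\mathcal{O}(\alpha G_\infty)$, which lets me derive an a priori consensus bound of the form $\|\*x_{k,i} - \*x_{k,j}\|_\infty = \mathcal{O}(\alpha G_\infty n / (1-\rho))$ via the standard $D^2$ mixing analysis together with an $\ell_\infty$-to-$\ell_2$ union bound, justifying the choice $\theta = (6D_1 n + 8)\alpha G_\infty$.

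Once this is in place, I rewrite the Moniqua-$D^2$ update in matrix form as
\begin{equation*}
\*X_{k+1} = (2\*X_k - \*X_{k-1} - \alpha\*{\tilde{G}}_k + \alpha\*{\tilde{G}}_{k-1})\*W + \*E_k(\*W - \*I).
\end{equation*}
Since $\*W$ is doubly stochastic, right-multiplying by the all-ones vector eliminates the $\*E_k(\*W-\*I)$ term, so the average iterate $\*{\overline{X}}_k$ obeys exactly the unquantized $D^2$ recursion and the descent inequality on $\mathbb{E}f(\*{\overline{X}}_k)$ from the $D^2$ analysis of Tang et al.\ applies unchanged. The only place the quantization error resurfaces is in the consensus-error recursion $\sum_i \mathbb{E}\|\*x_{k,i} - \*{\overline{X}}_k\|^2$, where $\*E_k(\*W-\*I)$ contributes an extra variance of order $\delta^2 B_\theta^2 \cdot nd \cdot \|\*W-\*I\|_2^2$. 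Plugging in $\delta = 1/(12nD_2 + 2)$ is calibrated precisely so that this extra term is dominated by the $\mathcal{O}(\alpha^2 \sigma^2 / (1-\rho)^2)$ consensus noise already carried by $D^2$, so telescoping and averaging recovers the stated rate after substituting $\alpha = 1/(\sigma\sqrt{K/n} + 2L)$.

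The main obstacle is closing the loop between the a priori consensus bound $\theta$ and the resulting quantization error. Unlike D-PSGD, the second-order recursion $2\*X_k - \*X_{k-1}$ in $D^2$ prevents a clean geometric contraction of consecutive consensus gaps, so one must analyze a coupled recurrence on the pair $(\*X_k - \*{\overline{X}}_k\mathbf{1}^\top,\ \*X_{k-1} - \*{\overline{X}}_{k-1}\mathbf{1}^\top)$ or, equivalently, exploit the spectral block structure of the $D^2$ transition operator via the constants $D_1, D_2$ inherited from the $D^2$ paper. The scalings $\theta \propto n\alpha$ and $\delta \propto 1/n$ in the hypothesis are exactly what is needed for the inductive a priori bound to be self-consistent once the quantization noise is fed back into this second-order recurrence; verifying that the stated constants suffice is largely algebraic bookkeeping on the $D^2$ Lyapunov function, but it is the one place where genuine care beyond the D-PSGD proof is required.
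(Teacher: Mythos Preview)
Your high-level outline is correct and matches the paper's approach: treat Moniqua as a bounded additive perturbation $\*\Omega_k = \*E_k(\*W-\*I)$, observe that $\*{\overline{X}}_k$ obeys the unquantized $D^2$ recursion so the descent lemma carries over, inject $\*\Omega_k$ into the consensus-error recursion, and close the loop by an inductive $\ell_\infty$ a priori bound that makes $\delta B_\theta$ self-consistent with $\theta$.

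One point to sharpen: your first suggestion for the a priori bound---``the standard $D^2$ mixing analysis together with an $\ell_\infty$-to-$\ell_2$ union bound''---would not work as stated. The $D^2$ analysis in Tang et al.\ produces $\ell_2$/Frobenius bounds \emph{in expectation}, and no union bound converts these into the \emph{deterministic} coordinatewise bound $\|\*x_{k+\frac12,i}-\*x_{k+\frac12,j}\|_\infty<\theta$ that Lemma~\ref{modulo_numerical_lemma} requires. The paper instead does exactly what your second suggestion hints at: it diagonalizes $\*W=\*P\*\Lambda\*P^\top$, sets $\*y_{k,i}$ to be the $i$th column of $\*X_k\*P$, and solves the scalar second-order recurrence $\*y_{k+1,i}=\lambda_i(2\*y_{k,i}-\*y_{k-1,i})+\lambda_i\*\beta_{k,i}+\*r_{k,i}$ \emph{explicitly} via its characteristic roots $u_i,v_i=\lambda_i\pm\sqrt{\lambda_i^2-\lambda_i}$, treating $\lambda_i\in(-\tfrac13,0)$ (real roots) and $\lambda_i\in[0,1)$ (complex roots, trigonometric form) separately. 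This yields deterministic $\ell_\infty$ bounds on each $\*y_{k,i}$, which are pulled back via $\|\*X_{k+1}(\*e_i-\*e_j)\|_\infty\le 2\sqrt{n}\max_{m\ge 2}\|\*y_{k+1,m}\|_\infty$ to close the induction; the constants $D_1,D_2$ are precisely the maxima over the two eigenvalue regimes of the resulting geometric sums, and are defined in this paper's supplementary material rather than inherited from the original $D^2$ paper. So your ``spectral block structure'' route is the right one, but the mechanism is an explicit closed-form solution of the recurrence rather than a Lyapunov/contraction argument.
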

Note that $D^2$ \cite{tang2018d} with full-precision communication has the asymptotic convergence rate of $O\left(\frac{1}{K} + \frac{\sigma}{\sqrt{nK}} + \frac{n}{K}\right)$, Moniqua on $D^2$ has the same asymptotic rate.
\paragraph{Moniqua with Asychronous Communication}
Both D-PSGD and $D^2$ are synchronous algorithms as they require global synchronization at the end of each iteration, which can become a bottleneck when such synchronization is not cheap. Another algorithm, AD-PSGD, avoids this overhead by letting workers communicate asynchronously~\cite{lian2017asynchronous}. In the analysis of AD-PSGD, an iteration represents a \emph{single} gradient update on \emph{one} randomly-chosen worker, rather than a synchronous bulk update of all the workers. This single-worker-update analysis models the asynchronous nature of the algorithm. Applying Moniqua on AD-PSGD can be explicitly expressed\footnote{For brevity, the detailed pseudo code can be found in the supplemenraty material.} as:
\begin{align*}
\*X_{k+1} = \*X_k\*W_k + (\*{\hat{X}}_k-\*X_k)(\*W_k-\*I) - \alpha_k\*{\tilde{G}}_{k-\tau_k}
\end{align*}
where $\*W_k$ describes the communication behaviour between the $k$th and $(k+1)$th gradient update, and $\tau_k$ denotes the delay (measured as a number of iterations) between when the gradient is computed and updated to the model.
Note that unlike D-PSGD, here $\*W_k$ can be different at each update step and usually each individually has $\rho = 1$, so we can't expect to get a bound in terms of a bound on the spectral gap, as we did in Theorems~\ref{Moniqua convergence rate} and~\ref{thmMoniquaD2}.
Instead, we require the following condition, which is inspired by the literature on Markov chain Monte Carlo methods: for some constant $\tmix{}$ and for any $k$,
$\textstyle
    \forall \*\mu \in \mathbb{R}^n, \text{ if } \*e_i^\top\*\mu \ge 0 \text{ and } \*{1}^\top \*\mu = 1, \text{ it must hold that }
    \left\| \left( \prod_{i=1}^{\tmix{}} \*W_{k+i} \right) \*\mu - \frac{\*{1}}{n} \right\|_1 \le \frac{1}{2}.$
We call this constant $\tmix{}$ because it is effectively the \emph{mixing time} of the time-inhomogeneous Markov chain with transition probability matrix $\*W_k$ at time $k$~\cite{levin2017markov}.
Note that this condition is more general than those used in previous work on AD-PSGD because it does not require that the $\*W_k$ are sampled independently or in an unbiased manner.
Using this, we obtain the following convergence theorem.
\begin{theorem}\label{thmMoniquaAD-PSGD}
If we apply Moniqua on AD-PSGD in a setting where
$\theta=16 \tmix{}\alpha G_\infty$, $\delta=\frac{1}{64 \tmix{}+2}$ and $\alpha_k=\alpha=\frac{n}{2L + \sqrt{K(\sigma^2+6\varsigma^2)}}$, 
applying Moniqua on AD-PSGD has the following asymptotic convergence rate:
{
\begin{align*}
\frac{1}{K}\sum_{k=0}^{K-1}\mathbb{E}\left\|\nabla f(\*{\overline{X}}_{k})\right\|^2 \lesssim \frac{1}{K} + \frac{\sqrt{\sigma^2+6\varsigma^2}}{\sqrt{K}} + \frac{(\sigma^2+6\varsigma^2)\tmix^2{}n^2}{(\sigma^2+6\varsigma^2)K+1} + \frac{n^2 \tmix^2{}G_\infty^2d}{(\sigma^2+6\varsigma^2)K+1}
\end{align*}}
\end{theorem}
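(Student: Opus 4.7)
}
The plan is to bootstrap three interlocking arguments: (i) an \emph{a priori} uniform bound on pairwise model differences so that the Moniqua decoding in Lemma~\ref{modulo_numerical_lemma} is valid at every iteration, (ii) a consensus-error bound driven by the mixing-time condition that replaces the spectral-gap argument used in Theorem~\ref{Moniqua convergence rate}, and (iii) an $L$-smoothness descent inequality on the running average $\*{\overline X}_k$. First I would write the Moniqua-decoded update in matrix form as
\begin{equation*}
\*X_{k+1} = \*X_k\bigl(\*I + (\*W_k-\*I)\bigr) + \*E_k(\*W_k-\*I) - \alpha\*{\tilde G}_{k-\tau_k},
\end{equation*}
where $\*E_k=\*{\hat X}_k-\*X_k$ is the decoding error. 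By Lemma~\ref{modulo_numerical_lemma}, as long as $\|\*x_{k,i}-\*x_{k,j}\|_\infty<\theta$ holds for all $i,j$ at iteration $k$, each column of $\*E_k$ is bounded entrywise by $\delta B_\theta = 2\delta\theta/(1-2\delta)$. With $\delta=1/(64\tmix+2)$ one checks $2\delta/(1-2\delta)=1/(32\tmix)$, which will be used to absorb decoding error into the mixing-time scale.

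Next I would prove the a priori bound $\|\*x_{k,i}-\*x_{k,j}\|_\infty\leq\theta$ by induction on $k$, with $\theta=16\tmix\alpha G_\infty$. The inductive step uses that in one AD-PSGD step only one coordinate of the gradient term acts (contributing at most $\alpha G_\infty$ to one worker), while the communication step $\*X_k+(\*E_k+\*X_k)(\*W_k-\*I)$ is a convex combination perturbed by $\|\*E_k\|_\infty\leq\theta/(32\tmix)$; iterating this recursion over a window of $\tmix$ steps and invoking the mixing-time condition $\|(\prod_{i=1}^{\tmix}\*W_{k+i})\mu-\*1/n\|_1\leq 1/2$ gives a contraction by $1/2$ that is strong enough to close the induction once the per-step growth $\alpha G_\infty+\theta/(32\tmix)$ is summed over $\tmix$ iterations. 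This is the \textbf{main technical obstacle}: unlike the synchronous spectral-gap setting, we cannot apply $\rho^k$-style contractions to individual factors of $\*W_k$; instead the mixing-time bound is used in blocks of length $\tmix$, so the induction must be stated as a bound that decays after $\tmix$ steps rather than after each step, and the quantization error must be bounded in a way compatible with this block structure.

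With the a priori bound in hand, the consensus-error analysis is analogous to the clean AD-PSGD proof of~\citet{lian2017asynchronous}: expand $\*x_{k,i}-\*{\overline X}_k$ into a sum over past gradient perturbations multiplied by products $\prod_{s}(\*W_s)_{i,\cdot}-\*1/n$, split the index set into blocks of length $\tmix$, and use the mixing-time condition to geometrically decay the contribution of blocks farther in the past. The additional Moniqua error columns $\*E_s(\*W_s-\*I)$ are handled identically, contributing an extra $O(\tmix^2 \delta^2 B_\theta^2)$ term that, by the choice $\delta=1/(64\tmix+2)$ and $\theta=16\tmix\alpha G_\infty$, is dominated by $\alpha^2\tmix^2 G_\infty^2 d$ in the $\ell_2$ sense. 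This produces a consensus bound of the form
\begin{equation*}
\tfrac{1}{n}\sum_{i=1}^n\mathbb{E}\|\*x_{k,i}-\*{\overline X}_k\|^2 \lesssim \alpha^2\tmix^2\bigl(\sigma^2+6\varsigma^2+G_\infty^2 d\bigr).
\end{equation*}

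Finally I would apply $L$-smoothness to $\*{\overline X}_k$, noting that $\*{\overline X}_{k+1}=\*{\overline X}_k-\frac{\alpha}{n}\*{\tilde g}_{k-\tau_k,i_k}$ because $\*W_k$ is doubly stochastic and the error $\*E_k(\*W_k-\*I)$ has zero row-sum, to derive
\begin{equation*}
\mathbb{E} f(\*{\overline X}_{k+1}) \le \mathbb{E} f(\*{\overline X}_k) - \tfrac{\alpha}{2n}\mathbb{E}\|\nabla f(\*{\overline X}_k)\|^2 + O\!\left(\tfrac{\alpha^2 L (\sigma^2+\varsigma^2)}{n^2}\right) + O(\alpha L^2)\cdot\text{(consensus)} + O(\alpha L^2)\cdot\text{(staleness)},
\end{equation*}
where the staleness term uses the bounded-gradient Assumption~\ref{Assumption5} and the bounded delay $\tau_k\leq O(\tmix)$ in the standard way. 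Telescoping over $k=0,\dots,K-1$, multiplying by $n/\alpha$, and substituting $\alpha=n/(2L+\sqrt{K(\sigma^2+6\varsigma^2)})$ balances the $1/(\alpha K)$ term against $\alpha(\sigma^2+6\varsigma^2)/n$ to produce the claimed $1/K+\sqrt{\sigma^2+6\varsigma^2}/\sqrt{K}$ dominant terms, while the consensus-induced terms scale as $\alpha^2 \tmix^2 n^2 (\sigma^2+6\varsigma^2+G_\infty^2 d)/K$ giving the last two summands of the theorem.
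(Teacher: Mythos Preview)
Your three-part plan---inductive a priori bound via mixing-time block contractions, consensus error via the same block-geometric decay, and $L$-smoothness descent on $\*{\overline X}_k$ using $\*E_k(\*W_k-\*I)\*1=0$---is exactly the paper's approach; the paper's induction (its Lemma establishing $\|\*X_k(\*e_i-\*e_j)\|_\infty<\theta$) unrolls the recursion fully to $t=0$ and sums $\sum_{t} 2^{-\lfloor (k-t)/\tmix\rfloor}<2\tmix$ directly rather than arguing window-by-window, but that is cosmetic. One small correction: the bounded staleness $\tau_k\le T$ is a \emph{separate} standing assumption in the AD-PSGD analysis, not a consequence of the mixing-time condition, so you should not write $\tau_k\le O(\tmix)$; the $T$-dependent staleness term is simply absorbed as a constant in the final $\lesssim$ bound.
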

Note that AD-PSGD \cite{lian2017asynchronous} with full-precision communication has the asymptotic convergence rate of 

$O\left(\frac{1}{K} + \frac{\sqrt{\sigma^2+6\varsigma^2}}{\sqrt{K}} + \frac{n^2}{K}\right)$, Moniqua obtains the same asymptotic rate.

Since adopting a slack matrix to enable 1-bit quantization in these two algorithms will be similar to the case in Theorem~\ref{arbitrary lemma}, we omit the discussion here for brevity.


\section{More Experimental Results}\label{experiments}

\subsection{Various Quantizers}
In this section, we will verify Moniqua is applicable to other quantizers aside from linear quantizer as shown in the paper. We test it on two more quantizers:
\begin{enumerate}
    \item \textbf{Nearest Rounding (Biased)}
    \begin{align*}
    \mathcal{Q}(x) = \delta\left\lfloor\frac{x}{\delta} + 0.5\right\rfloor
    \end{align*}
    where $\delta$ is the quantization step as defined in the linear quantizer. In this experiment, we set $\delta=0.01$, the same value as we used in the paper with stochastic rounding.
    \item \textbf{Randomized Gossip (Non-linear)}
    \begin{align*}
    \mathcal{Q}(x)= & \begin{cases} x, &\text{with probability}\hspace{1em}p \cr  0, &\text{with probability}\hspace{1em} 1-p\end{cases}
    \end{align*}
    In this experiment, we set $p=0.7$.
\end{enumerate}

We train ResNet110 on CIFAR10, and plot the results in Figure~\ref{expquantizer}. We can see that the training curves of using three quantizers are all aligned with D-PSGD with full-precision communication. Note that in the paper we show that previous work cannot perserve the aligned curve even with stochastic rounding (unbiased), thus we are not comparing them here.

\subsection{More results on different hyperparameters}
In this experiment, we plot more result of training ResNet110 and VGG16 on CIFAR10 under different $\delta$ and $\theta$ in the experiment of aggressive quantization. And we plot the results in Figure~\ref{expmoreVGG} and Figure~\ref{expmoreResNet}.

\begin{figure}[h!]
\subfigure[Performance of Moniqua on VGG16 under different $\theta$ and $\delta$]{
    \includegraphics[width=0.3\textwidth]{./section/appendix/Figures/more_VGG.pdf}
    \label{expmoreVGG}
    }
\subfigure[Performance of Moniqua on ResNet110 under different $\theta$ and $\delta$]{
    \includegraphics[width=0.3\textwidth]{./section/appendix/Figures/more_ResNet.pdf}
    \label{expmoreResNet}
    }
\subfigure[Performance of algorithms under different quantizer]{
    \includegraphics[width=0.3\textwidth]{./section/appendix/Figures/biased_quantizer.pdf}
    \label{expquantizer}
    }
\end{figure}

\section{Conclusions}
In this paper we propose Moniqua, a simple unified method of quantizing the communication in decentralized training algorithms.
Theoretically, Moniqua supports biased quantizer and non-convex problems, while enjoying the same asymptotic convergence rate as full-precision-communication algorithms without incurring storage or computation overhead. Empirically, we observe Moniqua converges faster than other related algorithms with respect to wall clock time.
Additionally, Moniqua is robust to very low bits-budget.

\bibliography{main}

\begin{thebibliography}{54}
\providecommand{\natexlab}[1]{#1}
\providecommand{\url}[1]{\texttt{#1}}
\expandafter\ifx\csname urlstyle\endcsname\relax
  \providecommand{\doi}[1]{doi: #1}\else
  \providecommand{\doi}{doi: \begingroup \urlstyle{rm}\Url}\fi

\bibitem[Zhang(2004)]{zhang2004solving}
Tong Zhang.
\newblock Solving large scale linear prediction problems using stochastic
  gradient descent algorithms.
\newblock In \emph{Proceedings of the twenty-first international conference on
  Machine learning}, page 116. ACM, 2004.

\bibitem[Bottou(2010)]{bottou2010large}
L{\'e}on Bottou.
\newblock Large-scale machine learning with stochastic gradient descent.
\newblock In \emph{Proceedings of COMPSTAT'2010}, pages 177--186. Springer,
  2010.

\bibitem[Dean et~al.(2012)Dean, Corrado, Monga, Chen, Devin, Mao, Senior,
  Tucker, Yang, Le, et~al.]{dean2012large}
Jeffrey Dean, Greg Corrado, Rajat Monga, Kai Chen, Matthieu Devin, Mark Mao,
  Andrew Senior, Paul Tucker, Ke~Yang, Quoc~V Le, et~al.
\newblock Large scale distributed deep networks.
\newblock In \emph{Advances in neural information processing systems}, pages
  1223--1231, 2012.

\bibitem[Goyal et~al.(2017)Goyal, Doll{\'a}r, Girshick, Noordhuis, Wesolowski,
  Kyrola, Tulloch, Jia, and He]{goyal2017accurate}
Priya Goyal, Piotr Doll{\'a}r, Ross Girshick, Pieter Noordhuis, Lukasz
  Wesolowski, Aapo Kyrola, Andrew Tulloch, Yangqing Jia, and Kaiming He.
\newblock Accurate, large minibatch sgd: Training imagenet in 1 hour.
\newblock \emph{arXiv preprint arXiv:1706.02677}, 2017.

\bibitem[Alistarh(2018)]{alistarh2018brief}
Dan Alistarh.
\newblock A brief tutorial on distributed and concurrent machine learning.
\newblock In \emph{Proceedings of the 2018 ACM Symposium on Principles of
  Distributed Computing}, pages 487--488. ACM, 2018.

\bibitem[Abadi et~al.(2016)Abadi, Barham, Chen, Chen, Davis, Dean, Devin,
  Ghemawat, Irving, Isard, et~al.]{abadi2016tensorflow}
Mart{\'\i}n Abadi, Paul Barham, Jianmin Chen, Zhifeng Chen, Andy Davis, Jeffrey
  Dean, Matthieu Devin, Sanjay Ghemawat, Geoffrey Irving, Michael Isard, et~al.
\newblock Tensorflow: a system for large-scale machine learning.
\newblock In \emph{OSDI}, volume~16, pages 265--283, 2016.

\bibitem[Seide and Agarwal(2016)]{seide2016cntk}
Frank Seide and Amit Agarwal.
\newblock Cntk: Microsoft's open-source deep-learning toolkit.
\newblock In \emph{Proceedings of the 22nd ACM SIGKDD International Conference
  on Knowledge Discovery and Data Mining}, pages 2135--2135. ACM, 2016.

\bibitem[Chen et~al.(2015)Chen, Li, Li, Lin, Wang, Wang, Xiao, Xu, Zhang, and
  Zhang]{chen2015mxnet}
Tianqi Chen, Mu~Li, Yutian Li, Min Lin, Naiyan Wang, Minjie Wang, Tianjun Xiao,
  Bing Xu, Chiyuan Zhang, and Zheng Zhang.
\newblock Mxnet: A flexible and efficient machine learning library for
  heterogeneous distributed systems.
\newblock \emph{arXiv preprint arXiv:1512.01274}, 2015.

\bibitem[Li et~al.(2014{\natexlab{a}})Li, Andersen, Park, Smola, Ahmed,
  Josifovski, Long, Shekita, and Su]{li2014scaling}
Mu~Li, David~G Andersen, Jun~Woo Park, Alexander~J Smola, Amr Ahmed, Vanja
  Josifovski, James Long, Eugene~J Shekita, and Bor-Yiing Su.
\newblock Scaling distributed machine learning with the parameter server.
\newblock In \emph{OSDI}, volume~14, pages 583--598, 2014{\natexlab{a}}.

\bibitem[Li et~al.(2014{\natexlab{b}})Li, Andersen, Smola, and
  Yu]{li2014communication}
Mu~Li, David~G Andersen, Alexander~J Smola, and Kai Yu.
\newblock Communication efficient distributed machine learning with the
  parameter server.
\newblock In \emph{Advances in Neural Information Processing Systems}, pages
  19--27, 2014{\natexlab{b}}.

\bibitem[Gropp et~al.(1999)Gropp, Thakur, and Lusk]{gropp1999using}
William Gropp, Rajeev Thakur, and Ewing Lusk.
\newblock \emph{Using MPI-2: Advanced features of the message passing
  interface}.
\newblock MIT press, 1999.

\bibitem[Lian et~al.(2017{\natexlab{a}})Lian, Zhang, Zhang, Hsieh, Zhang, and
  Liu]{lian2017can}
Xiangru Lian, Ce~Zhang, Huan Zhang, Cho-Jui Hsieh, Wei Zhang, and Ji~Liu.
\newblock Can decentralized algorithms outperform centralized algorithms? a
  case study for decentralized parallel stochastic gradient descent.
\newblock In \emph{Advances in Neural Information Processing Systems}, pages
  5330--5340, 2017{\natexlab{a}}.

\bibitem[Lian et~al.(2017{\natexlab{b}})Lian, Zhang, Zhang, and
  Liu]{lian2017asynchronous}
Xiangru Lian, Wei Zhang, Ce~Zhang, and Ji~Liu.
\newblock Asynchronous decentralized parallel stochastic gradient descent.
\newblock \emph{arXiv preprint arXiv:1710.06952}, 2017{\natexlab{b}}.

\bibitem[Tang et~al.(2018{\natexlab{a}})Tang, Lian, Yan, Zhang, and
  Liu]{tang2018d}
Hanlin Tang, Xiangru Lian, Ming Yan, Ce~Zhang, and Ji~Liu.
\newblock D2: Decentralized training over decentralized data.
\newblock \emph{arXiv preprint arXiv:1803.07068}, 2018{\natexlab{a}}.

\bibitem[Hendrikx et~al.(2018)Hendrikx, Massouli{\'e}, and
  Bach]{hendrikx2018accelerated}
Hadrien Hendrikx, Laurent Massouli{\'e}, and Francis Bach.
\newblock Accelerated decentralized optimization with local updates for smooth
  and strongly convex objectives.
\newblock \emph{arXiv preprint arXiv:1810.02660}, 2018.

\bibitem[Zhang et~al.(2017)Zhang, Li, Kara, Alistarh, Liu, and
  Zhang]{zhang2017zipml}
Hantian Zhang, Jerry Li, Kaan Kara, Dan Alistarh, Ji~Liu, and Ce~Zhang.
\newblock Zipml: Training linear models with end-to-end low precision, and a
  little bit of deep learning.
\newblock In \emph{International Conference on Machine Learning}, pages
  4035--4043, 2017.

\bibitem[Alistarh et~al.(2017)Alistarh, Grubic, Li, Tomioka, and
  Vojnovic]{alistarh2017qsgd}
Dan Alistarh, Demjan Grubic, Jerry Li, Ryota Tomioka, and Milan Vojnovic.
\newblock Qsgd: Communication-efficient sgd via gradient quantization and
  encoding.
\newblock In \emph{Advances in Neural Information Processing Systems}, pages
  1709--1720, 2017.

\bibitem[Wen et~al.(2017)Wen, Xu, Yan, Wu, Wang, Chen, and Li]{wen2017terngrad}
Wei Wen, Cong Xu, Feng Yan, Chunpeng Wu, Yandan Wang, Yiran Chen, and Hai Li.
\newblock Terngrad: Ternary gradients to reduce communication in distributed
  deep learning.
\newblock In \emph{Advances in neural information processing systems}, pages
  1509--1519, 2017.

\bibitem[Seide et~al.(2014)Seide, Fu, Droppo, Li, and Yu]{seide20141}
Frank Seide, Hao Fu, Jasha Droppo, Gang Li, and Dong Yu.
\newblock 1-bit stochastic gradient descent and its application to
  data-parallel distributed training of speech dnns.
\newblock In \emph{Fifteenth Annual Conference of the International Speech
  Communication Association}, 2014.

\bibitem[Doan et~al.(2018)Doan, Maguluri, and Romberg]{doan2018convergence}
Thinh~T Doan, Siva~Theja Maguluri, and Justin Romberg.
\newblock On the convergence of distributed subgradient methods under
  quantization.
\newblock In \emph{2018 56th Annual Allerton Conference on Communication,
  Control, and Computing (Allerton)}, pages 567--574. IEEE, 2018.

\bibitem[Wang et~al.(2018)Wang, Sievert, Liu, Charles, Papailiopoulos, and
  Wright]{wang2018atomo}
Hongyi Wang, Scott Sievert, Shengchao Liu, Zachary Charles, Dimitris
  Papailiopoulos, and Stephen Wright.
\newblock Atomo: Communication-efficient learning via atomic sparsification.
\newblock In \emph{Advances in Neural Information Processing Systems}, pages
  9850--9861, 2018.

\bibitem[Tang et~al.(2018{\natexlab{b}})Tang, Gan, Zhang, Zhang, and
  Liu]{tang2018communication}
Hanlin Tang, Shaoduo Gan, Ce~Zhang, Tong Zhang, and Ji~Liu.
\newblock Communication compression for decentralized training.
\newblock In \emph{Advances in Neural Information Processing Systems}, pages
  7663--7673, 2018{\natexlab{b}}.

\bibitem[Wangni et~al.(2018)Wangni, Wang, Liu, and Zhang]{wangni2018gradient}
Jianqiao Wangni, Jialei Wang, Ji~Liu, and Tong Zhang.
\newblock Gradient sparsification for communication-efficient distributed
  optimization.
\newblock In \emph{Advances in Neural Information Processing Systems}, pages
  1306--1316, 2018.

\bibitem[De~Sa et~al.(2018)De~Sa, Leszczynski, Zhang, Marzoev, Aberger,
  Olukotun, and R{\'e}]{de2018high}
Christopher De~Sa, Megan Leszczynski, Jian Zhang, Alana Marzoev, Christopher~R
  Aberger, Kunle Olukotun, and Christopher R{\'e}.
\newblock High-accuracy low-precision training.
\newblock \emph{arXiv preprint arXiv:1803.03383}, 2018.

\bibitem[Tang et~al.(2018{\natexlab{c}})Tang, Yu, Renggli, Kassing, Singla,
  Alistarh, Liu, and Zhang]{tang2018distributed}
Hanlin Tang, Chen Yu, Cedric Renggli, Simon Kassing, Ankit Singla, Dan
  Alistarh, Ji~Liu, and Ce~Zhang.
\newblock Distributed learning over unreliable networks.
\newblock \emph{arXiv preprint arXiv:1810.07766}, 2018{\natexlab{c}}.

\bibitem[Tang et~al.(2019)Tang, Lian, Qiu, Yuan, Zhang, Zhang, and
  Liu]{tang2019texttt}
Hanlin Tang, Xiangru Lian, Shuang Qiu, Lei Yuan, Ce~Zhang, Tong Zhang, and
  Ji~Liu.
\newblock Deepsqueeze: Parallel stochastic gradient descent with double-pass
  error-compensated compression.
\newblock \emph{arXiv preprint arXiv:1907.07346}, 2019.

\bibitem[Koloskova et~al.(2019)Koloskova, Stich, and
  Jaggi]{koloskova2019decentralized}
Anastasia Koloskova, Sebastian~U Stich, and Martin Jaggi.
\newblock Decentralized stochastic optimization and gossip algorithms with
  compressed communication.
\newblock \emph{arXiv preprint arXiv:1902.00340}, 2019.

\bibitem[Mokhtari and Ribeiro(2015)]{mokhtari2015decentralized}
Aryan Mokhtari and Alejandro Ribeiro.
\newblock Decentralized double stochastic averaging gradient.
\newblock In \emph{Signals, Systems and Computers, 2015 49th Asilomar
  Conference on}, pages 406--410. IEEE, 2015.

\bibitem[Sirb and Ye(2016)]{sirb2016consensus}
Benjamin Sirb and Xiaojing Ye.
\newblock Consensus optimization with delayed and stochastic gradients on
  decentralized networks.
\newblock In \emph{Big Data (Big Data), 2016 IEEE International Conference on},
  pages 76--85. IEEE, 2016.

\bibitem[Lan et~al.(2017)Lan, Lee, and Zhou]{lan2017communication}
Guanghui Lan, Soomin Lee, and Yi~Zhou.
\newblock Communication-efficient algorithms for decentralized and stochastic
  optimization.
\newblock \emph{arXiv preprint arXiv:1701.03961}, 2017.

\bibitem[Wu et~al.(2018{\natexlab{a}})Wu, Yuan, Ling, Yin, and
  Sayed]{wu2018decentralized}
Tianyu Wu, Kun Yuan, Qing Ling, Wotao Yin, and Ali~H Sayed.
\newblock Decentralized consensus optimization with asynchrony and delays.
\newblock \emph{IEEE Transactions on Signal and Information Processing over
  Networks}, 4\penalty0 (2):\penalty0 293--307, 2018{\natexlab{a}}.

\bibitem[He et~al.(2018)He, Bian, and Jaggi]{he2018cola}
Lie He, An~Bian, and Martin Jaggi.
\newblock Cola: Decentralized linear learning.
\newblock In \emph{Advances in Neural Information Processing Systems}, pages
  4541--4551, 2018.

\bibitem[Nazari et~al.(2019)Nazari, Tarzanagh, and
  Michailidis]{nazari2019dadam}
Parvin Nazari, Davoud~Ataee Tarzanagh, and George Michailidis.
\newblock Dadam: A consensus-based distributed adaptive gradient method for
  online optimization.
\newblock \emph{arXiv preprint arXiv:1901.09109}, 2019.

\bibitem[Zhang and You(2019)]{zhang2019asynchronous}
Jiaqi Zhang and Keyou You.
\newblock Asynchronous decentralized optimization in directed networks.
\newblock \emph{arXiv preprint arXiv:1901.08215}, 2019.

\bibitem[Assran et~al.(2018)Assran, Loizou, Ballas, and
  Rabbat]{assran2018stochastic}
Mahmoud Assran, Nicolas Loizou, Nicolas Ballas, and Michael Rabbat.
\newblock Stochastic gradient push for distributed deep learning.
\newblock \emph{arXiv preprint arXiv:1811.10792}, 2018.

\bibitem[Suresh et~al.(2017)Suresh, Yu, Kumar, and
  McMahan]{suresh2017distributed}
Ananda~Theertha Suresh, Felix~X Yu, Sanjiv Kumar, and H~Brendan McMahan.
\newblock Distributed mean estimation with limited communication.
\newblock In \emph{Proceedings of the 34th International Conference on Machine
  Learning-Volume 70}, pages 3329--3337. JMLR. org, 2017.

\bibitem[Stich et~al.(2018)Stich, Cordonnier, and Jaggi]{stich2018sparsified}
Sebastian~U Stich, Jean-Baptiste Cordonnier, and Martin Jaggi.
\newblock Sparsified sgd with memory.
\newblock In \emph{Advances in Neural Information Processing Systems}, pages
  4452--4463, 2018.

\bibitem[Alistarh et~al.(2018)Alistarh, Hoefler, Johansson, Konstantinov,
  Khirirat, and Renggli]{alistarh2018convergence}
Dan Alistarh, Torsten Hoefler, Mikael Johansson, Nikola Konstantinov, Sarit
  Khirirat, and C{\'e}dric Renggli.
\newblock The convergence of sparsified gradient methods.
\newblock In \emph{Advances in Neural Information Processing Systems}, pages
  5973--5983, 2018.

\bibitem[Han et~al.(2015)Han, Mao, and Dally]{han2015deep}
Song Han, Huizi Mao, and William~J Dally.
\newblock Deep compression: Compressing deep neural networks with pruning,
  trained quantization and huffman coding.
\newblock \emph{arXiv preprint arXiv:1510.00149}, 2015.

\bibitem[Grubic et~al.(2018)Grubic, Tam, Alistarh, and
  Zhang]{grubic2018synchronous}
D~Grubic, L~Tam, Dan Alistarh, and Ce~Zhang.
\newblock Synchronous multi-gpu deep learning with low-precision communication:
  An experimental study.
\newblock \emph{Proceedings of the EDBT 2018}, 2018.

\bibitem[Jiang and Agrawal(2018)]{jiang2018linear}
Peng Jiang and Gagan Agrawal.
\newblock A linear speedup analysis of distributed deep learning with sparse
  and quantized communication.
\newblock In \emph{Advances in Neural Information Processing Systems}, pages
  2525--2536, 2018.

\bibitem[Acharya et~al.(2019)Acharya, De~Sa, Foster, and
  Sridharan]{acharya2019distributed}
Jayadev Acharya, Christopher De~Sa, Dylan~J Foster, and Karthik Sridharan.
\newblock Distributed learning with sublinear communication.
\newblock \emph{arXiv preprint arXiv:1902.11259}, 2019.

\bibitem[Reisizadeh et~al.(2018)Reisizadeh, Mokhtari, Hassani, and
  Pedarsani]{DBLP:journals/corr/abs-1806-11536}
Amirhossein Reisizadeh, Aryan Mokhtari, S.~Hamed Hassani, and Ramtin Pedarsani.
\newblock Quantized decentralized consensus optimization.
\newblock \emph{CoRR}, abs/1806.11536, 2018.
\newblock URL \url{http://arxiv.org/abs/1806.11536}.

\bibitem[Wu et~al.(2018{\natexlab{b}})Wu, Huang, Huang, and Zhang]{wu2018error}
Jiaxiang Wu, Weidong Huang, Junzhou Huang, and Tong Zhang.
\newblock Error compensated quantized sgd and its applications to large-scale
  distributed optimization.
\newblock \emph{arXiv preprint arXiv:1806.08054}, 2018{\natexlab{b}}.

\bibitem[Gupta et~al.(2015)Gupta, Agrawal, Gopalakrishnan, and
  Narayanan]{gupta2015deep}
Suyog Gupta, Ankur Agrawal, Kailash Gopalakrishnan, and Pritish Narayanan.
\newblock Deep learning with limited numerical precision.
\newblock In \emph{International Conference on Machine Learning}, pages
  1737--1746, 2015.

\bibitem[De~Sa et~al.(2017)De~Sa, Feldman, R{\'e}, and
  Olukotun]{de2017understanding}
Christopher De~Sa, Matthew Feldman, Christopher R{\'e}, and Kunle Olukotun.
\newblock Understanding and optimizing asynchronous low-precision stochastic
  gradient descent.
\newblock In \emph{ACM SIGARCH Computer Architecture News}, volume~45, pages
  561--574. ACM, 2017.

\bibitem[Stich(2018)]{stich2018local}
Sebastian~U Stich.
\newblock Local sgd converges fast and communicates little.
\newblock \emph{arXiv preprint arXiv:1805.09767}, 2018.

\bibitem[Li et~al.(2017)Li, De, Xu, Studer, Samet, and
  Goldstein]{li2017training}
Hao Li, Soham De, Zheng Xu, Christoph Studer, Hanan Samet, and Tom Goldstein.
\newblock Training quantized nets: A deeper understanding.
\newblock In \emph{Advances in Neural Information Processing Systems}, pages
  5811--5821, 2017.

\bibitem[Levin and Peres(2017)]{levin2017markov}
David~A Levin and Yuval Peres.
\newblock \emph{Markov chains and mixing times}, volume 107.
\newblock American Mathematical Soc., 2017.

\bibitem[He et~al.(2016)He, Zhang, Ren, and Sun]{he2016deep}
Kaiming He, Xiangyu Zhang, Shaoqing Ren, and Jian Sun.
\newblock Deep residual learning for image recognition.
\newblock In \emph{Proceedings of the IEEE conference on computer vision and
  pattern recognition}, pages 770--778, 2016.

\bibitem[Krizhevsky et~al.(2014)Krizhevsky, Nair, and
  Hinton]{krizhevsky2014cifar}
Alex Krizhevsky, Vinod Nair, and Geoffrey Hinton.
\newblock The cifar-10 dataset.
\newblock \emph{online: http://www. cs. toronto. edu/kriz/cifar. html}, 2014.

\bibitem[Simonyan and Zisserman(2014)]{simonyan2014very}
Karen Simonyan and Andrew Zisserman.
\newblock Very deep convolutional networks for large-scale image recognition.
\newblock \emph{arXiv preprint arXiv:1409.1556}, 2014.

\bibitem[Bergstra and Bengio(2012)]{bergstra2012random}
James Bergstra and Yoshua Bengio.
\newblock Random search for hyper-parameter optimization.
\newblock \emph{Journal of machine learning research}, 13\penalty0
  (Feb):\penalty0 281--305, 2012.

\bibitem[Al-Riyami and Paterson(2003)]{al2003certificateless}
Sattam~S Al-Riyami and Kenneth~G Paterson.
\newblock Certificateless public key cryptography.
\newblock In \emph{International conference on the theory and application of
  cryptology and information security}, pages 452--473. Springer, 2003.

\end{thebibliography}

\newpage
\begin{center}
{\huge\textbf{Supplementary Material}}
\end{center}

\appendix

\section{Overview}
This supplementary material contains proof to all the theoretical results. It is organized as follows: 
In Section~\ref{modulo section}, we analyze how to work with Modulo and quantization, as proofs to Lemma~\ref{modulo_lemma} and Lemma~\ref{modulo_numerical_lemma} in the paper.
In Section~\ref{randomness}, we provably explain why using shared randomness in communication with stochastic rounding can improve performance. 
In Section~\ref{lowerbound}, we illustrate why directly quantizing communication in D-PSGD fails to converge asymptotically, as a proof to Theorem~\ref{quadratic}. 
In Section~\ref{matrix}, we introduce some useful tools of modeling communication as a Markov Chain for the rest of the proof (part of the intuition is illustrated in the paper). We recommend to go through this before getting into Section~\ref{Moniqua} to \ref{Async Moniqua}. Finally we will provide proof to Theorem~\ref{Moniqua convergence rate} to \ref{thmMoniquaAD-PSGD} from Section~\ref{Moniqua} to \ref{Async Moniqua}.
\section{Modulo Operation with Quantization}\label{modulo section}
\paragraph{Proof to Lemma~\ref{modulo_lemma}.}
\begin{proof}
Rewrite $x$ and $y$ as
\begin{align*}
    x &= N_xa + r_x, -\frac{a}{2}\leq r_x < \frac{a}{2}\\
    y &= N_ya + r_y, -\frac{a}{2}\leq r_y < \frac{a}{2}
\end{align*}
where $N_x$, $N_y\in\mathbb{Z}$ then, 
\begin{align*}
    \text{LHS} &= (r_x-r_y)\bmod a\\
    \text{RHS} &= ((N_x-N_y)a + r_x-r_y)\bmod a= (r_x-r_y)\bmod a = \text{LHS}
\end{align*}
Thus we complete the proof.
\end{proof}
\paragraph{Proof to Lemma~\ref{modulo_numerical_lemma}.}
\begin{proof}
We start from
\begin{align*}
    &  B_\theta\mathcal{Q}_\delta\left(\frac{x}{ B_\theta}\bmod 1\right) -  B_\theta\left(\frac{x}{ B_\theta}\bmod 1\right) + x
    =  B_\theta\mathcal{Q}_\delta\left(\frac{x}{ B_\theta}\bmod 1\right) -  B_\theta\left(\frac{x}{ B_\theta}\bmod 1\right) + x -y + y
\end{align*}
If $B_\theta$ is sufficiently large such that $ B_\theta\geq 2\theta + 2\delta B_\theta > 2|x-y| + 2\delta B_\theta$, we could put a "$\bmod B_\theta$" to the first four terms as follows:
\begin{align*}
    & B_\theta\mathcal{Q}_\delta\left(\frac{x}{ B_\theta}\bmod 1\right) -  B_\theta\left(\frac{x}{ B_\theta}\bmod 1\right) + x -y + y\\
    = & \left( B_\theta\mathcal{Q}_\delta\left(\frac{x}{ B_\theta}\bmod 1\right) -  B_\theta\left(\frac{x}{ B_\theta}\bmod 1\right) + x -y\right)\bmod B_\theta + y\\
    \overset{\text{Lemma 1}}{=} & \left[\left( B_\theta\mathcal{Q}_\delta\left(\frac{x}{ B_\theta}\bmod 1\right) -  B_\theta\left(\frac{x}{ B_\theta}\bmod 1\right) + x\right)\bmod B_\theta -y\bmod B_\theta\right]\bmod B_\theta + y\\
    \overset{\text{Lemma 1}}{=} & \left\{\left[ B_\theta\mathcal{Q}_\delta\left(\frac{x}{ B_\theta}\bmod 1\right)\bmod B_\theta - \left( B_\theta\left(\frac{x}{ B_\theta}\bmod 1\right) - x\right)\bmod B_\theta\right]\bmod B_\theta -y\bmod B_\theta\right\}\bmod B_\theta + y
\end{align*}
Note that the term $\left( B_\theta\left(\frac{x}{ B_\theta}\bmod 1\right) - x\right)\bmod B_\theta=0$, then we can proceed as:
\begin{align*}
    & \left\{\left[ B_\theta\mathcal{Q}_\delta\left(\frac{x}{ B_\theta}\bmod 1\right)\bmod B_\theta - \left( B_\theta\left(\frac{x}{ B_\theta}\bmod 1\right) - x\right)\bmod B_\theta\right]\bmod B_\theta -y\bmod B_\theta\right\}\bmod B_\theta + y\\
    = & \left( B_\theta\mathcal{Q}_\delta\left(\frac{x}{ B_\theta}\bmod 1\right)\bmod B_\theta -y\bmod B_\theta\right)\bmod B_\theta + y\\
    = & \left( B_\theta\mathcal{Q}_\delta\left(\frac{x}{ B_\theta}\bmod 1\right) -y\right)\bmod B_\theta + y
\end{align*}
By moving $x$ to the right side we obtain
\begin{align*}
    \left|\left( B_\theta\mathcal{Q}_\delta\left(\frac{x}{ B_\theta}\bmod 1\right) -y\right)\bmod B_\theta + y - x\right| = \left| B_\theta\mathcal{Q}_\delta\left(\frac{x}{ B_\theta}\bmod 1\right) -  B_\theta\left(\frac{x}{ B_\theta}\bmod 1\right)\right| \leq \delta B_\theta
\end{align*}
That completes the proof.
\end{proof}
\section{Shared Randomness}\label{randomness}
In this section, we provide a theoretical explanation why using shared randomness in the stochastic rounding is able to improve the performance.
Without the loss of generality, in the following analysis, we let the quantization step associated with stochastic rounding quantizer $\mathcal{Q}_\delta$ be $\delta=1$. For any $z\in\mathbb{R}$ quantized using $\mathcal{Q}_\delta$, let $z_f = z - \lfloor z\rfloor$, the variance of quantization error can be expressed as
\begin{equation}\label{quantization variance}
\mathbb{E}|\mathcal{Q}_\delta(z) - z|^2 = (1 - z_f)(-z_f)^2 + z_f(1 - z_f)^2 = z_f(1 - z_f)
\end{equation}
Note that in Moniqua, the term asssociate with quantization error is
\begin{align*}
\mathbb{E}\left\|(\*q_{k,j} - \*x_{k,j}) - (\*q_{k,i} - \*x_{k,i})\right\|^2
\end{align*}
We now show for $\forall x, y\in\mathbb{R}$
\begin{displaymath}
\mathbb{E}\left|(\mathcal{Q}_\delta(x) - x) - (\mathcal{Q}_\delta(y) - y)\right|^2 = \mathbb{E}\left|\mathcal{Q}_\delta(y - x) - (y - x)\right|^2
\end{displaymath}
With out the loss of generality, let $x - \lfloor x\rfloor \leq y - \lfloor y\rfloor$. Let $x_f = x - \lfloor x\rfloor$ and $y_f = y - \lfloor y\rfloor$, then
\begin{align*}
    \lfloor x + u\rfloor = \lfloor x\rfloor & \hspace{1em}\text{and}\hspace{1em} \lfloor y + u\rfloor = \lfloor y\rfloor, \text{with probability}\hspace{1em} \lceil y\rceil - y\\
     \lfloor x + u\rfloor = \lceil x\rceil & \hspace{1em}\text{and}\hspace{1em} \lfloor y + u\rfloor = \lceil y\rceil, \text{with probability}\hspace{1em} x - \lfloor x\rfloor\\
      \lfloor x + u\rfloor = \lfloor x\rfloor & \hspace{1em}\text{and}\hspace{1em} \lfloor y + u\rfloor = \lceil y\rceil, \text{with probability}\hspace{1em} (\lceil x\rceil - x) - (\lceil y\rceil - y)
\end{align*}
Then we have
\begin{align*}
    & \mathbb{E}\left|(\mathcal{Q}_\delta(x) - x) - (\mathcal{Q}_\delta(y) - y)\right|^2\\
    = & \mathbb{E}\left|\left(\delta\left\lfloor\frac{x}{\delta} + u\right\rfloor - x\right) - \left(\delta\left\lfloor\frac{y}{\delta} + u\right\rfloor - y\right)\right|^2\\
    = & (\lceil y\rceil - y)((\lfloor x\rfloor - x) - (\lfloor y\rfloor - y))^2 + (x - \lfloor x\rfloor)((\lceil x\rceil - x) - (\lceil y\rceil - y))^2\\
    & + ((\lceil x\rceil - x) - (\lceil y\rceil - y))((\lfloor x\rfloor - x) - (\lceil y\rceil - y))^2\\
    = & (1 - y_f)(x_f - y_f)^2 + (x_f)(x_f - y_f) + (y_f - x_f)(y_f - x_f - 1)^2\\
    = & (1 - y_f + x_f)(y_f - x_f)^2 + (y_f - x_f)(y_f - x_f - 1)^2\\
    = & (1 - y_f + x_f)(y_f - x_f)\\
    = & \mathbb{E}\left|\mathcal{Q}_\delta(y - x) - (y - x)\right|^2
\end{align*}
The last equality holds due to equation~\ref{quantization variance}. Next, for $\forall \*x, \*y\in\mathbb{R}^d$ let
\begin{align*}
&\*\Delta = \*y - \*x\\
&\*r = \mathcal{Q}_\delta(\*\Delta) - \*\Delta
\end{align*}
And let $\*r_h$ denote $h$-th entry of $\*r$, let $\*\Delta_h$ denote $h$-th entry of $\*\Delta$. We obtain
\begin{align*}
\*r_h	= & \mathcal{Q}_\delta(\*\Delta_h) - \*\Delta_h \\
= & \delta\begin{cases} -\frac{\*\Delta_h}{\delta} + \left\lfloor\frac{\*\Delta_h}{\delta}\right\rfloor + 1, &p_t\leq \frac{\*\Delta_h}{\delta} - \left\lfloor\frac{\*\Delta_h}{\delta}\right\rfloor \cr  -\frac{\*\Delta_h}{\delta} + \left\lfloor\frac{\*\Delta_h}{\delta}\right\rfloor, &\text{otherwise}\end{cases}\\
= & \delta\begin{cases} -q + 1, &p_t\leq q \cr  -q, &\text{otherwise}\end{cases}
\end{align*}
where
\begin{displaymath}
q = \frac{\*\Delta_h}{\delta} - \left\lfloor\frac{\*\Delta_h}{\delta}\right\rfloor, q\in [0,1]
\end{displaymath}
Based on that, we have
\begin{align*}
\mathbb{E}\left[\*r_h^2\right] \leq & \delta^2((-q+1)^2q + (-q)^2(1-q))\\
= & \delta^2q(1-q)\\
\leq & \delta^2\min\{q, 1-q\}
\end{align*}
Since $\min\{q,1-q\} \leq \left|\frac{\*x_h}{\delta}\right|$, we have
\begin{displaymath}
\mathbb{E}\left[\*r_h^2\right] \leq \delta^2\left|\frac{\*\Delta_h}{\delta}\right| \leq \delta\left|\*\Delta_h\right|
\end{displaymath}
Summing over the index $h$ yields,
\begin{displaymath}
\mathbb{E}\left\|\*r\right\|_2^2 \leq \delta\mathbb{E}\left\|\*\Delta\right\|_1 \leq \sqrt{d}\delta\mathbb{E}\left\|\*\Delta\right\|_2
\end{displaymath}
Pushing back $\*x$ and $\*r$, we have
\begin{displaymath}
\mathbb{E}\left\|\mathcal{Q}_\delta(\*y - \*x) - (\*y - \*x)\right\|^2\leq \sqrt{d}\delta\mathbb{E}\left\|\*y - \*x\right\| = \sqrt{d}\delta\mathbb{E}\left\|\*x - \*y\right\|
\end{displaymath}
Putting it back we have
\begin{displaymath}
\mathbb{E}\left\|(\mathcal{Q}_\delta(\*x) - \*x) - (\mathcal{Q}_\delta(\*y) - \*y)\right\|^2 \leq \sqrt{d}\delta\mathbb{E}\left\|\*x - \*y\right\|
\end{displaymath}
Now we can see that the error term is bounded by the distance of two quantized tensor, which, in decentralized training, refers to the distance between two models on adjacent workers. In such a way, the error bound can be reduced since the workers are getting close to each other.
\section{Why Naive Quantization Fails in D-PSGD (Proof to Theorem~\ref{quadratic})}\label{lowerbound}
The update rule of naive quantization on D-PSGD is
\begin{align*}
\*x_{k+1,i} = \*x_{k,i}\*W_{ii} + \sum_{j=1,j\neq i}^{n}\mathcal{Q}_\delta(\*x_{k,j})\*W_{ji} - \alpha_k\*{\tilde{g}}_{k,i} =
\*x_{k,i} + \sum_{j=1,j\neq i}^{n}(\mathcal{Q}_\delta(\*x_{k,j})-\*x_{k,i})\*W_{ji} - \alpha_k\*{\tilde{g}}_{k,i}
\end{align*}
where $\alpha_k$ is allowed to vary with any policy. Let
\begin{align*}
\*X_k &= \left[\*x_{k,1}, \cdots, \*x_{k,n}\right]\in\mathbb{R}^{d\times n}\\
\*\Omega_k &= \left[\sum_{j\neq 1}\*W_{j1}\left(\mathcal{Q}_\delta(\*x_{k,j}) - \*x_{k,1}\right), \cdots, \sum_{j\neq n}\*W_{jn}\left(\mathcal{Q}_\delta(\*x_{k,j}) - \*x_{k,n}\right)\right]\in\mathbb{R}^{d\times n}\\
\*{\tilde{G}}_k &= \left[\*{\tilde{g}}_{k,1}, \cdots, \*{\tilde{g}}_{k,n}\right]\in\mathbb{R}^{d\times n}
\end{align*}
by rewritting the update rule, we obtain
\begin{align*}
\*X_{k+1} = \*X_k + \*\Omega_k - \alpha_k\*{\tilde{G}}_k
\end{align*}
Let $\*Y_k = \*X_k - \*x^*\*{1}^\top$, and considering the fact that $\nabla f(\*x)=\*x-\delta\*1/2=\*x-\*x^*$, we can rewrite the update rule as
\begin{align*}
\*Y_{k+1}\*e_i = \*Y_k\*e_i + \*\Omega_k\*e_i - \alpha_k\*Y_k\*e_i + \alpha_k\left(\*{\tilde{G}}_k-\*G_k\right)\*e_i
\end{align*}
where $\left(\*{\tilde{G}}_k-\*G_k\right)$ denotes variance in the gradient sampling.

Suppose that by using the update rule of naive quantization, worker $i$ converges to $\*x^*$. Then there must exist a $K$ such that $\forall k \geq K$, 
\begin{equation}\label{theorem1_assumption}
\mathbb{E}\left\|\*Y_{k+1}\*e_i\right\|^2 \leq \mathbb{E}\left\|\*Y_k\*e_i\right\|^2 < \frac{\phi^2\delta^2}{8(1+\phi^2)}\end{equation}
Next we show that this assumption lets us derive a contradiction. Firstly, considering the property of linear quantizer,
\begin{align*}
\frac{\delta^2}{4} \leq \mathbb{E}\left\|\mathcal{Q}_\delta(\*x_{k,i}) - \*x^*\right\|^2 \leq 2\mathbb{E}\left\|\mathcal{Q}_\delta(\*x_{k,i}) - \*x_{k,i}\right\|^2 + 2\mathbb{E}\left\|\*x_{k,i} - \*x^*\right\|^2
\end{align*}
As a result
\begin{align*}
\mathbb{E}\left\|\mathcal{Q}_\delta(\*x_{k,i}) - \*x_{k,i}\right\|^2 \geq \frac{\delta^2}{8} - \frac{\phi^2\delta^2}{8(1+\phi^2)} = \frac{\delta^2}{8(1+\phi^2)}
\end{align*}
Since $\mathcal{Q}_\delta$ is unbiased, that means $\mathbb{E}[\mathcal{Q}_\delta(\*x)-\*x]=0$, then we have
\begin{align*}
&\mathbb{E}\left\|\*\Omega_k\*e_i\right\|^2\\
= & \mathbb{E}\left\|\sum_{j\neq i}\*W_{ji}\left(\mathcal{Q}_\delta(\*x_{k,j}) - \*x_{k,i}\right)\right\|^2\\
= & \sum_{j\in\mathcal{N}_i}\*W_{ji}^2\mathbb{E}\left\|\left(\mathcal{Q}_\delta(\*x_{k,j}) - \*x_{k,i}\right)\right\|^2 + \sum_{m\neq n\neq i}\mathbb{E}\left\langle \left(\mathcal{Q}_\delta(\*x_{k,m}) - \*x_{k,i}\right)\*W_{mi}, \left(\mathcal{Q}_\delta(\*x_{k,n}) - \*x_{k,i}\right)\*W_{ni}\right\rangle\\
\geq & \phi^2\sum_{j\in\mathcal{N}_i}\mathbb{E}\left\|\left(\mathcal{Q}_\delta(\*x_{k,j}) - \*x_{k,i}\right)\right\|^2 + \sum_{m\neq n\neq i}\mathbb{E}\left\langle \left(\mathcal{Q}_\delta(\*x_{k,m}) - \*x_{k,i}\right)\*W_{mi}, \left(\mathcal{Q}_\delta(\*x_{k,n}) - \*x_{k,i}\right)\*W_{ni}\right\rangle\\
\overset{(*)}{=} & \phi^2\sum_{j\in\mathcal{N}_i}\mathbb{E}\left\|\mathcal{Q}_\delta(\*x_{k,j}) - \*x_{k,i}\right\|^2\\
\geq & \frac{\phi^2\delta^2}{8(1+\phi^2)}
\end{align*}
where step $(*)$ holds due to unbiased quantizer. Putting it back to the update rule, we obtain
\begin{align*}
&\mathbb{E}\left\|\*Y_{k+1}\*e_i\right\|^2\\
= & \mathbb{E}\left\|\left(\*Y_k + \*\Omega_k - \alpha_k\*Y_k + \alpha_k\left(\*{\tilde{G}}_k-\*G_k\right)\right)\*e_i\right\|^2\\
\overset{(*)}{=} & \mathbb{E}\left\|(1-\alpha_k)\*Y_k\*e_i\right\|^2 + \mathbb{E}\left\|\*\Omega_k\*e_i\right\|^2 + \mathbb{E}\left\|\alpha_k\left(\*{\tilde{G}}_k-\*G_k\right)e_i\right\|^2\\
\geq & \mathbb{E}\left\|\*\Omega_k\*e_i\right\|^2\\
\geq & \frac{\phi^2\delta^2}{8(1+\phi^2)}
\end{align*}
where cross terms in the $(*)$ step are all 0 due to the unbiased quantizer and unbiased sampling of the gradient. Her we obtain the contradictory that $\frac{\phi^2\delta^2}{8(1+\phi^2)}\leq\mathbb{E}\left\|\*x_{k+1}-\*x^*\right\|^2<\frac{\phi^2\delta^2}{8(1+\phi^2)}$. That being said, for $\forall k,i$
\begin{align*}
\mathbb{E}\left\|\*x_{k,i}-\*x^*\right\|^2 = \mathbb{E}\left\|\nabla f(\*x_{k,i})\right\|^2 \geq \frac{\phi^2\delta^2}{8(1+\phi^2)}
\end{align*}
Thus we complete the proof.
\section{A Markov Chain Analysis on the Communication}\label{matrix}
To better understand how the parallel workers reach consensus over a communication matrix, in this section we use theory from the analysis of Markov Chains to obtain some useful lemmas for proof of Moniqua on D-PSGD and AD-PSGD.

Since the communication matrix $\*W$ is doubly stochastic (each row and column sum to 1), it has the same structure as the transition matrix of a Markov Chain with $\frac{\*1}{n}$ as its the stationary
distribution $\left(\*W\frac{\*1}{n}=\frac{\*1}{n}\right)$. Now let $\tmix{}$ and $d(t)$ denote the mixing time and maximal distance between initial state and stationary distribution as defined in Markov Chain theory.\footnote{Here we are using notation from Chapter 4.5 of \textit{Markov Chains and Mixing Times} (Levin 2009), available at \url{https://pages.uoregon.edu/dlevin/MARKOV/markovmixing.pdf}}
\subsection{D-PSGD}
In D-PSGD, the communication matrix is fixed during the training. That makes it perfectly aligned with the structure of a Markov Chain. As a result, we obtain the following lemma:
\begin{lemma}\label{lemmafixedmc}
\begin{displaymath}
\left\|\*W^t\left(I - \frac{\*1\*1^\top}{n}\right)\right\|_1\leq 2\cdot 2^{-\left\lfloor\frac{t}{\tmix{}}\right\rfloor}
\end{displaymath}
\end{lemma}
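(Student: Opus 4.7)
The plan is to interpret the matrix $\*W^t(\*I - \*1\*1^\top/n)$ column by column and reduce the bound to a standard total variation mixing time estimate. First I would observe that since $\*W$ is symmetric and doubly stochastic, $\*W \*1 = \*1$ so $\*W^t \*1 = \*1$, and therefore the $j$-th column of $\*W^t(\*I - \*1\*1^\top/n)$ is exactly $\*W^t \*e_j - \tfrac{1}{n} \*1$. Viewing $\*W$ as the transition matrix of a Markov chain with uniform stationary distribution $\pi = \*1/n$, the vector $\*W^t \*e_j$ is the distribution of the chain started at state $j$ after $t$ steps (using symmetry of $\*W$), so its $\ell_1$-distance from $\pi$ equals $2 \lVert P^t(j,\cdot) - \pi \rVert_{\mathrm{TV}}$.

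Next I would identify $\lVert \cdot \rVert_1$ with the induced $\ell_1 \to \ell_1$ operator norm, which equals the maximum column $\ell_1$ sum. Combining with the previous step gives
\[
\left\|\*W^t\left(\*I - \tfrac{\*1\*1^\top}{n}\right)\right\|_1 \;=\; \max_j \lVert \*W^t \*e_j - \pi \rVert_1 \;=\; 2 \max_j \lVert P^t(j,\cdot) - \pi \rVert_{\mathrm{TV}} \;\le\; 2\, d(t),
\]
where $d(t)$ is the standard worst-case TV distance to stationarity.

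Then I would invoke the two classical facts quoted in Levin--Peres--Wilmer (Chapter 4.5, which the paper explicitly cites): (i) $d(t)$ is non-increasing in $t$, and (ii) the ``shadow'' distance $\bar d(t) = \max_{x,y} \lVert P^t(x,\cdot) - P^t(y,\cdot) \rVert_{\mathrm{TV}}$ is submultiplicative, with $d \le \bar d \le 2d$. Together these yield $\bar d(\tmix{}) \le 2 d(\tmix{}) \le 1/2$ and hence $d(k\,\tmix{}) \le \bar d(k\,\tmix{}) \le \bar d(\tmix{})^k \le 2^{-k}$. Applying this at $k = \lfloor t/\tmix{}\rfloor$ and using monotonicity gives $d(t) \le d(\lfloor t/\tmix{}\rfloor \tmix{}) \le 2^{-\lfloor t/\tmix{}\rfloor}$, which combined with the inequality above produces exactly $2 \cdot 2^{-\lfloor t/\tmix{}\rfloor}$.

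The proof is almost entirely bookkeeping; the only subtlety is making sure the $\lVert \cdot \rVert_1$ in the lemma is read as the induced $\ell_1 \to \ell_1$ norm (max column $\ell_1$ sum), since an entrywise $\ell_1$ reading would pick up an extra factor of $n$ and would not match the stated bound. Once that is fixed, the key obstacle, if any, is simply citing the correct submultiplicativity lemma for $\bar d$ rather than trying to prove it directly; everything else is a direct rewrite.
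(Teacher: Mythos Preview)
Your proposal is correct and follows essentially the same route as the paper: both arguments reduce the induced $\ell_1$ norm to $2d(t)$ and then invoke the standard Markov-chain fact $d(\ell\,\tmix{})\le 2^{-\ell}$ together with monotonicity. The only cosmetic difference is that the paper reaches $\|\*W^t(\*I-\*1\*1^\top/n)\|_1\le 2d(t)$ by splitting a generic test vector $\*x=\*u-\*v$ into its positive and negative parts and normalizing each to a probability vector, whereas you use the equivalent (and slightly more direct) column characterization of the $\ell_1\!\to\!\ell_1$ operator norm; both are standard and yield the identical bound.
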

\begin{proof}
For $\forall \*x\in\mathbb{R}^d$, let $\*u\in\mathbb{R}^d$ be such a vector that every entry of $\*u$ is the positive entry of $\*x$ and 0 otherwise. Let $\*v\in\mathbb{R}^d$ be such a vector that every entry of $\*v$ is the absolute value of negative entry of $\*x$ and 0 otherwise. The setting above means $\*x=\*u-\*v$. For example, 
\begin{align*}
\*x & = [2, -1]^\top\\
\*u & = [2, 0]^\top\\
\*v & = [0, 1]^\top
\end{align*}
And we have
\begin{align*}
    & \left\|\*W^t\left(\*I - \frac{\*1\*1^\top}{n}\right)\*x\right\|_1\\
    = & \left\|\*W^t\left(\*I - \frac{\*1\*1^\top}{n}\right)(\*u-\*v)\right\|_1\\
    \leq & \left\|\*W^t\left(\*I - \frac{\*1\*1^\top}{n}\right)\*u\right\|_1 + \left\|\*W^t\left(\*I - \frac{\*1\*1^\top}{n}\right)\*v\right\|_1\\
    = & \*1^\top \*u\left\|\*W^t\frac{\*u}{\*1^\top \*u} - \frac{\*1}{n}\right\|_1 + \*1^\top \*v\left\|\*W^t\frac{\*v}{\*1^\top \*v} - \frac{\*1}{n}\right\|_1\\
    \leq & 2(\*1^\top \*u + \*1^\top \*v)d(t)\\
    \leq & 2d(t)\left\|\*x\right\|_1\\
\end{align*}
Considering the definition of L1-norm, we have
\begin{displaymath}
    \left\|\*W^t\left(\*I - \frac{\*1\*1^\top}{n}\right)\right\|_1=\max\frac{\left\|\*W^t\left(\*I - \frac{\*1\*1^\top}{n}\right)\*x\right\|_1}{\left\|\*x\right\|_1}\leq 2d(t)
\end{displaymath}
According to a well-known results on the theory of Markov Chains,\footnote{Again, see \textit{Markov Chains and Mixing Times} for more details.} $d(l\tmix{})\leq 2^{-l}$ holds for any non-negative integer $l$, so we have
\begin{displaymath}
    \left\|\*W^t\left(\*I - \frac{\*1\*1^\top}{n}\right)\right\|_1\leq 2d(t) \leq 2d\left(\frac{t}{\tmix{}}\cdot \tmix{}\right) \leq 2d\left(\left\lfloor\frac{t}{\tmix{}}\right\rfloor \tmix{}\right) \leq 2\cdot 2^{-\left\lfloor\frac{t}{\tmix{}}\right\rfloor}
\end{displaymath}
That completes the proof.
\end{proof}
Additionally, based on standard results in the theory of reversible Markov Chains, we also have\footnote{Detailed analysis and proofs of this result can be found in chapter 12.2 of \textit{Markov Chains and Mixing Times}.}
\begin{align*}
\tmix{}\leq\log\left(\frac{1}{\frac{1}{4}\cdot\frac{1}{n}}\right)\frac{1}{1-\rho}\leq \frac{\log(4n)}{1-\rho}.
\end{align*}

\subsection{AD-PSGD}
Note that unlike D-PSGD, here $\*W_k$ can be different at each update step and usually each individually have spectral radius $\rho = 1$, so we can't expect to get a bound in terms of a bound on the spectral gap as we did in Theorems~\ref{Moniqua convergence rate} and~\ref{thmMoniquaD2}.
Instead, we require the following condition, which is inspired by the literature on Markov chain Monte Carlo methods: for some constant $\tmix{}$ (here $\tmix{}$ is the same as $\tmix{}$ in the paper) and for any $k$ and any non-negative vector $\*\mu \in \mathbb{R}^d$ such that $\*1^\top \*\mu = 1$, it must hold that
\[
    \left\| \left( \prod_{i=1}^{\tmix{}} \*W_{k+i} \right) \*\mu - \frac{\*1}{n} \right\|_1 \le \frac{1}{2}.
\]
We call this constant $\tmix{}$ because it is effectively the \emph{mixing time} of the time-inhomogeneous Markov chain with transition probability matrix $\*W_k$ at time $k$.
Note that this condition is more general than those used in previous work on AD-PSGD because it does not require that the $\*W_k$ are sampled independently or in an unbiased manner.
Based on the above analysis, we can prove the following lemma, which is analogous to the lemma used in the synchronous case.
\begin{lemma}
For any $k\geq 0$ and for any $b\geq a\geq 0$, there exists $\tmix{}$ such that
\begin{align*}
\left\|\prod_{q=a}^{b}\*W_q\left(\*I - \frac{\*1\*1^\top}{n}\right)\right\|_1 \leq 2\cdot 2^{-\left\lfloor\frac{b-a+1}{\tmix{}}\right\rfloor}
\end{align*}
\end{lemma}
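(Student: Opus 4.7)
My plan is to adapt the proof of Lemma~\ref{lemmafixedmc} to the time-inhomogeneous product $\prod_{q=a}^{b}\*W_q$, exploiting the mixing condition for the time-varying chain stated just above the lemma.

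First, I would decompose any $\*x\in\mathbb{R}^n$ as $\*x = \*u - \*v$ into its positive and negative parts (with disjoint supports), so $\*1^\top\*u + \*1^\top\*v = \|\*x\|_1$. Because each $\*W_q$ is doubly stochastic, their product fixes $\*1$, which lets me rewrite $\bigl(\prod_{q=a}^{b}\*W_q\bigr)(\*I-\*1\*1^\top/n)\*u = \*1^\top\*u\cdot\bigl(\prod_{q=a}^{b}\*W_q\,(\*u/\*1^\top\*u) - \*1/n\bigr)$, and similarly for $\*v$. The triangle inequality then reduces the lemma to proving, for every probability vector $\*\mu$,
\begin{align*}
\left\|\prod_{q=a}^{b}\*W_q\,\*\mu - \tfrac{\*1}{n}\right\|_1 \leq 2\cdot 2^{-\lfloor (b-a+1)/\tmix{}\rfloor}.
\end{align*}

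Second, I would write $b-a+1 = \ell \tmix{} + r$ with $0\leq r < \tmix{}$, and partition the product into $\ell$ consecutive blocks of length exactly $\tmix{}$ followed (or preceded) by the remainder of length $r$. Doubly stochastic matrices map the probability simplex into itself, so the intermediate vector after any prefix of the $\*W_q$'s is still a probability vector to which the hypothesis can be applied. To chain blocks together into the desired geometric bound, I would invoke the standard Dobrushin-coefficient argument from Markov-chain theory: define $\bar d(t;k) := \tfrac12 \sup_{\*\mu,\*\nu\in\Delta}\bigl\|\prod_{q=k+1}^{k+t}\*W_q(\*\mu-\*\nu)\bigr\|_1$, so that $d(t;k)\leq \bar d(t;k)\leq 2\,d(t;k)$ (where $d(t;k)$ is the max total-variation distance from $\*1/n$) and the submultiplicativity $\bar d(s+t;k)\leq \bar d(s;k)\,\bar d(t;k+s)$ holds. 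The hypothesis gives $d(\tmix{};k)\leq 1/4$ uniformly in $k$ and hence $\bar d(\tmix{};k)\leq 1/2$; iterating across the $\ell$ blocks yields $\bar d(\ell \tmix{};k)\leq 2^{-\ell}$ and therefore $d(\ell\tmix{};k)\leq 2^{-\ell}$, which translates into the $\ell_1$ bound $2\cdot 2^{-\ell}$ of the displayed inequality above.

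Third, the tail block of length $r < \tmix{}$ is absorbed by observing that doubly stochastic matrices are $\ell_1$-contractions on $\{\*\xi:\*1^\top\*\xi=0\}$: multiplying by additional $\*W_q$ never increases $\|\*\nu-\*1/n\|_1$ for a probability vector $\*\nu$. This is exactly why the bound in the lemma uses the floor $\lfloor (b-a+1)/\tmix{}\rfloor$ rather than $(b-a+1)/\tmix{}$. Combining the three steps and dividing by $\|\*x\|_1$ yields the claimed operator-norm bound.

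The main obstacle is the submultiplicativity step. The stated hypothesis only gives an \emph{absolute} $\ell_1$-bound of $1/2$ on a single block, not a \emph{relative} contraction between two arbitrary starting distributions, so going from the one-shot bound to geometric decay across $\ell$ blocks hinges on the $d$-versus-$\bar d$ comparison and the multiplicative property of $\bar d$. Verifying these in the time-inhomogeneous setting requires checking that the classical coupling argument only uses that each $\*W_q$ is stochastic (and not any homogeneity), which is still true here, and being careful about the direction of multiplication since the hypothesis in the paper applies $\*W_q$ from the left while the paper's updates apply $\*W_k$ from the right.
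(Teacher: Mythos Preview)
Your approach is correct and reaches the same conclusion, but it takes a different route from the paper's proof. You work at the level of probability distributions, invoking the Dobrushin coefficient $\bar d$ and its submultiplicativity together with the $d\leq \bar d\leq 2d$ comparison to chain blocks. The paper instead stays entirely at the operator-norm level: it first shows, via the same $\*u-\*v$ decomposition you describe, that a single $\tmix{}$-block satisfies $\bigl\|\prod_{q=1}^{\tmix{}}\*W_{q+k}(\*I-\*1\*1^\top/n)\bigr\|_1\le 1/2$; then it observes that $(\*I-\*1\*1^\top/n)$ is idempotent and commutes with every doubly stochastic $\*W_q$, so the full product factors exactly as
\[
\prod_{q=a}^{b}\*W_q\Bigl(\*I-\tfrac{\*1\*1^\top}{n}\Bigr)
=\Bigl[\textstyle\prod_{\text{block }1}\*W_q(\*I-\tfrac{\*1\*1^\top}{n})\Bigr]\cdots\Bigl[\textstyle\prod_{\text{block }\ell}\*W_q(\*I-\tfrac{\*1\*1^\top}{n})\Bigr]\Bigl[\textstyle\prod_{\text{remainder}}\*W_q(\*I-\tfrac{\*1\*1^\top}{n})\Bigr],
\]
whence submultiplicativity of the induced norm gives $2^{-\ell}$ from the full blocks and a trivial bound of $2$ on the remainder. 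This sidesteps precisely the ``main obstacle'' you flag: there is no need to pass from an absolute one-block bound to a relative contraction via $\bar d$, because the projection can be reinserted between blocks for free. Your Dobrushin route works too (and the inhomogeneous submultiplicativity of $\bar d$ is indeed standard), but the paper's algebraic factorization is shorter and avoids that verification entirely.
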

\begin{proof}
Note that for any $\*x\in\mathbb{R}^d$, and let $\*u$ and $\*v$ be two vectors having same definition as in Lemma~\ref{lemmafixedmc} with respect to $\*x$, then we have for any $k$
\begin{align*}
& \left\|\prod_{q=1}^{\tmix{}}\*W_{q+k}\left(\*I - \frac{\*1\*1^\top}{n}\right)\*x\right\|_1\\
= & \left\|\prod_{q=1}^{\tmix{}}\*W_{q+k}\left(\*I - \frac{\*1\*1^\top}{n}\right)(\*u-\*v)\right\|_1\\
\leq & \left\|\prod_{q=1}^{\tmix{}}\*W_{q+k}\left(\*I - \frac{\*1\*1^\top}{n}\right)\*u\right\|_1 + \left\|\prod_{q=1}^{\tmix{}}\*W_{q+k}\left(\*I - \frac{\*1\*1^\top}{n}\right)\*v\right\|_1\\
= & \*1^\top \*u\left\|\prod_{q=1}^{\tmix{}}\*W_{q+k}\frac{\*u}{\*1^\top \*u} - \frac{\*1}{n}\right\|_1 + \*1^\top \*v\left\|\prod_{q=1}^{\tmix{}}\*W_{q+k}\frac{\*v}{\*1^\top \*v} - \frac{\*1}{n}\right\|_1\\
\leq & \frac{1}{2}(\*1^\top \*u + \*1^\top \*v)\\
\leq & \frac{1}{2}\left\|\*x\right\|_1
\end{align*}
Considering the definition of the induced $\ell_1$ operator norm, we have
\begin{displaymath}
\left\|\prod_{q=1}^{\tmix{}}\*W_{q+k}\left(\*I - \frac{\*1\*1^\top}{n}\right)\right\|_1=\max_{\*x} \frac{\left\|\prod_{q=1}^{\tmix{}}\*W_{q+k}\left(\*I - \frac{\*1\*1^\top}{n}\right)\*x\right\|_1}{\left\|\*x\right\|_1}\leq \frac{1}{2}
\end{displaymath}
As a result, from the submultiplicativity of the matrix induced norm, we obtain
\begin{align*}
& \left\|\prod_{q=a}^{b}\*W_q\left(\*I - \frac{\*1\*1^\top}{n}\right)\right\|_1\\
\leq & \left\|\prod_{q=1}^{\tmix{}}\*W_{a-1+q}\left(\*I - \frac{\*1\*1^\top}{n}\right)\right\|_1\cdots\left\|\prod_{q=1}^{\tmix{}}\*W_{\cdots+q}\left(\*I - \frac{\*1\*1^\top}{n}\right)\right\|_1\cdot\left\|\prod_{q=1}^{t_r}\*W_{\cdots + q}\left(\*I - \frac{\*1\*1^\top}{n}\right)\right\|_1\\
\leq & 2^{-\left\lfloor\frac{b-a+1}{\tmix{}}\right\rfloor}\left\|\prod_{q=1}^{t_r}\*W_{\cdots + q}\left(\*I - \frac{\*1\*1^\top}{n}\right)\right\|_1
\end{align*}
where $t_r=(b-a+1)\bmod \tmix{}$.
Note that
\begin{align*}
\left\|\prod_{q=1}^{t_r}\*W_q\left(\*I - \frac{\*1\*1^\top}{n}\right)\right\|_1\leq 1-\frac{1}{n} + (n-1)\frac{1}{n} = 2-\frac{2}{n} \leq 2
\end{align*}
Putting it back we obtain
\begin{align*}
\left\|\prod_{q=a}^{b}\*W_{\cdots+q}\left(\*I - \frac{\*1\*1^\top}{n}\right)\right\|_1 \leq 2\cdot 2^{-\left\lfloor\frac{b-a+1}{\tmix{}}\right\rfloor}
\end{align*}
That completes the proof.
\end{proof}
Note that in the analysis of Moniqua on AD-PSGD (Section~\ref{Async Moniqua}), we will use this lemma as an assumption.
\section{Moniqua on D-PSGD (Proof to Theorem~\ref{Moniqua convergence rate} and \ref{arbitrary lemma})}\label{Moniqua}

\subsection{Notations}
For convenience, we adopt the following notation
\begin{align*}
\*X_k & = \left[\*x_{k,1}, \cdots, \*x_{k,n}\right],\hspace{2em}
\*{\hat{X}}_k = \left[\*{\hat{x}}_{k,1}, \cdots, \*{\hat{x}}_{k,n}\right]\\
\*{\tilde{G}}_k & = \left[\*{\tilde{g}}_{k,1}, \cdots, \*{\tilde{g}}_{k,n}\right], \hspace{2em}
\*G_k = \left[\*g_{k,1}, \cdots, \*g_{k,n}\right]\\
\*{\overline{X}} &= \*X\frac{\*1}{n}, \forall \*X\in\mathbb{R}^{d\times n}, \hspace{2em}
\*\Omega_k = (\*{\hat{X}}_k-\*X_k)(\*W-\*I)
\end{align*}
where $\*g_{k,i}$ denotes gradient computed via the whole dataset $\mathcal{D}_i$ and $\*x_{k,i}$

From a local view, the update rule on worker $i$ at iteration $k$ can be written as
\begin{displaymath}
	\*x_{k+1,i} \leftarrow \*x_{k,i} + \sum\nolimits_{j\in\mathcal{N}_i}\left(\*{\hat{x}}_{k,j} - \*{\hat{x}}_{k,i}\right)\*W_{ji} - \alpha_k \*{\tilde{g}}_{k,i}
\end{displaymath}
which is equivalent to
\begin{equation}\label{update rule}
\*x_{k+1,i} = \sum_{j=1}^{n}\*x_{k,j}\*W_{ji} - \alpha_k \*{\tilde{g}}_{k,i} +  \sum_{j=1}^{n}\left((\*{\hat{x}}_{k,j} - \*x_{k,j}) - (\*{\hat{x}}_{k,i} - \*x_{k,i})\right)\*W_{ji}
\end{equation}
with a more compact notation, this can be expressed as:
\begin{equation}\label{global update}
\*{X}_{k+1} = \*X_k + \*{\hat{X}}_k(\*W-\*I) - \alpha_k \*{\tilde{G}}_k= \*X_k\*W - \alpha_k \*{\tilde{G}}_k + (\*{\hat{X}}_k-\*X_k)(\*W-\*I)
\end{equation}
\subsection{Proof to Theorem~\ref{Moniqua convergence rate}.}
\begin{proof}
From Lemma~\ref{dss ready lemma} we have
\begin{align*}
	\sum_{k=0}^{K-1}\alpha_k\mathbb{E}\left\|\nabla f(\*{\overline{X}}_k)\right\|^2\leq & 4(\mathbb{E}f(\*0) - \mathbb{E}f^*) + \frac{2\sigma^2L}{n}\sum_{k=0}^{K-1}\alpha_k^2 + \frac{8\sigma^2L^2}{(1-\rho)^2}\sum_{k=0}^{K-1}\alpha_k^3 + \frac{24\varsigma^2L^2}{(1-\rho)^2}\sum_{k=0}^{K-1}\alpha_k^3\\
	& + \frac{8L^2}{n(1-\rho )^2}\sum_{k=0}^{K-1}\alpha_k\mathbb{E}\left\|\*\Omega_k\right\|^2_F
\end{align*}
Note that
\begin{align*}
\sum_{k=0}^{K-1}\alpha_k\mathbb{E}\left\|\*\Omega_k\right\|^2_F
= \sum_{k=0}^{K-1}\alpha_k\sum_{i=1}^{n}\mathbb{E}\left\|\sum_{j=1}^{n}\left((\*{\hat{x}}_{k,j}-\*x_{k,j}) - (\*{\hat{x}}_{k,i}-\*x_{k,i})\right)\*W_{ji}\right\|^2
\overset{\text{Lemma}~\ref{modifynoise},\ref{dss_bound}}{\leq} 4\sum_{k=0}^{K-1}\alpha_k\delta^2 B_{\theta_k}^2nd
\end{align*}
By using Lemma~\ref{dss_bound} and by assigning $\delta=\frac{1-\eta\rho}{8C_\alpha^2\eta\log(16n)+2(1-\eta\rho)}$, we obtain
\begin{align*}
\sum_{k=0}^{K-1}\alpha_k\mathbb{E}\left\|\*\Omega_k\right\|^2_F
\leq \frac{G_\infty^2dn}{C_\alpha^2}\sum_{k=0}^{K-1}\alpha_k^3
\end{align*}
Pushing it back we obtain
\begin{align*}
	\sum_{k=0}^{K-1}\alpha_k\mathbb{E}\left\|\nabla f(\*{\overline{X}}_k)\right\|^2\leq & 4(\mathbb{E}f(\*0) - \mathbb{E}f^*) + \frac{2\sigma^2L}{n}\sum_{k=0}^{K-1}\alpha_k^2 + \frac{8\sigma^2L^2}{(1-\rho)^2}\sum_{k=0}^{K-1}\alpha_k^3 + \frac{24\varsigma^2L^2}{(1-\rho)^2}\sum_{k=0}^{K-1}\alpha_k^3\\
	& + \frac{8G_\infty^2dL^2}{(1-\rho)^2C_\alpha^2}\sum_{k=0}^{K-1}\alpha_k^3
\end{align*}
That completes the proof.
\end{proof}

\subsection{Proof to Corollary~\ref{Moniqua convergence rate corollary}.}
\begin{proof}
When $\alpha_k=\alpha$, $C_\alpha=\eta=1$, and we have:
\begin{displaymath}
	\frac{1}{K}\sum_{k=0}^{K-1}\mathbb{E}\left\|\nabla f(\*{\overline{X}}_k)\right\|^2\leq \frac{4(f(\*0) - f^*)}{\alpha K} + \frac{2\alpha L}{n}\sigma^2 +  \frac{8\alpha^2L^2\left(\sigma^2 + 3\varsigma^2\right)}{(1-\rho)^2} + \frac{8\alpha^2G_\infty^2d L^2}{(1-\rho)^2}
\end{displaymath}
By setting $\alpha=\frac{1}{\varsigma^{\frac{2}{3}}K^{\frac{1}{3}}+\sigma\sqrt{\frac{K}{n}}+2L}$, we have
\begin{align*}
\frac{1}{K}\sum_{k=0}^{K-1}\mathbb{E}\left\|\nabla f(\*{\overline{X}}_k)\right\|^2 \leq & \frac{8(f(\*0) - f^*)L}{K} + \frac{4\sigma(f(\*0) - f^* + L/2)}{\sqrt{nK}} +  \frac{4\varsigma^{\frac{2}{3}}(f(\*0)-f^*)}{K^{\frac{2}{3}}}\\
& + \frac{8L^2\sigma^2n}{(1-\rho)^2(\sigma^2K+4nL^2)} + \frac{24L^2\varsigma^{\frac{2}{3}}}{(1-\rho)^2K^{\frac{2}{3}}}+ \frac{8G_\infty^2dn L^2}{(1-\rho)^2(\sigma^2K+4nL^2)}\\
\lesssim & \frac{1}{K} + \frac{\sigma}{\sqrt{nK}} + \frac{\varsigma^{\frac{2}{3}}}{K^{\frac{2}{3}}} + \frac{\sigma^2n}{\sigma^2K+n}+\frac{G_\infty^2dn}{\sigma^2K+n}
\end{align*}
That completes the proof of Corollary 1.
\end{proof}

\subsection{Lemma for Moniqua on D-PSGD}

\begin{lemma}\label{modifynoise}
If $\|\*x_{t,i}-\*x_{t,j}\|_\infty<{\theta_t}$, $\forall i, j$ holds at iteration $t$, then
\begin{align*}
    \left\|\sum_{j=1}^{n}\left((\*{\hat{x}}_{t,j}-\*x_{t,j}) - (\*{\hat{x}}_{t,i}-\*x_{t,i})\right)\*W_{ji}\right\|_\infty \leq \frac{4\delta}{1-2\delta}{\theta_t}
\end{align*}
\end{lemma}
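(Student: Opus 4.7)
The plan is to reduce this lemma to two applications of the earlier Lemma~\ref{modulo_numerical_lemma}, one for the self-recovery $\*{\hat{x}}_{t,i}$ and one for the neighbor recovery $\*{\hat{x}}_{t,j}$, and then combine them with the triangle inequality plus the row-stochasticity of $\*W$. Since every operation in Algorithm~\ref{Moniqua algo} is element-wise, it suffices to work coordinate-by-coordinate and then take the $\ell_\infty$ norm.

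First I would observe that the formula on line~4 of the algorithm, $\*{\hat{x}}_{t,i}=\*q_{t,i} B_{\theta_t}-\*x_{t,i}\bmod B_{\theta_t}+\*x_{t,i}$, is precisely the recovery map in Lemma~\ref{modulo_numerical_lemma} applied with $x=y=\*x_{t,i}$ in each coordinate. Since $|x-y|=0<\theta_t$ trivially, Lemma~\ref{modulo_numerical_lemma} gives the coordinate-wise bound $\|\*{\hat{x}}_{t,i}-\*x_{t,i}\|_\infty \le \theta_t\cdot\tfrac{2\delta}{1-2\delta}$. Similarly, the recovery on line~5, $\*{\hat{x}}_{t,j}=(\*q_{t,j}B_{\theta_t}-\*x_{t,i})\bmod B_{\theta_t}+\*x_{t,i}$, is the same map with $x=\*x_{t,j}$ and $y=\*x_{t,i}$; the hypothesis $\|\*x_{t,i}-\*x_{t,j}\|_\infty<\theta_t$ is exactly what is required so Lemma~\ref{modulo_numerical_lemma} applies in every coordinate, yielding $\|\*{\hat{x}}_{t,j}-\*x_{t,j}\|_\infty \le \theta_t\cdot\tfrac{2\delta}{1-2\delta}$.

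To finish, I would use the triangle inequality and the fact that $\*W$ is doubly stochastic (so $W_{ji}\ge 0$ and $\sum_{j=1}^{n} W_{ji}=1$) to write
\begin{align*}
\left\|\sum_{j=1}^{n}\!\bigl((\*{\hat{x}}_{t,j}-\*x_{t,j})-(\*{\hat{x}}_{t,i}-\*x_{t,i})\bigr)\*W_{ji}\right\|_\infty
&\le \sum_{j=1}^{n} W_{ji}\Bigl(\|\*{\hat{x}}_{t,j}-\*x_{t,j}\|_\infty + \|\*{\hat{x}}_{t,i}-\*x_{t,i}\|_\infty\Bigr) \\
&\le \sum_{j=1}^{n} W_{ji}\cdot \frac{4\delta\,\theta_t}{1-2\delta} \;=\; \frac{4\delta\,\theta_t}{1-2\delta}.
\end{align*}

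There is no real obstacle here; the only subtle point worth flagging in the write-up is that the hypothesis $\|\*x_{t,i}-\*x_{t,j}\|_\infty<\theta_t$ is precisely what is needed so that the ``$|x-y|<\theta$'' precondition of Lemma~\ref{modulo_numerical_lemma} holds coordinate-wise when recovering a neighbor's model, and that the local self-recovery trivially satisfies this precondition. Everything else is routine triangle-inequality manipulation and the row-sum property of $\*W$.
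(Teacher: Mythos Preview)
Your proposal is correct and follows essentially the same approach as the paper: bound $\|\*{\hat{x}}_{t,j}-\*x_{t,j}\|_\infty$ and $\|\*{\hat{x}}_{t,i}-\*x_{t,i}\|_\infty$ separately by $\delta B_{\theta_t}=\tfrac{2\delta}{1-2\delta}\theta_t$, then combine via the triangle inequality and the stochasticity of $\*W$. The only cosmetic difference is that the paper handles the self term $\*{\hat{x}}_{t,i}-\*x_{t,i}$ by directly expanding line~4 as $B_{\theta_t}\bigl(\mathcal{Q}_\delta(w)-w\bigr)$ and invoking the quantizer bound~(\ref{property quantizer}), whereas you cast it as the special case $x=y$ of Lemma~\ref{modulo_numerical_lemma}; these are equivalent. (Minor wording: the identity $\sum_{j} W_{ji}=1$ you use is the \emph{column} sum; your ``row-stochastic/row-sum'' phrasing is harmless here since $\*W$ is doubly stochastic, but you may want to correct it.)
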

\begin{proof}
Let $ B_{\theta_t}=\frac{2}{1-2\delta}{\theta_t}$, based on the algorithm, we obtain
\begin{align*}
    \*{\hat{x}}_{t,j} &= \left( B_{\theta_t}\mathcal{Q}_\delta\left(\frac{\*x_{t,j}}{ B_{\theta_t}}\bmod 1\right)-\*x_{t,i}\right)\bmod  B_{\theta_t} + \*x_{t,i}\\
    \*{\hat{x}}_{t,i} & \overset{\text{Lemma 2}}{=}  B_{\theta_t}\mathcal{Q}_\delta\left(\frac{\*x_{t,i}}{ B_{\theta_t}}\bmod 1\right) -  B_{\theta_t}\left(\frac{\*x_{t,i}}{ B_{\theta_t}}\bmod 1\right) + \*x_{t,i}
\end{align*}
We start from
\begin{align*}
    \left\|\sum_{j=1}^{n}\left((\*{\hat{x}}_{t,j}-\*x_{t,j}) - (\*{\hat{x}}_{t,i}-\*x_{t,i})\right)\*W_{ji}\right\|_\infty
    \leq & \sum_{j=1}^{n}\*W_{ji}\left\|(\*{\hat{x}}_{t,j}-\*x_{t,j}) - (\*{\hat{x}}_{t,i}-\*x_{t,i})\right\|_\infty\\
    \leq & \sum_{j=1}^{n}\*W_{ji}\left\|\*{\hat{x}}_{t,j}-\*x_{t,j}\right\|_\infty + \sum_{j=1}^{n}\*W_{ji}\left\|\*{\hat{x}}_{t,i}-\*x_{t,i}\right\|_\infty
\end{align*}
On the first hand, due to Lemma 2 we obtain
\begin{align*}
    \left\|\*{\hat{x}}_{t,j}-\*x_{t,j}\right\|_\infty \leq \delta B_{\theta_t}
\end{align*}
on the other hand,
\begin{align*}
    \left\|\*{\hat{x}}_{t,i}-\*x_{t,i}\right\|_\infty = \left\| B_{\theta_t}\mathcal{Q}_\delta\left(\frac{\*x_{t,i}}{ B_{\theta_t}}\bmod 1\right) -  B_{\theta_t}\left(\frac{\*x_{t,i}}{ B_{\theta_t}}\bmod 1\right)\right\|_\infty\leq \delta B_{\theta_t}
\end{align*}
Putting it back, we obtain
\begin{align*}
    \left\|\sum_{j=1}^{n}\left((\*{\hat{x}}_{t,j}-\*x_{t,j}) - (\*{\hat{x}}_{t,i}-\*x_{t,i})\right)\*W_{ji}\right\|_\infty
    \leq 2\delta B_{\theta_t}=\frac{4\delta}{1-2\delta}{\theta_t}
\end{align*}
which completes the proof.
\end{proof}

\begin{lemma}\label{lemma5}
    For any $\*X_t\in\mathbb{R}^{d\times n}$, we have
    \begin{displaymath}
        \left\|\sum_{t=0}^{k-1}\*X_t\left(\frac{\*1\*1^\top}{n} - \*W^{k-t-1}\right)\right\|^2_F\leq\left(\sum_{t=0}^{k-1}\rho^{k-t-1}\left\|\*X_t\right\|_F\right)^2
    \end{displaymath}
\end{lemma}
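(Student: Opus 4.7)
The plan is to reduce the claim to the standard spectral bound on the deviation of $\*W^s$ from its stationary projection, then apply triangle inequality and submultiplicativity of the Frobenius norm.

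First I would apply the triangle inequality termwise in the Frobenius norm:
\[
\left\|\sum_{t=0}^{k-1}\*X_t\left(\frac{\*1\*1^\top}{n} - \*W^{k-t-1}\right)\right\|_F \le \sum_{t=0}^{k-1}\left\|\*X_t\left(\frac{\*1\*1^\top}{n} - \*W^{k-t-1}\right)\right\|_F.
\]
Then, using $\|AB\|_F \le \|A\|_F\|B\|_2$, each summand is bounded by $\|\*X_t\|_F \cdot \left\|\frac{\*1\*1^\top}{n} - \*W^{k-t-1}\right\|_2$.

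The key ingredient is the bound $\left\|\frac{\*1\*1^\top}{n} - \*W^{s}\right\|_2 \le \rho^s$. This follows directly from Assumption~\ref{Assumption2}: since $\*W$ is symmetric and doubly stochastic with spectral gap controlled by $\rho$, its eigendecomposition is $\*W = \sum_{i=1}^{n} \lambda_i \*v_i\*v_i^\top$ with $\lambda_1 = 1$, $\*v_1 = \*1/\sqrt{n}$, and $|\lambda_i| \le \rho$ for $i\ge 2$. Then $\frac{\*1\*1^\top}{n} = \*v_1\*v_1^\top$ projects onto the top eigenspace, so
\[
\frac{\*1\*1^\top}{n} - \*W^{s} = -\sum_{i=2}^{n}\lambda_i^s \*v_i\*v_i^\top,
\]
whose spectral norm is $\max_{i\ge 2}|\lambda_i|^s \le \rho^s$. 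Combining with the previous two inequalities gives
\[
\left\|\sum_{t=0}^{k-1}\*X_t\left(\frac{\*1\*1^\top}{n} - \*W^{k-t-1}\right)\right\|_F \le \sum_{t=0}^{k-1}\rho^{k-t-1}\|\*X_t\|_F,
\]
and squaring both sides yields the stated bound.

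I do not anticipate a real obstacle here; the only subtlety is being careful about the direction of submultiplicativity (i.e., using $\|AB\|_F \le \|A\|_F \|B\|_2$ rather than $\|A\|_2\|B\|_F$) and noting that $\frac{\*1\*1^\top}{n} - \*W^s$ and $\*W^s - \frac{\*1\*1^\top}{n}$ have the same spectral norm, so the sign convention in the statement is immaterial.
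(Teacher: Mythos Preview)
Your proposal is correct and follows essentially the same route as the paper: triangle inequality on the Frobenius norm, the submultiplicativity bound $\|AB\|_F \le \|A\|_F\|B\|_2$, and the spectral estimate $\left\|\tfrac{\*1\*1^\top}{n}-\*W^{s}\right\|_2\le \rho^s$ from the eigendecomposition of the symmetric doubly stochastic $\*W$. The paper's own proof is terser (it simply invokes the bound $\rho^{k-t-1}$ without spelling out the eigendecomposition), but the argument is the same.
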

\begin{proof}
\begin{align*}
    \left\|\sum_{t=0}^{k-1}\*X_t\left(\frac{\*1\*1^\top}{n} - \*W^{k-t-1}\right)\right\|^2_F
    = & \left(\left\|\sum_{t=0}^{k-1}\*X_t\left(\frac{\*1\*1^\top}{n} - \*W^{k-t-1}\right)\right\|_F\right)^2\\
    \leq & \left(\sum_{t=0}^{k-1}\left\|\*X_t\left(\frac{\*1\*1^\top}{n} - \*W^{k-t-1}\right)\right\|_F\right)^2\\
    \leq & \left(\sum_{t=0}^{k-1}\left\|\*X_t\right\|_F\left\|\frac{\*1\*1^\top}{n} - \*W^{k-t-1}\right\|\right)^2\\
    \leq & \left(\sum_{t=0}^{k-1}\rho^{k-t-1}\left\|\*X_t\right\|_F\right)^2
\end{align*}
That completes the proof.
\end{proof}

\begin{lemma}\label{dss_bound}
In any iteration $k\geq 0$, and for any two worker $i$ and $j$, 
when $\delta=\frac{1-\eta\rho}{8C_\alpha^2\eta\log(16n)+2(1-\eta\rho)}$
we have:
\begin{align*}
\left\|\*X_{k}(\*e_i - \*e_j)\right\|_\infty < \frac{2\alpha_kG_\infty C_\alpha\eta\log(16n)}{1-\eta\rho}=\theta_k
\end{align*}
\end{lemma}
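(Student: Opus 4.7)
The plan is to prove the bound by strong induction on $k$, using the specific definitions of $\theta_k$ and $\delta$ to make the constants close with strict inequality. The base case $k=0$ is immediate from $\*X_0=\*0$ (Assumption \ref{Assumption4}). For the inductive step I assume $\|\*X_t(\*e_i-\*e_j)\|_\infty < \theta_t$ holds for every $t<k$ and every pair of workers $i,j$; this is precisely the hypothesis under which Lemma \ref{modifynoise} may be invoked at each past iteration.

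I would first unroll the update $\*X_{t+1} = \*X_t\*W + \*\Omega_t - \alpha_t\*{\tilde{G}}_t$ from $\*X_0=\*0$ to write
\[
\*X_k(\*e_i-\*e_j) = \sum_{t=0}^{k-1}(\*\Omega_t - \alpha_t\*{\tilde{G}}_t)\*W^{k-t-1}(\*e_i-\*e_j),
\]
and apply the elementary inequality $\|\*M\*v\|_\infty \leq \bigl(\max_l\|\*M\*e_l\|_\infty\bigr)\|\*v\|_1$ to each summand. By Lemma \ref{modifynoise} each column of $\*\Omega_t$ has infinity-norm at most $\tfrac{4\delta}{1-2\delta}\theta_t$, and by Assumption \ref{Assumption5} each column of $\alpha_t\*{\tilde{G}}_t$ has infinity-norm at most $\alpha_tG_\infty$. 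Substituting the prescribed $\delta$ and $\theta_t$ collapses $\tfrac{4\delta}{1-2\delta}\theta_t$ to exactly $\alpha_tG_\infty/C_\alpha$, so the per-column bound is $\alpha_tG_\infty(1+1/C_\alpha) \leq 2\alpha_tG_\infty$ (using $C_\alpha \geq 1$, which follows from step sizes being non-increasing).

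For the matrix-vector factor, since $(\*e_i-\*e_j)^\top\*1 = 0$ I would combine the trivial bound $\|\*W^{k-t-1}(\*e_i-\*e_j)\|_1 \leq 2$ with the spectral-gap bound $\|\*W^{k-t-1}(\*e_i-\*e_j)\|_1 \leq \sqrt{n}\|\*W^{k-t-1}(\*e_i-\*e_j)\|_2 \leq \sqrt{2n}\,\rho^{k-t-1}$. Factoring $\alpha_t \leq C_\alpha\eta^{k-t}\alpha_k$ via the step-size ratio condition and reindexing $s=k-t-1$, it remains to show
\[
\sum_{s=0}^{\infty} \eta^s \min\{2,\sqrt{2n}\,\rho^s\} < \frac{\log(16n)}{1-\eta\rho}.
\]
I would split this sum at the threshold $T = \lceil \log(\sqrt{2n})/\log(1/(\eta\rho))\rceil$, bounding the first $T$ terms trivially by $2T$ and the tail by $\sqrt{2n}(\eta\rho)^T/(1-\eta\rho) \leq 1/(1-\eta\rho)$. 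Using $\log(1/x) \geq 1-x$ gives $T \leq \log(2n)/(2(1-\eta\rho))$, so the combined bound is $(\log(2n)+1)/(1-\eta\rho) = \log(2en)/(1-\eta\rho)$, and since $2e<16$ this is strictly less than $\log(16n)/(1-\eta\rho)$. Multiplying through by the earlier per-column factor yields $\|\*X_k(\*e_i-\*e_j)\|_\infty < 2\alpha_kG_\infty C_\alpha\eta\cdot\log(16n)/(1-\eta\rho) = \theta_k$, closing the induction.

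The main obstacle is the delicate constant-matching: the explicit formula for $\delta$ is engineered so that the quantization-induced contribution equals $\alpha_tG_\infty/C_\alpha$ (not merely a constant multiple of it), and the slack between $\log(2en)$ and $\log(16n)$ is what provides the strict inequality required to propagate the induction. Any looser accounting of either the quantization error in Lemma \ref{modifynoise} or the split-sum spectral bound would fail to close the induction.
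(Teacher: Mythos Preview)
Your proof is correct and follows essentially the same route as the paper: strong induction on $k$, unrolling $\*X_k$ from $\*X_0=\*0$, bounding the columns of $\*\Omega_t$ via Lemma~\ref{modifynoise} under the inductive hypothesis, factoring step sizes through the ratio condition $\alpha_t/\alpha_k\le C_\alpha\eta^{k-t}$, and controlling $\sum_s\eta^s\min\{2,c\sqrt{n}\rho^s\}$ by splitting at the crossover index. The only cosmetic difference is that you fold the gradient and quantization contributions into a single per-column bound $2\alpha_tG_\infty$ (via $1+1/C_\alpha\le 2$) before summing, whereas the paper tracks them separately and combines at the end; both close the induction with the stated choice of~$\delta$ (note that your bound $T\le \log(2n)/(2(1-\eta\rho))$ should pick up a $+1$ from the ceiling, just as the paper's does).
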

\begin{proof}
We use mathematical induction to prove this:

I. When $k=0$, $\left\|\*X_0(\*e_i - \*e_j)\right\|_\infty=0<\theta_0, \forall i,j$

II. Suppose $\left\|\*X_{t}(\*e_i - \*e_j)\right\|_\infty<\theta_t, \forall t\leq k, \forall i,j$, we obtain
\begin{align*}
    \left\|\*X_{k+1}(\*e_i - \*e_j)\right\|_\infty
    = & \left\|\sum_{t=0}^{k}(-\alpha_{t}\*G_t+\*\Omega_t)\*W^{k-t}(\*e_i - \*e_j)\right\|_\infty\\
    \leq & \sum_{t=0}^{k}\left\|-\alpha_{t}\*G_t\right\|_{1,\infty}\left\|\*W^{k-t}(\*e_i - \*e_j)\right\|_1 + \sum_{t=0}^{k}\left\|\*\Omega_t\right\|_{1,\infty}\left\|\*W^{k-t}(\*e_i - \*e_j)\right\|_1\\
    \overset{\text{Lemma}~\ref{modifynoise}}{\leq} & \sum_{t=0}^{k}\alpha_tG_\infty\left\|\*W^{k-t}(\*e_i - \*e_j)\right\|_1 + \frac{4\delta}{1-2\delta}\sum_{t=0}^{k}\theta_t\left\|\*W^{k-t}(\*e_i - \*e_j)\right\|_1\\
    \leq & \alpha_{k+1}G_\infty\sum_{t=0}^{k}\frac{\alpha_{k-t}}{\alpha_{k+1}}\left\|\*W^t(\*e_i - \*e_j)\right\|_1 + \frac{4\delta\theta_{k}}{1-2\delta}\sum_{t=0}^{k}\frac{\theta_t}{\theta_{k}}\left\|\*W^{k-t}(\*e_i - \*e_j)\right\|_1\\
    < & \alpha_{k+1}G_\infty C_\alpha\eta\sum_{t=0}^{\infty}\eta^{t}\left\|\*W^t(\*e_i - \*e_j)\right\|_1 + \frac{4\delta C_\alpha\theta_{k}}{1-2\delta}\sum_{t=0}^{\infty}\eta^t\left\|\*W^t(\*e_i - \*e_j)\right\|_1
\end{align*}
For any $t\geq 0$, on one hand
\begin{align*}
    \left\|\*W^{t}(\*e_i - \*e_j)\right\|_1 \leq \sqrt{n}\left\|\*W^{t}(\*e_i - \*e_j)\right\|_2 \leq \sqrt{n}\left\|\*W^{t}\*e_i-\frac{\*1}{n}\right\|+\sqrt{n}\left\|\*W^{t}\*e_j-\frac{\*1}{n}\right\| \leq 2\sqrt{n}\rho^t
\end{align*}
where the last step holds due to the diagonalizability of $\*W$. On the other hand,
\begin{align*}
    \left\|\*W^{t}(\*e_i - \*e_j)\right\|_1 \leq \*1^\top \*W^t\*e_i + \*1^\top \*W^t\*e_i = \*1^\top \*e_i + \*1^\top \*e_j = 2
\end{align*}
As a result
\begin{align*}
    \eta^t\left\|\*W^{t}(\*e_i - \*e_j)\right\|_1 \leq \min\{2\sqrt{n}(\eta\rho)^t, 2\}
\end{align*}
Let $T_0=\left\lceil\frac{-\log(\sqrt{n})}{\log(\eta\rho)}\right\rceil$, so that $\sqrt{n}(\eta\rho)^{T_0}\leq 1$, then we have
\begin{align*}
    \sum_{t=0}^{\infty}\eta^t\left\|\*W^{t}(\*e_i - \*e_j)\right\|_1 = & \sum_{t=0}^{T_0-1}\eta^t\left\|\*W^{t}(\*e_i - \*e_j)\right\|_1 + \sum_{t=T_0}^{\infty}\eta^t\left\|\*W^{t}(\*e_i - \*e_j)\right\|_1\\
    \leq & \sum_{t=0}^{T_0-1}2 + \sum_{t=0}^{\infty}2\sqrt{n}(\eta\rho)^{t+T_0}\\
    \leq & 2\left\lceil\frac{-\log(\sqrt{n})}{\log(\eta\rho)}\right\rceil + \sum_{t=0}^{\infty}2\left(\sqrt{n}(\eta\rho)^{T_0}\right)(\eta\rho)^{t}\\
    \leq & \frac{2\log(\sqrt{n})}{1-\eta\rho} + 2 + \frac{2}{1-\eta\rho}\\
    \leq & \frac{\log(16n)}{1-\eta\rho}
\end{align*}
As a result, we have
\begin{align*}
    \left\|\*X_{k+1}(\*e_i - \*e_j)\right\|_\infty < \frac{\alpha_{k+1}G_\infty C_\alpha\eta\log(16n)}{1-\eta\rho}+ \frac{4\delta C_\alpha}{1-2\delta}\cdot\frac{\log(16n)}{1-\eta\rho}\theta_{k}
\end{align*}
with $\delta=\frac{1-\eta\rho}{8C_\alpha^2\eta\log(16n)+2(1-\eta\rho)}$,
\begin{align*}
\left\|\*X_{k+1}(\*e_i - \*e_j)\right\|_\infty < & 
\frac{\alpha_{k+1}G_\infty C_\alpha\eta\log(16n)}{1-\eta\rho} + \frac{4\delta C_\alpha}{1-2\delta}\cdot\frac{\log(16n)}{1-\eta\rho}\cdot\frac{2\alpha_{k}G_\infty C_\alpha\eta\log(16n)}{1-\eta\rho}\\
\leq & \frac{\alpha_{k+1}G_\infty C_\alpha\eta\log(16n)}{1-\eta\rho} + \frac{4\delta C_\alpha}{1-2\delta}\cdot\frac{\log(16n)}{1-\eta\rho}\cdot\frac{2\alpha_{k+1}C_\alpha \eta G_\infty C_\alpha\eta\log(16n)}{1-\eta\rho}\\
\leq & \frac{2\alpha_{k+1}G_\infty C_\alpha\eta\log(16n)}{1-\eta\rho}=\theta_{k+1}
\end{align*}
Combining I and II, we complete the proof.
\end{proof}

\begin{lemma}\label{dss ready lemma}
The running average of the gradient norm has the following bound:
\begin{align*}
	\sum_{k=0}^{K-1}\alpha_k\mathbb{E}\left\|\nabla f(\*{\overline{X}}_k)\right\|^2\leq & 4(\mathbb{E}f(\*0) - \mathbb{E}f^*) + \frac{2\sigma^2L}{n}\sum_{k=0}^{K-1}\alpha_k^2 + \frac{8\sigma^2L^2}{(1-\rho)^2}\sum_{k=0}^{K-1}\alpha_k^3 + \frac{24\varsigma^2L^2}{(1-\rho)^2}\sum_{k=0}^{K-1}\alpha_k^3\\
	& + \frac{8L^2}{n(1-\rho )^2}\sum_{k=0}^{K-1}\alpha_k\mathbb{E}\left\|\*\Omega_k\right\|^2_F
\end{align*}
\end{lemma}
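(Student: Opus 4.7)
The plan is to prove this lemma by running the standard $L$-smoothness descent argument on the averaged iterate $\*{\overline{X}}_k$ and then bounding the consensus error by unrolling the matrix-form recursion and invoking Lemma~\ref{lemma5}. Right-multiplying the update rule (\ref{global update}) by $\frac{\*1}{n}$ and using $\*W\frac{\*1}{n}=\frac{\*1}{n}$ and $(\*W-\*I)\frac{\*1}{n}=\*0$ gives $\*{\overline{X}}_{k+1}=\*{\overline{X}}_k-\tfrac{\alpha_k}{n}\sum_i\*{\tilde{g}}_{k,i}$, so the consensus/quantization matrix $\*\Omega_k$ drops out of the mean update. This is the cornerstone that makes Moniqua only appear through the consensus error.

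First I would apply Assumption~\ref{Assumption1} to $f$ (which inherits $L$-smoothness from the $f_i$) at $\*{\overline{X}}_k$, take expectations, and use unbiasedness of $\*{\tilde{g}}_{k,i}$ together with the polarization identity $2\langle a,b\rangle=\|a\|^2+\|b\|^2-\|a-b\|^2$. This yields, after standard manipulations, an inequality of the schematic form
\begin{align*}
\mathbb{E} f(\*{\overline{X}}_{k+1}) \le \mathbb{E} f(\*{\overline{X}}_k) - \tfrac{\alpha_k}{2}\mathbb{E}\|\nabla f(\*{\overline{X}}_k)\|^2 + \tfrac{\alpha_k L^2}{2n}\sum_{i=1}^n\mathbb{E}\|\*x_{k,i}-\*{\overline{X}}_k\|^2 + \tfrac{L\alpha_k^2\sigma^2}{2n},
\end{align*}
where I have absorbed the cross terms using $L$-smoothness of each $f_i$ to convert $\|\nabla f_i(\*x_{k,i})-\nabla f_i(\*{\overline{X}}_k)\|^2$ into $L^2\|\*x_{k,i}-\*{\overline{X}}_k\|^2$. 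Rearranging and telescoping $k=0,\dots,K-1$ (using Assumption~\ref{Assumption4} so that $\mathbb{E} f(\*{\overline{X}}_0)=f(\*0)$ and $\mathbb{E} f(\*{\overline{X}}_K)\ge f^*$) accounts for the first two terms on the right-hand side.

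The main obstacle is bounding the aggregate consensus error $\sum_k \alpha_k \mathbb{E}\|\*X_k(\*I-\tfrac{\*1\*1^\top}{n})\|_F^2$. Unrolling (\ref{global update}) from the zero initialization gives
\begin{align*}
\*X_k\Bigl(\*I-\tfrac{\*1\*1^\top}{n}\Bigr) = \sum_{t=0}^{k-1}\bigl(-\alpha_t\*{\tilde{G}}_t+\*\Omega_t\bigr)\Bigl(\*W^{k-t-1}-\tfrac{\*1\*1^\top}{n}\Bigr).
\end{align*}
Splitting the stochastic gradient as $\*{\tilde{G}}_t=(\*{\tilde{G}}_t-\*G_t)+(\*G_t-\nabla f(\*{\overline{X}}_t)\*1^\top)+\nabla f(\*{\overline{X}}_t)\*1^\top$ and using $(\nabla f(\*{\overline{X}}_t)\*1^\top)(\*W^{k-t-1}-\tfrac{\*1\*1^\top}{n})=\*0$, I can apply Lemma~\ref{lemma5} to get a convolution with geometric weights $\rho^{k-t-1}$. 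The unbiased-variance part contributes $\sigma^2/n$ per term, the outer-variance part contributes $3\varsigma^2+3L^2\|\*X_t(\*I-\tfrac{\*1\*1^\top}{n})\|_F^2/n$ via Assumption~\ref{Assumption3} (after another smoothness step), and the Moniqua error contributes $\|\*\Omega_t\|_F^2$. Summing the geometric series (using $\sum_t\rho^t\le 1/(1-\rho)$ and a Cauchy–Schwarz-type bound $(\sum_t \rho^{k-t-1}a_t)^2\le\frac{1}{1-\rho}\sum_t\rho^{k-t-1}a_t^2$), exchanging the order of summation over $k$ and $t$, and assuming the step size is small enough that the recursive consensus term is absorbable, yields exactly the three noise/consensus terms stated in the lemma. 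The cleanest way to handle the recursive absorption is to impose $\alpha_k\le (1-\rho)/(4L)$ (already implied by the eventual choice in Corollary~\ref{Moniqua convergence rate corollary}), which lets me move a $\tfrac{1}{2}\mathbb{E}\|\*X_k-\*{\overline{X}}_k\*1^\top\|_F^2$ term from the right to the left and close the recursion cleanly, producing the factor $(1-\rho)^{-2}$ in the final bound.
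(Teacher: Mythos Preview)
Your proposal is correct and follows essentially the same skeleton as the paper: descent lemma on $\*{\overline{X}}_k$, polarization, telescoping, then bounding the consensus error by unrolling (\ref{global update}), invoking Lemma~\ref{lemma5}, summing the geometric weights (the paper packages this step as Lemma~\ref{lemmasequence}), and absorbing the recursive consensus term under a step-size smallness condition (the paper uses $1-\tfrac{6\alpha_k^2L^2}{(1-\rho)^2}\ge\tfrac12$, which is equivalent to your $\alpha_k\lesssim(1-\rho)/L$).

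The one genuine difference is your projection trick: you subtract $\nabla f(\*{\overline{X}}_t)\*1^\top$ \emph{before} applying Lemma~\ref{lemma5}, using $\*1^\top\bigl(\*W^{k-t-1}-\tfrac{\*1\*1^\top}{n}\bigr)=\*0$, so no $\|\nabla f(\*{\overline{X}}_t)\|^2$ term ever appears in the consensus bound. The paper instead bounds $\|\*{\tilde{G}}_k\|_F^2$ directly (its Lemma~\ref{dss_lemma2}), retaining a $3n\,\mathbb{E}\|\nabla f(\*{\overline{X}}_k)\|^2$ contribution that is then moved to the left side at the very end; this second absorption is precisely the source of the factor $4$ (rather than $2$) in front of $f(\*0)-f^*$ in the stated lemma. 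Your route would yield slightly sharper constants, which of course still implies the stated inequality. One minor slip: the unbiased-variance part contributes $n\sigma^2$ to the Frobenius norm (not $\sigma^2/n$); the $n$ later cancels against the $L^2/n$ coefficient from the descent step, so the final $\sigma^2 L^2/(1-\rho)^2$ term comes out correctly.
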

\begin{proof}
Let $\*1$ denote a n-dimensional vector with all the entries be 1. And we have
\begin{displaymath}
    \*{\overline{X}}_{k+1} = (\*X_k\*W-\alpha_k\*{\tilde{G}}_k + \*\Omega_k)\frac{\*1}{n}= \*{\overline{X}}_k-\alpha_k\overline{\*{\tilde{G}}}_k + (\*{\hat{X}}_k-\*X_k)(\*W-\*I)\frac{\*1}{n}=\*{\overline{X}}_k-\alpha_k\overline{\*{\tilde{G}}}_k
\end{displaymath}
And by Taylor Expansion, we have
\begin{align*}
\mathbb{E}f(\*{\overline{X}}_{k+1}) & = \mathbb{E}f\left(\frac{(\*X_k\*W-\alpha_k\*{\tilde{G}}_k + \*\Omega_k)\*1}{n}\right)\\
& = \mathbb{E}f\left(\*{\overline{X}}_k - \alpha_k\overline{\*{\tilde{G}}}_k\right)\\
&\leq \mathbb{E}f(\*{\overline{X}}_{k}) - \alpha_k\mathbb{E}\langle\nabla f(\overline{\*X}_k), \overline{\*{\tilde{G}}}_k\rangle + \frac{\alpha_k^2L}{2}\mathbb{E}\left\|\overline{\*{\tilde{G}}}_k\right\|^2
\end{align*}
And for the last term, we have
\begin{align*}
\mathbb{E}\left\|\overline{\*{\tilde{G}}}_k\right\|^2 & = \mathbb{E}\left\|\frac{\sum_{i=1}^{n}\*{\tilde{g}}_{k,i}}{n}\right\|^2\\
& = \mathbb{E}\left\|\frac{\sum_{i=1}^{n}\*{\tilde{g}}_{k,i} - \sum_{i=1}^{n}\*g_{k,i}}{n} + \frac{\sum_{i=1}^{n}\*g_{k,i}}{n}\right\|^2\\
& = \mathbb{E}\left\|\frac{\sum_{i=1}^{n}\*{\tilde{g}}_{k,i} - \sum_{i=1}^{n}\*g_{k,i}}{n}\right\|^2 + \mathbb{E}\left\|\frac{\sum_{i=1}^{n}\*g_{k,i}}{n}\right\|^2 +  \mathbb{E}\left\langle\frac{\sum_{i=1}^{n}\*{\tilde{g}}_{k,i} - \sum_{i=1}^{n}\*g_{k,i}}{n} + \frac{\sum_{i=1}^{n}\*g_{k,i}}{n}\right\rangle\\
& = \mathbb{E}\left\|\frac{\sum_{i=1}^{n}\*{\tilde{g}}_{k,i} - \sum_{i=1}^{n}\*g_{k,i}}{n}\right\|^2 + \mathbb{E}\left\|\frac{\sum_{i=1}^{n}\*g_{k,i}}{n}\right\|^2\\
& \overset{\text{Assumption 3}}{=} \frac{1}{n^2}\sum_{i=1}^{n}\mathbb{E}\left\|\*{\tilde{g}}_{k,i} - \*g_{k,i}\right\|^2 + \mathbb{E}\left\|\frac{\sum_{i=1}^{n}\*g_{k,i}}{n}\right\|^2\\
& \overset{\text{Assumption 3}}{\leq} \frac{\sigma^2}{n} + \mathbb{E}\left\|\frac{\sum_{i=1}^{n}\*g_{k,i}}{n}\right\|^2
\end{align*}
Putting it back, we obtain
\begin{align*}
\mathbb{E}f(\overline{\*X}_{k+1}) & \leq \mathbb{E}f(\overline{\*X}_{k}) - \alpha_k\mathbb{E}\langle\nabla f(\overline{\*X}_k), \overline{\*{\tilde{G}}}_k\rangle + \frac{\alpha_k^2L}{2n}\sigma^2 + \frac{\alpha_k^2L}{2}\mathbb{E}\left\|\frac{\sum_{i=1}^{n}\*g_{k,i}}{n}\right\|^2\\
& = \mathbb{E}f(\overline{\*X}_{k}) - \frac{\alpha_k - \alpha_k^2L}{2}\mathbb{E}\left\|\overline{\*G}_k\right\|^2 - \frac{\alpha_k}{2}\mathbb{E}\left\|\nabla f(\overline{\*X}_k)\right\|^2 + \frac{\alpha_k^2L}{2n}\sigma^2+\frac{\alpha_k}{2}\mathbb{E}\left\|\nabla f(\overline{\*X}_k) - \overline{\*G}_k\right\|^2
\end{align*}
where the last step comes from $2\langle \*a,\*b\rangle = \|\*a\|^2 + \|\*b\|^2 = \|\*a-\*b\|^2$
And
\begin{align*}
\mathbb{E}\left\|\nabla f(\overline{\*X}_k) - \overline{\*G}_k\right\|^2& \leq \frac{1}{n}\sum_{i=1}^{n}\mathbb{E}\left\|\nabla f_i\left(\frac{\sum_{i^{'}=1}^{n}\*x_{k,i^{'}}}{n}\right) - \nabla f_i(\*x_{k,i})\right\|^2\\
& \overset{\text{Assumption 1}}{\leq} \frac{L^2}{n}\sum_{i=1}^{n}\mathbb{E}\left\|\frac{\sum_{i^{'}=1}^{n}\*x_{k,i^{'}}}{n} - \*x_{k,i}\right\|^2\\
& = \frac{L^2}{n}\sum_{i=1}^{n}\mathbb{E}\left\|\overline{\*X}_k - \*x_{k,i}\right\|^2
\end{align*}
by Lipschitz assumption, we obtain
\begin{displaymath}
\frac{\alpha_k - \alpha_k^2L}{2}\mathbb{E}\left\|\overline{\*G}_k\right\|^2 + \frac{\alpha_k}{2}\mathbb{E}\left\|\nabla f(\overline{\*X}_k)\right\|^2 
\leq \mathbb{E}f(\overline{\*X}_{k}) - \mathbb{E}f(\overline{\*X}_{k+1}) + \frac{\alpha_k^2L}{2n}\sigma^2 + \frac{\alpha_k L^2}{2n}\sum_{i=1}^{n}\mathbb{E}\left\|\overline{\*X}_k - \*x_{k,i}\right\|^2
\end{displaymath}
summing over from $k=0$ to $K-1$ on both sides, we have
\begin{align*}
\sum_{k=0}^{K-1}(\alpha_k - \alpha_k^2L)\mathbb{E}\left\|\overline{\*G}_k\right\|^2 + \sum_{k=0}^{K-1}\alpha_k\mathbb{E}\left\|\nabla f(\overline{\*X}_k)\right\|^2
\leq & 2(\mathbb{E}f(\overline{\*X}_0) - \mathbb{E}f(\overline{\*X}_K)) + \frac{\sigma^2L}{n}\sum_{k=0}^{K-1}\alpha_k^2\\
& + \frac{L^2}{n}\sum_{k=0}^{K-1}\sum_{i=1}^{n}\alpha_k\mathbb{E}\left\|\overline{\*X}_k - \*x_{k,i}\right\|^2
\end{align*}
From Lemma~\ref{dss_lemma1}, we have
\begin{align*}
    & \sum_{k=0}^{K-1}(\alpha_k - \alpha_k^2L)\mathbb{E}\left\|\overline{\*G}_k\right\|^2 + \sum_{k=0}^{K-1}\alpha_k\mathbb{E}\left\|\nabla f(\overline{\*X}_k)\right\|^2\\ 
    \leq & 2(\mathbb{E}f(\overline{\*X}_0) - \mathbb{E}f(\overline{\*X}_K)) + \frac{\sigma^2L}{n}\sum_{k=0}^{K-1}\alpha_k^2 + \frac{L^2}{n}\sum_{k=0}^{K-1}\sum_{i=1}^{n}\alpha_k\mathbb{E}\left\|\overline{\*X}_k - \*x_{k,i}\right\|^2\\
	\leq & 2(\mathbb{E}f(\overline{\*X}_0) - \mathbb{E}f(\overline{\*X}_K)) + \frac{\sigma^2L}{n}\sum_{k=0}^{K-1}\alpha_k^2 + \frac{4\sigma^2L^2}{(1-\rho)^2}\sum_{k=0}^{K-1}\alpha_k^3 + \frac{12\varsigma^2L^2}{(1-\rho)^2}\sum_{k=0}^{K-1}\alpha_k^3 + \frac{12L^2}{(1-\rho)^2}\sum_{k=0}^{K-1}\alpha_k^3\mathbb{E}\left\|\nabla f(\overline{\*X}_k)\right\|^2\\
	& + \frac{4L^2}{n(1-\rho )^2}\sum_{k=0}^{K-1}\alpha_k\mathbb{E}\left\|\*\Omega_k\right\|^2_F
\end{align*}
Rearrange the terms, we have
\begin{align*}
	\sum_{k=0}^{K-1}\alpha_k\mathbb{E}\left\|\nabla f(\overline{\*X}_k)\right\|^2\leq & 4(\mathbb{E}f(\*0) - \mathbb{E}f^*) + \frac{2\sigma^2L}{n}\sum_{k=0}^{K-1}\alpha_k^2 + \frac{8\sigma^2L^2}{(1-\rho)^2}\sum_{k=0}^{K-1}\alpha_k^3 + \frac{24\varsigma^2L^2}{(1-\rho)^2}\sum_{k=0}^{K-1}\alpha_k^3\\
	& + \frac{8L^2}{n(1-\rho )^2}\sum_{k=0}^{K-1}\alpha_k\mathbb{E}\left\|\*\Omega_k\right\|^2_F
\end{align*}
and that completes the proof
\end{proof}

\begin{lemma}\label{dss_lemma1}
\begin{align*}
\frac{L^2}{n}\sum_{k=0}^{K-1}\sum_{i=1}^{n}\alpha_k\mathbb{E}\left\|\overline{\*X}_k - \*x_{k,i}\right\|^2
\leq & \frac{4\sigma^2L^2}{(1-\rho)^2}\sum_{k=0}^{K-1}\alpha_k^3 + \frac{12\varsigma^2L^2}{(1-\rho)^2}\sum_{k=0}^{K-1}\alpha_k^3 + \frac{12L^2}{(1-\rho)^2}\sum_{k=0}^{K-1}\alpha_k^3\mathbb{E}\left\|\nabla f(\overline{\*X}_k)\right\|^2\\
& + \frac{4L^2}{n(1-\rho )^2}\sum_{k=0}^{K-1}\alpha_k\mathbb{E}\left\|\*\Omega_k\right\|^2_F
\end{align*}
\end{lemma}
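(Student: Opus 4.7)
The proof strategy is to unroll the update rule for $\*X_k$ so that the consensus deviation can be expressed in terms of past stochastic gradients and quantization perturbations, and then apply Lemma~\ref{lemma5} to exploit the spectral gap of $\*W$.

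First, I would iterate the global update (\ref{global update}) starting from $\*X_0=\*0$ to get $\*X_k = \sum_{t=0}^{k-1}(-\alpha_t\*{\tilde{G}}_t+\*\Omega_t)\*W^{k-t-1}$. Since $\*W$ is doubly stochastic, $\*W^m \tfrac{\*1\*1^\top}{n}=\tfrac{\*1\*1^\top}{n}$, so the centered iterate satisfies
\[
\sum_{i=1}^n \|\overline{\*X}_k - \*x_{k,i}\|^2 \;=\; \Big\|\*X_k\Big(\tfrac{\*1\*1^\top}{n}-\*I\Big)\Big\|_F^2 \;=\; \Big\|\sum_{t=0}^{k-1}(-\alpha_t\*{\tilde{G}}_t+\*\Omega_t)\Big(\tfrac{\*1\*1^\top}{n}-\*W^{k-t-1}\Big)\Big\|_F^2.
\]
Using $\|a+b\|^2\le 2\|a\|^2+2\|b\|^2$ separates this into a gradient contribution and a quantization-perturbation contribution.

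Next, I would apply Lemma~\ref{lemma5} to each contribution, followed by Cauchy--Schwarz with weights $\rho^{k-t-1}$, yielding $\big(\sum_{t=0}^{k-1}\rho^{k-t-1}\alpha_t\|\*{\tilde G}_t\|_F\big)^2 \le \tfrac{1}{1-\rho}\sum_{t=0}^{k-1}\rho^{k-t-1}\alpha_t^2\|\*{\tilde G}_t\|_F^2$, and analogously for $\*\Omega_t$. Then I multiply by $\alpha_k L^2/n$, sum over $k$, and exchange the order of summation. Since $\{\alpha_k\}$ is non-increasing, $\sum_{k>t}\alpha_k\rho^{k-t-1}\le \alpha_t/(1-\rho)$, which produces the $\alpha_t^3$ factor on the gradient side and a single $\alpha_t$ on the $\*\Omega_t$ side, matching the form of the target inequality.

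To finish, I would bound the gradient term by $\mathbb{E}\|\*{\tilde G}_t\|_F^2 \le n\sigma^2 + \mathbb{E}\|\*G_t\|_F^2$ using Assumption~\ref{Assumption3}, and then expand
\[
\|\nabla f_i(\*x_{t,i})\|^2 \le 3L^2\|\*x_{t,i}-\overline{\*X}_t\|^2 + 3\|\nabla f_i(\overline{\*X}_t)-\nabla f(\overline{\*X}_t)\|^2 + 3\|\nabla f(\overline{\*X}_t)\|^2
\]
via Assumption~\ref{Assumption1}. Summing over $i$ contributes $3n\varsigma^2$ (outer variance), $3n\|\nabla f(\overline{\*X}_t)\|^2$, and $3L^2\sum_i\|\overline{\*X}_t-\*x_{t,i}\|^2$. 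The main obstacle is the last term: it reintroduces the consensus quantity on the right-hand side, multiplied by a factor of order $\alpha_t^2 L^2/(1-\rho)^2$. The absorption back into the left-hand side is what forces a mild smallness condition on the step size (implicitly satisfied in the outer theorem's choice of $\alpha_k$); once that is done, careful bookkeeping of the geometric-series constants yields exactly the numerical coefficients $4$, $12$, and $12$ stated in the lemma, together with the $4/(n(1-\rho)^2)$ coefficient on the $\|\*\Omega_k\|_F^2$ contribution.
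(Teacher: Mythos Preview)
Your proposal is correct and follows essentially the same approach as the paper: unroll the update, separate gradient and quantization contributions, apply Lemma~\ref{lemma5}, then collapse the double geometric sum to pick up the $(1-\rho)^{-2}$ factor, decompose $\mathbb{E}\|\*{\tilde G}_t\|_F^2$ via the variance and Lipschitz assumptions, and finally absorb the reappearing consensus term under a step-size smallness condition. The only cosmetic difference is that the paper packages the Cauchy--Schwarz\,+\,order-exchange step and the gradient decomposition into separate auxiliary lemmas (Lemma~\ref{lemmasequence} and Lemma~\ref{dss_lemma2}), whereas you inline both arguments.
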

\begin{proof}
	\begin{align*}
	&\sum_{k=0}^{K-1}\sum_{i=1}^{n}\alpha_k\mathbb{E}\left\|\overline{\*X}_k - \*x_{k,i}\right\|^2\\
	= & \sum_{k=0}^{K-1}\sum_{i=1}^{n}\alpha_k\mathbb{E}\left\|\*X_k\left(\frac{\*1}{n} - \*e_i\right)\right\|^2\\
	= & \sum_{k=1}^{K-1}\sum_{i=1}^{n}\alpha_k\mathbb{E}\left\|\left(\*X_{k-1}\*W - \alpha \*{\tilde{G}}_{k-1} +  \*\Omega_{k-1}\right)\left(\frac{\*1}{n} - \*e_i\right)\right\|^2\\
	\overset{\*x_{0,i}=\*0}{=} &\sum_{k=1}^{K-1}\sum_{i=1}^{n}\alpha_k\mathbb{E}\left\|\sum_{t=0}^{k-1}\left(-\alpha_t\*{\tilde{G}}_t +  \*\Omega_t\right)\left(\frac{\*1}{n} - \*W^{k-t-1}\*e_i\right)\right\|^2\\
	\leq & 2\sum_{k=1}^{K-1}\alpha_k\sum_{i=1}^{n}\mathbb{E}\left\|\sum_{t=0}^{k-1}\alpha_t\*{\tilde{G}}_t\left(\frac{\*1}{n}-\*W^{k-t-1}\*e_i\right)\right\|^2 + 2\sum_{k=1}^{K-1}\alpha_k\sum_{i=1}^{n}\mathbb{E}\left\|\sum_{t=0}^{k-1}\*\Omega_t\left(\frac{\*1}{n} - \*W^{k-t-1}\*e_i\right)\right\|^2\\
	= & 2\sum_{k=1}^{K-1}\alpha_k\mathbb{E}\left\|\sum_{t=0}^{k-1}\alpha_t\*{\tilde{G}}_t\left(\frac{\*1\*1^\top}{n}-\*W^{k-t-1}\right)\right\|^2_F + 2\sum_{k=1}^{K-1}\mathbb{E}\left\|\sum_{t=0}^{k-1}\*\Omega_t\left(\frac{\*1\*1^\top}{n} - \*W^{k-t-1}\right)\right\|^2_F\\
	\overset{Lemma~\ref{lemma5}}{\leq} &2\sum_{k=1}^{K-1}\alpha_k\left(\sum_{t=0}^{k-1}\rho ^{k-t-1}\alpha_t\mathbb{E}\left\|\*{\tilde{G}}_t\right\|_F\right)^2 + 2\sum_{k=1}^{K-1}\alpha_k\left(\sum_{t=0}^{k-1}\rho ^{k-t-1}\mathbb{E}\left\|\*\Omega_t\right\|_F\right)^2\\
    \overset{Lemma~\ref{lemmasequence}}{\leq} & \frac{2}{(1-\rho)^2}\sum_{k=0}^{K-1}\alpha_k^3\mathbb{E}\left\|\*{\tilde{G}}_k\right\|^2_F + \frac{2}{(1-\rho )^2}\sum_{k=0}^{K-1}\alpha_k\mathbb{E}\left\|\*\Omega_k\right\|^2_F\\
    \overset{Lemma~\ref{dss_lemma2}}{\leq} & \frac{2}{(1-\rho)^2}\left(n\sigma^2\sum_{k=0}^{K-1}\alpha_k^3 + 3L^2\sum_{k=0}^{K-1}\sum_{i=1}^{n}\alpha_k^3\mathbb{E}\left\|\overline{\*X}_k - \*x_{k,i}\right\|^2 + 3n\varsigma^2\sum_{k=0}^{K-1}\alpha_k^3 + 3n\sum_{k=0}^{K-1}\alpha_k^3\mathbb{E}\left\|\nabla f(\overline{\*X}_k)\right\|^2\right)\\
    & + \frac{2}{(1-\rho )^2}\sum_{k=0}^{K-1}\alpha_k\mathbb{E}\left\|\*\Omega_k\right\|^2_F
	\end{align*}
Rearrange the terms, we have
\begin{align*}
	\sum_{k=0}^{K-1}\alpha_k\left(1 -\frac{6\alpha_k^2L^2}{(1-\rho)^2}\right)\sum_{i=1}^{n}\mathbb{E}\left\|\overline{\*X}_k - \*x_{k,i}\right\|^2	\leq & \frac{2n\sigma^2}{(1-\rho)^2}\sum_{k=0}^{K-1}\alpha_k^3 + \frac{6n\varsigma^2}{(1-\rho)^2}\sum_{k=0}^{K-1}\alpha_k^3 + \frac{6n}{(1-\rho)^2}\sum_{k=0}^{K-1}\alpha_k^3\mathbb{E}\left\|\nabla f(\overline{\*X}_k)\right\|^2\\
	& + \frac{2}{(1-\rho )^2}\sum_{k=0}^{K-1}\alpha_k\mathbb{E}\left\|\*\Omega_k\right\|^2_F
\end{align*}
Let $1 - \frac{6\alpha_k^2L^2}{(1-\rho)^2}\geq \frac{1}{2}$, we have
\begin{align*}
\frac{L^2}{n}\sum_{k=0}^{K-1}\sum_{i=1}^{n}\alpha_k\mathbb{E}\left\|\overline{\*X}_k - \*x_{k,i}\right\|^2
\leq & \frac{4\sigma^2L^2}{(1-\rho)^2}\sum_{k=0}^{K-1}\alpha_k^3 + \frac{12\varsigma^2L^2}{(1-\rho)^2}\sum_{k=0}^{K-1}\alpha_k^3 + \frac{12L^2}{(1-\rho)^2}\sum_{k=0}^{K-1}\alpha_k^3\mathbb{E}\left\|\nabla f(\overline{\*X}_k)\right\|^2\\
& + \frac{4L^2}{n(1-\rho )^2}\sum_{k=0}^{K-1}\alpha_k\mathbb{E}\left\|\*\Omega_k\right\|^2_F
\end{align*}
That completes the proof.
\end{proof}

\begin{lemma}\label{dss_lemma2}
	\begin{displaymath}
	\sum_{k=0}^{K-1}\alpha_k^3\mathbb{E}\left\|\*{\tilde{G}}_k\right\|^2_F \leq n\sigma^2\sum_{k=0}^{K-1}\alpha_k^3 + 3L^2\sum_{k=0}^{K-1}\sum_{i=1}^{n}\alpha_k^3\mathbb{E}\left\|\overline{\*X}_k - \*x_{k,i}\right\|^2 + 3n\varsigma^2\sum_{k=0}^{K-1}\alpha_k^3 + 3n\sum_{k=0}^{K-1}\alpha_k^3\mathbb{E}\left\|\nabla f(\overline{\*X}_k)\right\|^2
	\end{displaymath}
\end{lemma}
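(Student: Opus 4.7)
The target lemma is a standard mean-squared-norm expansion of the stochastic gradient matrix, so the plan is purely algebraic: decompose $\|\*{\tilde{G}}_k\|_F^2$ worker-by-worker, pull out the sample variance via Assumption~\ref{Assumption3}, and then expand the deterministic local gradient against the global gradient using a three-way triangle-inequality split.

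First I would write $\|\*{\tilde{G}}_k\|_F^2 = \sum_{i=1}^{n}\|\*{\tilde{g}}_{k,i}\|^2$ and, for each $i$, use the additive decomposition $\*{\tilde{g}}_{k,i}=(\*{\tilde{g}}_{k,i}-\*g_{k,i})+\*g_{k,i}$ where $\*g_{k,i}=\nabla f_i(\*x_{k,i})$. Conditioning on $\*x_{k,i}$, the cross term vanishes since $\mathbb{E}[\*{\tilde{g}}_{k,i}-\*g_{k,i}\mid\*x_{k,i}]=\*0$, and the first factor is bounded by $\sigma^2$ by the sample-variance half of Assumption~\ref{Assumption3}. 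Summing across $i$ yields
\[
\mathbb{E}\|\*{\tilde{G}}_k\|_F^2 \le n\sigma^2 + \sum_{i=1}^{n}\mathbb{E}\|\nabla f_i(\*x_{k,i})\|^2.
\]

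Next I would further split $\nabla f_i(\*x_{k,i})$ as
\[
\nabla f_i(\*x_{k,i}) = \bigl[\nabla f_i(\*x_{k,i})-\nabla f_i(\*{\overline X}_k)\bigr]+\bigl[\nabla f_i(\*{\overline X}_k)-\nabla f(\*{\overline X}_k)\bigr]+\nabla f(\*{\overline X}_k),
\]
and apply $\|\*a+\*b+\*c\|^2\le 3\|\*a\|^2+3\|\*b\|^2+3\|\*c\|^2$. Assumption~\ref{Assumption1} (Lipschitz gradients) controls the first piece by $L^2\|\*x_{k,i}-\*{\overline X}_k\|^2$; the second piece sums (over $i$) to at most $n\varsigma^2$ by the outer-variance half of Assumption~\ref{Assumption3}; the third piece sums to $n\|\nabla f(\*{\overline X}_k)\|^2$ since it is independent of $i$. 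Multiplying the resulting per-iteration inequality by $\alpha_k^3$ and summing from $k=0$ to $K-1$ reproduces the four terms on the right-hand side of the claim with matching constants.

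This argument is mechanical rather than deep, so there is no real obstacle — the only thing to be careful about is bookkeeping of the constants, specifically that the factor of $3$ in the Young-style split propagates uniformly into the consensus, outer-variance, and full-gradient terms, and that the sample-variance term keeps its coefficient of $1$ (not $3$) because it is extracted \emph{before} the three-way split. The decomposition also uses only pointwise bounds, so no smoothing of the step size $\alpha_k$ is needed and the summation at the end is just a linearity step.
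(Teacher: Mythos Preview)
Your proposal is correct and follows essentially the same argument as the paper: split off the sample noise via unbiasedness to get the $\sigma^2$ term with coefficient $1$, then apply the three-way decomposition $\nabla f_i(\*x_{k,i}) = [\nabla f_i(\*x_{k,i})-\nabla f_i(\*{\overline X}_k)]+[\nabla f_i(\*{\overline X}_k)-\nabla f(\*{\overline X}_k)]+\nabla f(\*{\overline X}_k)$ with the Young-type bound, Lipschitzness, and the outer-variance assumption, and finally multiply by $\alpha_k^3$ and sum. Your remark that the outer-variance bound should be applied after summing over $i$ (giving $n\varsigma^2$) is in fact slightly cleaner than the paper's per-worker phrasing, but the final inequality and constants are identical.
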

\begin{proof}
	From the property of Frobenius norm, we have
	\begin{displaymath}
	\mathbb{E}\left\|\*{\tilde{G}}_k\right\|^2_F = \sum_{i=1}^{n}\mathbb{E}\left\|\*{\tilde{g}}_{k,i}\right\|^2
	\end{displaymath}
Since
\begin{align*}
	\mathbb{E}\left\|\*{\tilde{g}}_{k,i}\right\|^2= & \mathbb{E}\left\|\*{\tilde{g}}_{k,i}-\*g_{k,i}\right\|^2 + \mathbb{E}\left\|\*g_{k,i}\right\|^2\\
	= & \sigma^2 + 3\mathbb{E}\left\|\nabla f_i(\*x_{k,i}) - \nabla f_i(\overline{\*X}_k)\right\|^2 + 3\mathbb{E}\left\|\nabla f_i(\overline{\*X}_k) - \nabla f(\overline{\*X}_k)\right\|^2 + 3\mathbb{E}\left\|\nabla f(\overline{\*X}_k)\right\|^2\\
	\leq & \sigma^2 + 3L^2\mathbb{E}\left\|\overline{\*X}_k - \*x_{k,i}\right\|^2 + 3\varsigma^2 + 3\mathbb{E}\left\|\nabla f(\overline{\*X}_k)\right\|^2
	\end{align*}
	Summing from $k=0$ to $K-1$, we obtain
	\begin{align*}
	&\sum_{k=0}^{K-1}\alpha_k^3\mathbb{E}\left\|\*{\tilde{G}}_{k}\right\|^2_F\\
	= & \sum_{k=0}^{K-1}\alpha_k^3\sum_{i=1}^{n}\mathbb{E}\left\|\*{\tilde{g}}_{k,i}\right\|^2\\
	\leq & \sum_{k=0}^{K-1}\alpha_k^3\sum_{i=1}^{n}\sigma^2 + 3L^2\sum_{k=0}^{K-1}\alpha_k^3\sum_{i=1}^{n}\mathbb{E}\left\|\overline{\*X}_k - \*x_{k,i}\right\|^2 + 3\sum_{k=0}^{K-1}\alpha_k^3\sum_{i=1}^{n}\varsigma^2 + 3\sum_{k=0}^{K-1}\alpha_k^3\sum_{i=1}^{n}\mathbb{E}\left\|\nabla f(\overline{\*X}_k)\right\|^2\\
	= & n\sigma^2\sum_{k=0}^{K-1}\alpha_k^3 + 3L^2\sum_{k=0}^{K-1}\sum_{i=1}^{n}\alpha_k^3\mathbb{E}\left\|\overline{\*X}_k - \*x_{k,i}\right\|^2 + 3n\varsigma^2\sum_{k=0}^{K-1}\alpha_k^3 + 3n\sum_{k=0}^{K-1}\alpha_k^3\mathbb{E}\left\|\nabla f(\overline{\*X}_k)\right\|^2
\end{align*}
That completes the proof.
\end{proof}

\begin{lemma}\label{lemmasequence}
Given $0\leq\rho<1$ and $T$, a positive integer. Also given non-negative sequences $\{a_t\}_{t=1}^{\infty}$ and $\{b_t\}_{t=1}^{\infty}$ with $\{a_t\}_{t=1}^{\infty}$ being non-increasing, the following inequalities holds:
\begin{align*}
    \sum_{t=1}^{k}a_t\left(\sum_{s=1}^{t}\rho^{-\left\lfloor\frac{t-s}{T}\right\rfloor}b_s\right) \leq & \frac{T}{1-\rho}\sum_{s=1}^{k}a_sb_s\\
    \sum_{t=1}^{k}a_t\left(\sum_{s=1}^{t}\rho^{-\left\lfloor\frac{t-s}{T}\right\rfloor}b_s\right)^2 \leq & \frac{T^2}{(1-\rho)^2}\sum_{s=1}^{k}a_sb_s^2
\end{align*}
\end{lemma}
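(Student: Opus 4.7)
The plan is to prove both inequalities by the standard ``swap the order of summation, use monotonicity, then sum a geometric series'' technique, with the second inequality following from the first via Cauchy--Schwarz. Throughout, I will treat the exponent on $\rho$ as $\lfloor(t-s)/T\rfloor$ (the natural reading that makes the kernel a decaying geometric factor, consistent with the earlier bounds $\rho^{k-t-1}$ whose sums this lemma is invoked to control).

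For the first inequality I would swap the order of summation to get
\[
\sum_{t=1}^{k} a_t \sum_{s=1}^{t} \rho^{\lfloor(t-s)/T\rfloor} b_s \;=\; \sum_{s=1}^{k} b_s \sum_{t=s}^{k} a_t\, \rho^{\lfloor(t-s)/T\rfloor}.
\]
Then I would use that $\{a_t\}$ is non-increasing to bound $a_t \le a_s$ in the inner sum, extend the upper limit to infinity, and evaluate the resulting geometric-type sum by grouping the $T$ consecutive values of $t-s$ that share a common floor:
\[
\sum_{j=0}^{\infty} \rho^{\lfloor j/T\rfloor} \;=\; T\sum_{m=0}^{\infty}\rho^{m} \;=\; \frac{T}{1-\rho}.
\]
Combining these two steps immediately yields the first inequality.

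For the second inequality I would first apply Cauchy--Schwarz ``inside'' the squared sum, splitting the kernel as $\rho^{\lfloor(t-s)/T\rfloor} = \rho^{\lfloor(t-s)/T\rfloor/2}\cdot\rho^{\lfloor(t-s)/T\rfloor/2}$, to obtain
\[
\Bigl(\sum_{s=1}^{t}\rho^{\lfloor(t-s)/T\rfloor} b_s\Bigr)^{2} \;\le\; \Bigl(\sum_{s=1}^{t}\rho^{\lfloor(t-s)/T\rfloor}\Bigr)\Bigl(\sum_{s=1}^{t}\rho^{\lfloor(t-s)/T\rfloor} b_s^{2}\Bigr) \;\le\; \frac{T}{1-\rho}\sum_{s=1}^{t}\rho^{\lfloor(t-s)/T\rfloor} b_s^{2}.
\]
Substituting this back and then applying the first inequality with $b_s$ replaced by $b_s^{2}$ produces exactly the factor $T^{2}/(1-\rho)^{2}$ and completes the proof.

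There is no real obstacle here: both steps are elementary once one recognises the grouping trick that turns $\sum_j \rho^{\lfloor j/T\rfloor}$ into $T/(1-\rho)$. The only place to be careful is the sign/convention of the floor exponent so that the geometric sum converges, and the fact that monotonicity of $\{a_t\}$ (not of $\{b_t\}$) is what lets us pull $a_s$ out of the inner sum after swapping. Everything else is mechanical.
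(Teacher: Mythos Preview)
Your proposal is correct and follows essentially the same route as the paper. The first inequality is handled identically: swap the order of summation, use $a_t\le a_s$, and evaluate $\sum_{j\ge 0}\rho^{\lfloor j/T\rfloor}=T/(1-\rho)$ by grouping. For the second inequality the paper expands the square as a double sum and applies $b_sb_r\le(b_s^2+b_r^2)/2$ before separating the $r$-sum, whereas you use Cauchy--Schwarz directly; both arrive at the same intermediate bound $\frac{T}{1-\rho}\sum_t a_t\sum_s \rho^{\lfloor(t-s)/T\rfloor}b_s^2$ and then invoke the first inequality, so the difference is purely cosmetic.
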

\begin{proof}
Firstly,
\begin{align*}
    S_k = \sum_{t=1}^{k}a_t\left(\sum_{s=1}^{t}\rho^{-\left\lfloor\frac{t-s}{T}\right\rfloor}b_s\right) = \sum_{s=1}^{k}\sum_{t=s}^{k}\alpha_t\rho^{-\left\lfloor\frac{t-s}{T}\right\rfloor}b_s \leq \sum_{s=1}^{k}a_sb_s\sum_{t=0}^{T-1}\sum_{m=0}^{\infty}\rho^m \leq \frac{T}{1-\rho}\sum_{s=1}^{k}a_sb_s
\end{align*}
further we have
\begin{align*}
    &\sum_{t=1}^{k}a_t\left(\sum_{s=1}^{t}\rho^{-\left\lfloor\frac{t-s}{T}\right\rfloor}b_s\right)^2 = \sum_{t=1}^{k}a_t\sum_{s=1}^{t}\rho^{-\left\lfloor\frac{t-s}{T}\right\rfloor}b_s\sum_{r=1}^{t}\rho^{-\left\lfloor\frac{t-r}{T}\right\rfloor}b_r = \sum_{t=1}^{k}a_t\sum_{s=1}^{t}\sum_{r=1}^{t}\rho^{-\left\lfloor\frac{t-s}{T}\right\rfloor+\left\lfloor\frac{t-r}{T}\right\rfloor}b_sb_r\\
    \leq & \sum_{t=1}^{k}a_t\sum_{s=1}^{t}\sum_{r=1}^{t}\rho^{-\left\lfloor\frac{t-s}{T}\right\rfloor+\left\lfloor\frac{t-r}{T}\right\rfloor}\frac{b_s^2 + b_r^2}{2} = \sum_{t=1}^{k}a_t\sum_{s=1}^{t}\sum_{r=1}^{t}\rho^{-\left\lfloor\frac{t-s}{T}\right\rfloor+\left\lfloor\frac{t-r}{T}\right\rfloor}b_s^2\\
    \leq & \sum_{t=1}^{k}a_t\sum_{s=1}^{t}b_s^2\rho^{-\left\lfloor\frac{t-s}{T}\right\rfloor}\sum_{r=1}^{t}\rho^{-\left\lfloor\frac{t-r}{T}\right\rfloor}\leq \sum_{t=1}^{k}a_t\sum_{s=1}^{t}b_s^2\rho^{-\left\lfloor\frac{t-s}{T}\right\rfloor}\sum_{r=0}^{T-1}\sum_{m=0}^{\infty}\rho^m\\
    \leq & \frac{T}{1-\rho}\sum_{t=1}^{k}a_t\sum_{s=1}^{t}\rho^{-\left\lfloor\frac{t-s}{T}\right\rfloor}b_s^2 \overset{\text{Using $S_k$}}{\leq} \frac{T^2}{(1-\rho)^2}\sum_{s=1}^{k}a_sb_s^2
\end{align*}
That completes the proof.
\end{proof}

\subsection{Proof to Theorem~\ref{arbitrary lemma}.}
\begin{proof}
Let $\overline{\rho}$ denote the spectral gap of matrix $\overline{\*W}$, it is straightforward to know that $\overline{\rho}=\gamma\rho+(1-\gamma)$.
we first use mathematical induction to prove at iteration $\forall k\leq K$, for any worker $i$ and $j$, with probability $(1-\epsilon)^k$
\begin{align*}
    \|\*X_k(\*e_i-\*e_j)\|_\infty < \theta = \frac{2\alpha\log(16n)G_\infty}{\gamma(1-\rho)}
\end{align*}
where $\gamma=\frac{2}{1-\rho + \frac{16\delta^2}{(1-2\delta)^2}\cdot\frac{32\log(4n)}{1-\rho}\log\left(\frac{1}{\epsilon}\right)}$.

I. When $k=0$, $\left\|\*X_0(\*e_i - \*e_j)\right\|_\infty=0<\theta$

II. Suppose $\left\|\*X_t(\*e_i - \*e_j)\right\|_\infty<\theta$ holds for $\forall t\leq k$, then for $k+1$ we have
\begin{align*}
\left\|\*X_{k+1}(\*e_i - \*e_j)\right\|_\infty
= & \left\|\left(\*X_k\overline{\*W} - \alpha \*{\tilde{G}}_k + \gamma\*\Omega_k\right)(\*e_i - \*e_j)\right\|_\infty\\
\overset{\*X_0=0}{=} & \left\|\sum_{t=0}^{k}\left(-\alpha\*{\tilde{G}}_t+\gamma\*\Omega_t\right)\overline{W}^{k-t}(\*e_i - \*e_j)\right\|_\infty\\
\leq & \left\|\sum_{t=0}^{k}\alpha\*{\tilde{G}}_t\overline{\*W}^{k-t}(\*e_i - \*e_j)\right\|_\infty + \left\|\sum_{t=0}^{k}\gamma\*\Omega_t\overline{\*W}^{k-t}(\*e_i - \*e_j)\right\|_\infty
\end{align*}
We bound these two terms seperately. First from Lemma~\ref{dss_bound} we know that
\begin{equation}\label{upperbound}
    \sum_{t=0}^{\infty}\left\|\overline{\*W}^{t}(\*e_i - \*e_j)\right\|_1 < \frac{\log(16n)}{1-\overline{\rho}} = \frac{\log(16n)}{\gamma(1-\rho)}
\end{equation}
then we have for the first term,
\begin{align*}
    \left\|\sum_{t=0}^{k}\alpha\*{\tilde{G}}_t\overline{\*W}^{k-t}(\*e_i - \*e_j)\right\|_\infty
    \leq & \sum_{t=0}^{k}\left\|\alpha\*{\tilde{G}}_t\right\|_{1,\infty}\left\|\overline{\*W}^{k-t}(\*e_i - \*e_j)\right\|_1\\
    \leq & \alpha G_\infty\sum_{t=0}^{\infty}\left\|\overline{\*W}^{t}(\*e_i - \*e_j)\right\|_1\\
    < & \frac{\alpha\log(16n)G_\infty}{\gamma(1-\rho)}
\end{align*}
Next, we bound the second term. Suppose the infinity norm of the term $\sum_{t=0}^{k}\gamma\*\Omega_t\overline{\*W}^{k-t}(\*e_i - \*e_j)$ is taken at coordinate $h$, then we have
\begin{align*}
    \left\|\sum_{t=0}^{k}\gamma\*\Omega_t\overline{\*W}^{k-t}(\*e_i - \*e_j)\right\|_\infty
    = & \gamma\left|\*e_h^\top\left(\sum_{t=0}^{k}\*\Omega_t\overline{\*W}^{k-t}(\*e_i - \*e_j)\right)\right|\\
    = & \gamma\left|\sum_{t=0}^{k}\*e_h^\top\left(\*\Omega_t\overline{\*W}^{k-t}(\*e_i - \*e_j)\right)\right|
\end{align*}
Let
\begin{align*}
    u_t=\sum_{m=0}^{t}\*e_h^\top\left(\*\Omega_{k-m}\overline{\*W}^m(\*e_i - \*e_j)\right)
\end{align*}
from the induction hypothesis we know that $\{u_t\}_{t\leq k}$ is a martingale sequence. Note that,
\begin{align*}
    |u_t-u_{t-1}| = &  \left|\*e_h^\top\left(\*\Omega_{k-t}\overline{\*W}^t(\*e_i - \*e_j)\right)\right|\\
    \leq & \left\|\*\Omega_{k-t}\overline{\*W}^t(\*e_i - \*e_j)\right\|_\infty\\ \overset{Equation~\ref{upperbound}}{\leq} & \left\|\*\Omega_{k-t}\right\|_{1,\infty}\min\{2\sqrt{n}\overline{\rho}^t, 2\}\\
    \leq & 2\delta B_\theta\min\{2\sqrt{n}\overline{\rho}^t, 2\}
\end{align*}
where $ B_\theta=\frac{2}{1-2\delta}\theta$, then by using Azuma's inequality we obtain
\begin{align*}
    \mathbb{P}\left[\left|\sum_{t=0}^{k}\*e_h^\top\left(\*\Omega_t\overline{\*W}^{k-t}(\*e_i - \*e_j)\right)\right| > a\right]\leq & \exp{\left(-\frac{a^2}{8\delta^2 B_\theta^2\sum_{t=0}^{k}\min\{2\sqrt{n}\overline{\rho}^t, 2\}^2}\right)}\\
    \leq & \exp{\left(-\frac{a^2}{32\delta^2 B_\theta^2\sum_{t=0}^{\infty}\min\{n\overline{\rho}^{2t}, 1\}}\right)}
\end{align*}
Here we use the induction hypothesis.
Similar as before, Let $T_0=\left\lceil\frac{-\log(n)}{2\log(\overline{\rho})}\right\rceil$, so that $n\overline{\rho}^{2T_0}\leq 1$, then we have
\begin{align*}
    \sum_{t=0}^{\infty}\min\{n\overline{\rho}^{2t}, 1\} = & \sum_{t=0}^{T_0-1}\min\{n\overline{\rho}^{2t}, 1\} + \sum_{t=T_0}^{\infty}\min\{n\overline{\rho}^{2t}, 1\}\\
    < & \sum_{t=0}^{T_0-1}1 + \sum_{t=0}^{\infty}n\overline{\rho}^{2t+2T_0}\\
    \leq & \left\lceil\frac{-\log(n)}{2\log(\overline{\rho})}\right\rceil + \sum_{t=0}^{\infty}\left(n\overline{\rho}^{2T_0}\right)\overline{\rho}^{2t}\\
    \leq & \frac{\log(n)}{1-\overline{\rho}^2} + 1 + \frac{1}{1-\overline{\rho}^2}\\
    \leq & \frac{\log(4n)}{1-\overline{\rho}^2}\\
    = & \frac{\log(4n)}{\gamma(1-\rho)(2-\gamma(1-\rho))}
\end{align*}
Putting it back, we obtain
\begin{align*}
    \mathbb{P}\left[\left|\sum_{t=0}^{k}\*e_h^\top\left(\*\Omega_t\overline{\*W}^{k-t}(\*e_i - \*e_j)\right)\right| > a\right]
    \leq & \exp{\left(-\frac{a^2\gamma(1-\rho)(2-\gamma(1-\rho))}{32\delta^2 B_\theta^2\log(4n)}\right)}
\end{align*}
In other words, with probability $1-\epsilon$,
\begin{align*}
    \left\|\sum_{t=0}^{k}\gamma\*\Omega_t\overline{\*W}^{k-t}(\*e_i - \*e_j)\right\|_\infty = & \gamma\left|\sum_{t=0}^{k}\*e_h^\top\left(\*\Omega_t\overline{\*W}^{k-t}(\*e_i - \*e_j)\right)\right| \leq \delta B_\theta \sqrt{\frac{32\log(4n)\gamma}{(1-\rho)(2-\gamma(1-\rho))}\log\left(\frac{1}{\epsilon}\right)}
\end{align*}
Combine them together, we obtain
\begin{align*}
\left\|\*X_{k+1}(\*e_i - \*e_j)\right\|_\infty
< & \frac{\alpha\log(16n)G_\infty}{\gamma(1-\rho)} + \delta B_\theta \sqrt{\frac{32\log(4n)\gamma}{(1-\rho)(2-\gamma(1-\rho))}\log\left(\frac{1}{\epsilon}\right)}\\
< & \frac{\alpha\log(16n)G_\infty}{\gamma(1-\rho)} + \frac{2\delta}{1-2\delta}\theta \sqrt{\frac{32\log(4n)\gamma}{(1-\rho)(2-\gamma(1-\rho))}\log\left(\frac{1}{\epsilon}\right)}
\end{align*}
Let $\gamma=\frac{2}{1-\rho + \frac{16\delta^2}{(1-2\delta)^2}\cdot\frac{32\log(4n)}{1-\rho}\log\left(\frac{1}{\epsilon}\right)}$
\begin{align*}
    \left\|\*X_{k+1}(\*e_i - \*e_j)\right\|_\infty < \frac{\alpha\log(16n)G_\infty}{\gamma(1-\rho)} + \frac{1}{2}\theta \leq \theta
\end{align*}
Combining I and II, we complete the proof.

We proceed to obtain the convergence rate.
From Theorem 2 we have with $\alpha_k=\alpha$
\begin{align*}
	\frac{1}{K}\sum_{k=0}^{K-1}\mathbb{E}\left\|\nabla f(\overline{\*X}_k)\right\|^2\leq & \frac{4(\mathbb{E}f(\*0) - \mathbb{E}f^*)}{\alpha K} + \frac{2\alpha\sigma^2L}{n} + \frac{8\alpha^2\sigma^2L^2}{(1-\overline{\rho})^2} + \frac{24\alpha^2\varsigma^2L^2}{(1-\overline{\rho})^2} + \frac{8\alpha L^2}{n(1-\overline{\rho} )^2K}\sum_{k=0}^{K-1}\mathbb{E}\left\|\gamma\*\Omega_k\right\|^2_F
\end{align*}
Note that with probability $(1-\epsilon)^K$
\begin{align*}
\sum_{k=0}^{K-1}\mathbb{E}\left\|\gamma\*\Omega_k\right\|^2_F
= \gamma^2\sum_{k=0}^{K-1}\sum_{i=1}^{n}\mathbb{E}\left\|\sum_{j=1}^{n}\left((\*{\hat{x}}_{k,j}-\*x_{k,j}) - (\*{\hat{x}}_{k,i}-\*x_{k,i})\right)\*W_{ji}\right\|^2
\overset{\text{Lemma}~\ref{modifynoise}}{\leq} \frac{16\delta^2\gamma^2}{(1-2\delta)^2}\theta^2dnK
\end{align*}
Fit in $\theta=\frac{2\alpha\log(16n)G_\infty}{\gamma(1-\rho)}$, we obtain
\begin{align*}
    \sum_{k=0}^{K-1}\mathbb{E}\left\|\gamma\*\Omega_k\right\|^2_F \leq \frac{64\alpha^2\delta^2\log^2(16n)G_\infty^2}{(1-2\delta)^2(1-\rho)^2}dnK
\end{align*}
Let $\mathcal{E}$ denote the event that the bound $\theta$ holds for all $0\leq t\leq T-1$, then,
\begin{align*}
    \frac{1}{K}\sum_{k=0}^{K-1}\mathbb{E}\left\|\nabla f(\overline{\*X}_k)\right\|^2
    = & \left[\frac{1}{K}\sum_{k=0}^{K-1}\mathbb{E}\left\|\nabla f(\overline{\*X}_k)\right\|^2|\mathcal{E}\right]\mathbb{P}(\mathcal{E}) + \left[\frac{1}{K}\sum_{k=0}^{K-1}\mathbb{E}\left\|\nabla f(\overline{\*X}_k)\right\|^2|\neg\mathcal{E}\right]\mathbb{P}(\neg\mathcal{E})\\
    \leq & \frac{4(f(\*0) - f^*)}{\alpha K} + \frac{2\alpha L}{n}\sigma^2 +  \frac{8\alpha^2L^2\left(\sigma^2 + 3\varsigma^2\right)}{(1-\overline{\rho})^2} + \frac{8L^2}{nK(1-\overline{\rho})^2}\sum_{k=1}^{K-1}\mathbb{E}\left\|\gamma\*\Omega_k\right\|^2_F\\
    & + G_\infty^2d\left(1-(1-\epsilon)^K\right)\\
    \leq & \frac{4(f(\*0) - f^*)}{\alpha K} + \frac{2\alpha L}{n}\sigma^2 +  \frac{8\alpha^2L^2\left(\sigma^2 + 3\varsigma^2\right)}{\gamma^2(1-\rho)^2} + \frac{512\alpha^2\delta^2L^2\log^2(16n)G_\infty^2d}{\gamma^2(1-\rho)^4(1-2\delta)^2}\\
    & + G_\infty^2d\left(1-(1-\epsilon)^K\right)
\end{align*}
Assign $\epsilon=\frac{1}{K^2}$ and 
set $\alpha=\frac{1}{\varsigma^{\frac{2}{3}}K^{\frac{1}{3}}+\sigma\sqrt{\frac{K}{n}}+2L}$, we have
\begin{align*}
\frac{1}{K}\sum_{k=0}^{K-1}\mathbb{E}\left\|\nabla f(\overline{\*X}_k)\right\|^2
\lesssim & \frac{\sigma}{\sqrt{nK}} + \frac{1}{K} + \frac{\varsigma^{\frac{2}{3}}\delta^4\log^2(n)\log^2(K)}{K^{\frac{2}{3}}(1-2\delta)^4} + \frac{\sigma^2n\delta^4\log^2(n)\log^2(K)}{(\sigma^2K+n)(1-2\delta)^4}+\frac{n\delta^6\log^4(n)\log^2(K)}{(\sigma^2K+n)(1-2\delta)^6}
\end{align*}
That completes the proof
\end{proof}
\section{Moniqua on $D^2$ (Proof to Theorem~\ref{thmMoniquaD2})}
\subsection{Setting}
We first show the pseudo code in Algorithm~\ref{D2algo}.

\begin{algorithm}[t]
	\caption{Moniqua with Variance Reduction on worker $i$}\label{D2algo}
	\begin{algorithmic}[1]
		\Require initial point $\*x_{0,i} = \*x_0$, step size $\alpha$, the discrepency bound $B_\theta$, communication matrix $\*W$, number of iterations $K$, neighbor list of worker $i$: $\mathcal{N}_i$, quantizer $Q_\delta$
		\For{$k=0,1,2,\cdots,K-1$}
		\State Randomly sample data $\xi_{k,i}$ from local memory
		
		\State Compute a local stochastic gradient based on $\xi_{k,i}$ and current weight $\*x_{k,i}$: $\*{\tilde{g}}_{k,i}$
		
		\If{$k=0$}
		
		\State Update local weight: $\*x_{k+\frac{1}{2},i} \leftarrow \*x_{k,i} - \alpha\*{\tilde{g}}_{k,i}$
		
		\Else
		
		\State Update local weight: $\*x_{k+\frac{1}{2},i} \leftarrow 2\*x_{k,i} - \*x_{k-1,i} - \alpha\*{\tilde{g}}_{k,i} + \alpha\*{\tilde{g}}_{k-1,i}$
		
		\EndIf
		
		\State Send modulo-ed model to neighbors: $\*q_{k+\frac{1}{2},i}\leftarrow \mathcal{Q}_\delta\left(\frac{\*x_{k+\frac{1}{2},i}}{B_\theta} \bmod 1\right)$
		
		\State Compute local biased term $\*{\hat{x}}_{k+\frac{1}{2},i}$ as:
		\begin{align*}
		    \*{\hat{x}}_{k+\frac{1}{2},i}=\*q_{k+\frac{1}{2},i}B_\theta-\*x_{k+\frac{1}{2},i}\bmod B_\theta+ \*x_{k+\frac{1}{2},i}
		\end{align*}
		
		\State Recover model received from worker $j$ as:
		\begin{align*}
		  \*{\hat{x}}_{k+\frac{1}{2},j}=(\*q_{k+\frac{1}{2},j}B_\theta-\*x_{k+\frac{1}{2},j})\bmod B_\theta + \*x_{k+\frac{1}{2},i}
		\end{align*}
		
		\State Average with neighboring workers: $\*x_{k+1,i} \leftarrow \*x_{k+\frac{1}{2},i} + \sum_{j\in\mathcal{N}_i}(\*{\hat{x}}_{k+\frac{1}{2},j}-\*{\hat{x}}_{k+\frac{1}{2},i})\*W_{ji}$
		
	\EndFor
	\Ensure $\overline{\*X}_K=\frac{1}{n}\sum\nolimits_{i=1}^{n}\*x_{K,i}$
	\end{algorithmic}
\end{algorithm}

$D^2$ makes the following assumptions (1-4), and we add the additional assumption (5):
\begin{enumerate}
	\item \textbf{Lipschitzian Gradient}: All the function $f_i$ have L-Lipschitzian gradients.\label{D2_Assumption1}
	\item \textbf{Communication Matrix}: Communication matrix $\*W$ is a symmetric doubly stochastic matrix. Let the eigenvalues of $\*W\in\mathbb{R}^{n\times n}$ be $\lambda_1\geq\cdots\geq\lambda_n$. We assume $\lambda_2<1, \lambda_n>-\frac{1}{3}$.
	\label{D2_Assumption2}
	\item \textbf{Bounded Variance}:
	\begin{displaymath}
	\mathbb{E}_{\xi_i\sim\mathcal{D}_i}\left\|\nabla\tilde{f}_i(\*x;\xi_i) - \nabla f_i(\*x)\right\|^2 \leq \sigma^2, \forall i
	\end{displaymath}
	where $\nabla\tilde{f}_i(\*x;\xi_i)$ denotes gradient sample on worker $i$ computed via data sample $\xi_i$.
	\label{D2_Assumption3}
	\item\textbf{Initialization}: All the models are initialized by the same parameters: $\*x_{0,i}=\*x_0, \forall i$ and with out the loss of generality $\*x_0=0$. \label{D2_Assumption4}
	\item \textbf{Gradient magnitude}: The norm of a sampled gradient is bounded by $\left\|\*{\tilde{g}}_{k,i}\right\|_\infty \leq G_\infty$ for some constant $G_\infty$. \label{D2_Assumption5}
\end{enumerate}

\subsection{Proof to Theorem~\ref{thmMoniquaD2}}
\begin{proof}
From a local view, define $\*x_{-1}=\*{\tilde{g}}_{-1}=0$, the update rule of Moniqua on $D^2$ on worker $i$ in iteration $k$ can be written as
\begin{align*}
\*x_{k+\frac{1}{2},i} & = 2\*x_{k,i} - \*x_{k-1,i} - \alpha\*{\tilde{g}}_{k,i} + \alpha\*{\tilde{g}}_{k-1,i}\\
\*x_{k+1,i} & = \sum_{j=1}^{n}\*x_{k+\frac{1}{2},j}\*W_{ji} + \sum_{j=1}^{n}\left((\*{\hat{x}}_{k+\frac{1}{2},j}-\*x_{k+\frac{1}{2},j}) - (\*{\hat{x}}_{k+\frac{1}{2},i}-\*x_{k+\frac{1}{2},i})\right)\*W_{ji}
\end{align*}
For a more compact expression,
\begin{align*}
\*X_{k+\frac{1}{2}} &= 2\*X_k - \*X_{k-1} - \alpha\*{\tilde{G}}_k + \alpha\*{\tilde{G}}_{k-1}\\
\*X_{k+1} &= \*X_{k+\frac{1}{2}}\*W + (\*{\hat{X}}_{k+\frac{1}{2}}-\*X_{k+\frac{1}{2}})(\*W-\*I)
\end{align*}
Define
\begin{displaymath}
\*\Omega_k = (\*{\hat{X}}_{k+\frac{1}{2}}-\*X_{k+\frac{1}{2}})(\*W-\*I)
\end{displaymath}
Since $\*W$ is symmetric, it can be diagonalized as $\*W=\*P\*\Lambda \*P^\top$, where the i-th column of $\*P$ and $\*\Lambda$ are $\*W$'s i-th eigenvector and eigenvalue, respectively. And we obtain
\begin{displaymath}
\*X_{k+1} = 2\*X_k\*P\*\Lambda \*P^\top - \*X_{k-1}\*P\*\Lambda \*P^\top - \alpha\*{\tilde{G}}_{k}\*P\*\Lambda \*P^\top + \alpha\*{\tilde{G}}_{k-1}\*P\*\Lambda \*P^\top + \*\Omega_k
\end{displaymath}
and
\begin{displaymath}
\*X_{k+1}\*P = 2\*X_k\*P\*\Lambda - \*X_{k-1}\*P\*\Lambda - \alpha\*{\tilde{G}}_{k}\*P\*\Lambda + \alpha\*{\tilde{G}}_{k-1}\*P\*\Lambda + \*\Omega_k\*P
\end{displaymath}
Denote $\*Y_k = \*X_k\*P$, $\*H(\*X_k;\xi_k) = \*{\tilde{G}}_{k}\*P$, and denote $\*y_{k,i}$, $\*h_{k,i}$ and $\*r_{k,i}$ as the $i$-th column of $\*Y_k$, $\*H_k$ and $\*\Omega_k\*P$, respectively. Then we have
\begin{displaymath}
\*y_{k+1, i} = \lambda_i(2\*y_{k,i} - \*y_{k-1,i} - \alpha \*h_{k,i} + \alpha \*h_{k-1,i}) + \*r_{k,i}
\end{displaymath}

From Lemma~\ref{D2_lemma5} (Constants $C_1$, $C_2$, $C_3$ andn $C_4$ are defined in the Lemma~\ref{D2_lemma1}. Constants $D_1$ and $D_2$ are defined in Lemma~\ref{D2_lemma5}) we get
\begin{align*}
&\left(1-\frac{3C_1\alpha^2L^2}{C_4}\right)\mathbb{E}\left\|\nabla f(\*0)\right\| + \left(1-\alpha L-3\frac{C_2}{C_4}\alpha^4L^4\right)\frac{1}{K}\sum_{k=1}^{K-1}\mathbb{E}\left\|\overline{\*G}_k\right\|^2 + \frac{1}{K}\sum_{k=0}^{K-1}\mathbb{E}\left\|\nabla f(\overline{\*X}_k)\right\|^2\\ 
\leq & \frac{2(f(0) - f^*)}{\alpha K} + \frac{\alpha L}{n}\sigma^2 + \frac{3C_1\alpha^2L^2(\sigma^2+\varsigma_0^2)}{C_4K} + 6\frac{C_2}{C_4}\alpha^2\sigma^2L^2 + 3\frac{C_2}{nC_4}\alpha^4\sigma^2L^4 + \frac{C_3L^2}{C_4}\left(\frac{6D_1n+8}{6D_2n+1}\right)^2\alpha^2G_\infty^2d
\end{align*}
Let $\alpha=\frac{1}{\sigma\sqrt{K/n}+2L}$, we have
\begin{align*}
&\frac{1}{K}\sum_{k=0}^{K-1}\mathbb{E}\left\|\nabla f(\overline{\*X}_k)\right\|^2\\ 
\leq & \frac{2(f(\*0) - f^*)}{\alpha K} + \frac{\alpha L}{n}\sigma^2 + \frac{3C_1\alpha^2L^2(\sigma^2+\varsigma_0^2)}{C_4K} + 6\frac{C_2}{C_4}\alpha^2\sigma^2L^2 + 3\frac{C_2}{nC_4}\alpha^4\sigma^2L^4 + \left(\frac{6D_1n+8}{6D_2n+1}\right)^2\frac{C_3L^2}{C_4}G_\infty^2d\alpha^2\\
\leq & \frac{4(f(\*0)-f^*)L}{K} + \frac{2\sigma(f(\*0)-f^*+L/2)}{\sqrt{nK}} + \frac{3C_1L^2(\sigma^2+\varsigma_0^2)n}{C_4(\sigma^2K^2+4nL^2K)} + \frac{6C_2L^2\sigma^2n}{C_4(\sigma^2K+4nL^2)}\\
& + \frac{3C_2n\sigma^2L^2}{C_4(\sigma^4K^2+16n^2L^4)} + \left(\frac{6D_1n+8}{6D_2n+1}\right)^2\frac{C_3G_\infty^2dL^2n}{C_4(\sigma^2K+4nL^2)}\\
\lesssim & \frac{1}{K} + \frac{\sigma}{\sqrt{nK}} + \frac{(\sigma^2+\varsigma_0^2)n}{\sigma^2K^2+nK} + \frac{\sigma^2n}{\sigma^2K+n} + \frac{\sigma^2n}{\sigma^4K^2+n^2} + \frac{G_\infty^2dn}{\sigma^2K+n}\\
\lesssim & \frac{1}{K} + \frac{\sigma}{\sqrt{nK}} +  \frac{\sigma^2n}{\sigma^2K+n} + \frac{G_\infty^2dn}{\sigma^2K+n}
\end{align*}
That completes the proof.
\end{proof}


\subsection{Lemma for $D^2$}
\begin{lemma}\label{D2_lemma1}
Define
\begin{align*}
    D_1 &= \max\left\{|v_n|+\frac{2|\lambda_n|}{1-|v_n|}, \sqrt{\frac{\lambda_2}{1-\lambda_2}}+\frac{2\lambda_2}{1-\lambda_2}\right\}\\
    D_2 &= \max\left\{\frac{2}{1-|v_n|}, \frac{2}{\sqrt{1-\lambda_2}}\right\}\\
    v_n &= \lambda_n - \sqrt{\lambda_n^2 - \lambda_n}
\end{align*}
Let $\delta=\frac{1}{12nD_2+2}$, and we have for $\forall i,j$
\begin{displaymath}
\left\|\*x_{k+\frac{1}{2}}(\*e_i-\*e_j)\right\|_\infty<\theta=(6D_1n + 8)\alpha G_\infty
\end{displaymath}
\end{lemma}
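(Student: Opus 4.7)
The plan is to prove this by strong induction on $k$, closely following the approach used for Lemma 8 (dss\_bound) in the D-PSGD case but adapted to the second-order recurrence arising from the variance-reduction step of $D^2$. The base case $k=0$ is immediate: since $X_{-1}=0$ and $\tilde G_{-1}=0$, we have $X_{1/2}(e_i-e_j)=-\alpha(\tilde g_{0,i}-\tilde g_{0,j})$, so $\|X_{1/2}(e_i-e_j)\|_\infty\le 2\alpha G_\infty<\theta$. The inductive hypothesis is that $\|X_{t+\frac12}(e_i-e_j)\|_\infty<\theta$ for all $t<k$; this is exactly what is needed so that Lemma~\ref{modifynoise} applies at step $t$ with parameter $\theta$, yielding $\|\Omega_t\|_{1,\infty}\le 2\delta B_\theta$.

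To carry out the inductive step, I would use the diagonalization $W=P\Lambda P^\top$ already set up in the surrounding text, giving the per-column scalar recurrence $y_{k+1,i}=\lambda_i(2y_{k,i}-y_{k-1,i}-\alpha h_{k,i}+\alpha h_{k-1,i})+r_{k,i}$. Solving this second-order linear recurrence explicitly (its characteristic polynomial is $z^2-2\lambda_i z+\lambda_i$, with roots $\lambda_i\pm\sqrt{\lambda_i^2-\lambda_i}$) gives $y_{k,i}$ as a convolution of the gradient inputs $\alpha h_{t,i}$ and the noise inputs $r_{t,i}$ against the recurrence's Green's function. The constants $D_1$ and $D_2$ in the statement are precisely chosen to upper-bound the $\ell_1$-mass of this Green's function, uniformly over $i$: $D_1$ bounds the sum of coefficients appearing in front of the gradient terms and $D_2$ bounds those in front of the noise terms. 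The two cases in the $\max$ defining $D_1,D_2$ reflect the two regimes where the roots are complex (when $\lambda_i>0$, controlled via $\lambda_2$) versus real (when $\lambda_n<0$, controlled via $v_n$).

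With these Green's-function bounds in hand, the unrolled expression for $X_{k+\frac12}(e_i-e_j)$ (undoing the $P$-conjugation) decomposes into a gradient contribution bounded by roughly $6 D_1 n\,\alpha G_\infty+O(\alpha G_\infty)$ via $\|\tilde g_{t,i}\|_\infty\le G_\infty$ and Assumption~\ref{D2_Assumption5}, plus a quantization-error contribution bounded by roughly $C\,D_2 n\,\delta B_\theta$ via the inductive hypothesis and Lemma~\ref{modifynoise}. Plugging in $B_\theta=2\theta/(1-2\delta)$ and the prescribed $\delta=1/(12nD_2+2)$ makes the quantization contribution at most $\theta/3$ (or similar), so the total stays strictly below $(6D_1 n+8)\alpha G_\infty=\theta$, closing the induction.

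The main obstacle I anticipate is the careful, uniform-in-$i$ bookkeeping of the Green's function of $z^2-2\lambda_i z+\lambda_i$ across all $\lambda_i\in[\lambda_n,\lambda_2]$, in particular producing clean $\ell_1$ bounds that are simultaneously sharp enough near $\lambda_i\approx 1$ (where $\sqrt{\lambda_i^2-\lambda_i}$ becomes imaginary and oscillatory) and near $\lambda_i\approx -\frac13$ (where Assumption~\ref{D2_Assumption2} becomes tight). Once those two regime-specific bounds are established and taken in a $\max$ to define $D_1,D_2$, the rest of the argument parallels Lemma~\ref{dss_bound}: separate gradient and quantization contributions, bound each by the corresponding Green's-function sum times its per-step bound, and check that the chosen $\delta$ makes the quantization contraction constant small enough to absorb into $\theta$.
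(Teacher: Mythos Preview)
Your proposal is correct and follows essentially the same approach as the paper: strong induction on $k$, diagonalization $W=P\Lambda P^\top$, explicit solution of the recurrence $y_{k+1,i}=\lambda_i(2y_{k,i}-y_{k-1,i})+\lambda_i\beta_{k,i}+r_{k,i}$ via the roots of $z^2-2\lambda_i z+\lambda_i$ (handled separately for $\lambda_i<0$ and $0\le\lambda_i<1$, which is exactly the content of Lemma~\ref{D2_lemma3}), and then closing the induction using the prescribed $\delta$. The only cosmetic difference is that the paper first bounds $\|X_{k+1}(e_i-e_j)\|_\infty$ from the $y$-recurrence and then passes to $\|X_{k+\frac32}(e_i-e_j)\|_\infty$ via $X_{k+\frac32}=2X_{k+1}-X_k-\alpha\tilde G_{k+1}+\alpha\tilde G_k$ (this is where the factor $3$ and the additional $4\alpha G_\infty$ appear, and why the quantization piece ends up being $\theta/2$ rather than $\theta/3$), whereas you phrase it as directly unrolling $X_{k+\frac12}$; either route works.
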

\begin{proof}
We use mathematical induction to prove this:

I. When $k=0$,
\begin{align*}
\left\|\*X_{0+\frac{1}{2}}(\*e_i-\*e_j)\right\|_\infty = \left\|-\alpha\*{\tilde{G}}_0(\*e_i-\*e_j)\right\|_\infty \leq \alpha\left\|\*{\tilde{G}}_0\right\|_{1,\infty}\left\|\*e_i-\*e_j\right\|_1 < 2\alpha G_\infty \leq (6D_1n + 8)\alpha G_\infty
\end{align*}

II. Suppose for $k\geq 0$, $\forall t\leq k$, we have $\left\|\*X_{t+\frac{1}{2}}(\*e_i-\*e_j)\right\|<(6D_1n + 8)\alpha G_\infty$, then
for $\forall i,j$
\begin{align*}
& \left\|\*X_{k+1}(\*e_i - \*e_j)\right\|_\infty\\
\leq & \left\|\*X_{k+1}\left(\frac{\*1}{n} - \*e_i\right)\right\|_\infty + \left\|X_{k+1}\left(\frac{\*1}{n} - \*e_j\right)\right\|_\infty\\
= & \left\|\*X_{k+1}\*P\*P^\top \*e_i - \*X_{k+1}\*P\begin{bmatrix}
1 & 0 & 0 & \dots  & 0 \\
0 & 0 & 0 & \dots  & 0 \\
0 & 0 & 0 & \dots  & 0 \\
\vdots & \vdots & \vdots & \ddots & \vdots \\
0 & 0 & 0 & \dots  & 0
\end{bmatrix}\*P^\top \*e_i\right\|_\infty + \left\|\*X_{k+1}\*P\*P^\top \*e_j - \*X_{k+1}\*P\begin{bmatrix}
1 & 0 & 0 & \dots  & 0 \\
0 & 0 & 0 & \dots  & 0 \\
0 & 0 & 0 & \dots  & 0 \\
\vdots & \vdots & \vdots & \ddots & \vdots \\
0 & 0 & 0 & \dots  & 0
\end{bmatrix}\*P^\top \*e_j\right\|_\infty\\
\leq & \left\|\*X_{k+1}\*P\begin{bmatrix}
0 & 0 & 0 & \dots  & 0 \\
0 & 1 & 0 & \dots  & 0 \\
0 & 0 & 1 & \dots  & 0 \\
\vdots & \vdots & \vdots & \ddots & \vdots \\
0 & 0 & 0 & \dots  & 1
\end{bmatrix}\right\|_{1,\infty}\|\*P^\top \*e_i\|_1 + \left\|\*X_{k+1}\*P\begin{bmatrix}
0 & 0 & 0 & \dots  & 0 \\
0 & 1 & 0 & \dots  & 0 \\
0 & 0 & 1 & \dots  & 0 \\
\vdots & \vdots & \vdots & \ddots & \vdots \\
0 & 0 & 0 & \dots  & 1
\end{bmatrix}\right\|_{1,\infty}\|\*P^\top \*e_j\|_1\\
\leq & 2\sqrt{n}\left\|\*X_{k+1}\*P\begin{bmatrix}
0 & 0 & 0 & \dots  & 0 \\
0 & 1 & 0 & \dots  & 0 \\
0 & 0 & 1 & \dots  & 0 \\
\vdots & \vdots & \vdots & \ddots & \vdots \\
0 & 0 & 0 & \dots  & 1
\end{bmatrix}\right\|_{1,\infty}
\end{align*}
From the update rule, we have
\begin{displaymath}
\*y_{k+1,i} = \lambda_i(2\*y_{k,i} - \*y_{k-1,i} - \alpha \*h_{k,i} + \alpha \*h_{k-1,i}) + \*r_{k,i}= \lambda_i(2\*y_{k,i} - \*y_{k-1,i}) + \lambda_i\*\beta_{k,i} + \*r_{k,i}
\end{displaymath}
where$\*\beta_{k,i} = -\alpha \*h_{k,i} + \alpha \*h_{k-1,i}$, for all $\*y_i$ with $-\frac{1}{3}<\lambda_i<0$, from Lemma~\ref{D2_lemma3} we have
\begin{displaymath}
\*y_{k+1,i} = \*y_{1,i}\left(\frac{u_i^{k+1} - v_i^{k+1}}{u_i - v_i}\right) + \sum_{s=1}^{k}(\lambda_i\*\beta_{s,i}+\*r_{s,i})\frac{u_i^{k-s+1} - v_i^{k-s+1}}{u_i - v_i}
\end{displaymath}
where $u_i = \lambda_i + \sqrt{\lambda_i^2 - \lambda_i}$ and $v_i = \lambda_i - \sqrt{\lambda_i^2 - \lambda_i}$, we obtain
\begin{displaymath}
\left\|\*y_{k+1,i}\right\|_\infty \leq \left\|\*y_{1,i}\right\|_\infty\left|\frac{u_i^{k+1} - v_i^{k+1}}{u_i - v_i}\right| + |\lambda_i|\sum_{s=1}^{k}\left\|\*\beta_{s,i}\right\|_\infty\left|\frac{u_i^{k-s+1} - v_i^{k-s+1}}{u_i - v_i}\right| + \sum_{s=1}^{k}\left\|\*r_{s,i}\right\|_\infty\left|\frac{u_i^{k-s+1} - v_i^{k-s+1}}{u_i - v_i}\right|
\end{displaymath}
Since
\begin{displaymath}
\left|\frac{u_i^{n+1} - v_i^{n+1}}{u_i - v_i}\right| \leq |v_i|^n\left|\frac{u_i\left(\frac{u_i}{v_i}\right)^n - v_i}{u_i - v_i}\right| \leq |v_i|^n
\end{displaymath}
We obtain
\begin{displaymath}
\left\|\*y_{k+1,i}\right\|_\infty \leq\left\|\*y_{1,i}\right\|_\infty|v_i|^k + |\lambda_i|\sum_{s=1}^{k}\left\|\*\beta_{s,i}\right\|_\infty|v_i|^{k-s} + \sum_{s=1}^{k}\left\|\*r_{s,i}\right\|_\infty|v_i|^{k-s}
\end{displaymath}
For $\beta_{s,i}$, we have
\begin{align*}
\left\|\*\beta_{s,i}\right\|_\infty = & \left\|-\alpha \*h_{k,i} + \alpha \*h_{k-1,i}\right\|_\infty \leq 2\alpha(\|\*h_{k,i}\|_\infty + \|\*h_{k-1,i}\|_\infty)\\
\leq & 2\alpha(\|\*G_k\|_{1,\infty}\|\*P\*e_i\|_1 + \|\*G_{k-1}\|_{1,\infty}\|\*P\*e_i\|_1)\\
\leq & 2\alpha\sqrt{n}G_\infty
\end{align*}
For $\*r_{s,i}$, we have
\begin{displaymath}
\left\|\*r_{k,i}\right\|_\infty =  \left\|\*\Omega_k\*P\*e_i\right\|_\infty \leq \left\|\*\Omega_k\right\|_{1,\infty}\|\*P\*e_i\|_1 \leq 2\sqrt{n}\delta B_\theta
\end{displaymath}
when $\lambda_i<0$, we have
\begin{align*}
\left\|\*y_{k+1,i}\right\|_\infty \leq &\left\|\*y_{1,i}\right\|_\infty|v_i|^k + |\lambda_i|\sum_{s=1}^{k}\left\|\*\beta_{s,i}\right\|_\infty|v_i|^{k-s} + \sum_{s=1}^{k}\left\|\*r_{s,i}\right\|_\infty|v_i|^{k-s}\\
\leq & \left\|\*y_{1,i}\right\|_\infty|v_n|^k + |\lambda_n|\sum_{s=1}^{k}\left\|\*\beta_{s,i}\right\|_\infty|v_n|^{k-s} + \sum_{s=1}^{k}\left\|\*r_{s,i}\right\|_\infty|v_n|^{k-s}\\
\leq & \alpha\sqrt{n}G_\infty|v_n|^k + 2\alpha\sqrt{n}G_\infty|\lambda_n|\sum_{s=1}^{\infty}|v_n|^{k-s} + 2\sqrt{n}\delta B_\theta\sum_{s=1}^{\infty}|v_n|^{k-s}\\
\leq & \alpha\sqrt{n}G_\infty|v_n| + \frac{2\alpha\sqrt{n}G_\infty|\lambda_n|}{1-|v_n|} + \frac{2\sqrt{n}\delta B_\theta}{1-|v_n|}\\
\end{align*}
where $v_n = \lambda_n - \sqrt{\lambda_n^2 - \lambda_n}$. 

On the other hand, when $0 \leq \lambda_i < 1$, from Lemma~\ref{D2_lemma3} we have
\begin{displaymath}
\*y_{k+1,i}\sin\phi_i = \*y_{1,i}\lambda_i^{\frac{k}{2}}\sin[(t+1)\phi_i] + \lambda_i\sum_{s=1}^{k}\*\beta_{s,i}\lambda^{\frac{k-s}{2}}_i\sin[(k+1-s)\phi_i] + \sum_{s=1}^{k}\*r_{s,i}\lambda^{\frac{k-s}{2}}_i\sin[(k+1-s)\phi_i]
\end{displaymath}
By taking norm, we get
\begin{align*}
\left\|\*y_{k+1,i}\right\|_\infty|\sin\phi_i| & = \left\|\*y_{1,i}\right\|_\infty\lambda_i^{\frac{k}{2}}|\sin[(t+1)\phi_i]| + \lambda_i\sum_{s=1}^{k}\left\|\*\beta_{s,i}\right\|_\infty|\lambda^{\frac{k-s}{2}}_i||\sin[(k+1-s)\phi_i]|\\
& + \sum_{s=1}^{k}\left\|\*r_{s,i}\right\|_\infty|\lambda^{\frac{k-s}{2}}_i||\sin[(k+1-s)\phi_i]|\\
& < \left\|\*y_{1,i}\right\|_\infty\lambda_2^{\frac{k}{2}} + 2\alpha\sqrt{n}G_\infty\lambda_2\sum_{s=1}^{\infty}\lambda^{\frac{s}{2}}_2 + 2\sqrt{n}\delta B_\theta\sum_{s=1}^{\infty}\lambda^{\frac{s}{2}}_2\\
& \leq \alpha\sqrt{n}G_\infty\sqrt{\lambda_2}+\frac{2\alpha\sqrt{n}G_\infty\lambda_2 + 2\sqrt{n}\delta B_\theta}{\sqrt{1-\lambda_2}}
\end{align*}
Since $|\sin\phi_i| \geq \sqrt{1-\lambda_2}$, putting it back, we get
\begin{displaymath}
\left\|\*y_{k+1,i}\right\|<\alpha\sqrt{n}G_\infty\sqrt{\frac{\lambda_2}{1-\lambda_2}}+\frac{2\alpha\sqrt{n}G_\infty\lambda_2 + 2\sqrt{n}\delta B_\theta}{1-\lambda_2}
\end{displaymath}
So there exists $D_1, D_2$
\begin{align*}
    D_1 &= \max\left\{|v_n|+\frac{2|\lambda_n|}{1-|v_n|}, \sqrt{\frac{\lambda_2}{1-\lambda_2}}+\frac{2\lambda_2}{1-\lambda_2}\right\}\\
    D_2 &= \max\left\{\frac{2}{1-|v_n|}, \frac{2}{\sqrt{1-\lambda_2}}\right\}
\end{align*}
such that
\begin{displaymath}
\left\|\*y_{k+1,i}\right\|_\infty< D_1\alpha\sqrt{n}G_\infty+D_2\sqrt{n}\delta B_\theta
\end{displaymath}
Putting it back we have $\forall i,j$
\begin{displaymath}
\left\|\*X_{k+1}(\*e_i-\*e_j)\right\|_\infty < D_1\alpha nG_\infty+D_2 n\delta B_\theta
\end{displaymath}
As a result
\begin{align*}
& \left\|\*X_{k+1+\frac{1}{2}}(\*e_i-\*e_j)\right\|_\infty\\
= & \left\|(2\*X_{k+1}-\*X_{k}-\alpha\*{\tilde{G}}_{k+1}+\alpha\*{\tilde{G}}_{k})(\*e_i-\*e_j)\right\|_\infty\\
\leq & 2\left\|\*X_{k+1}(\*e_i-\*e_j)\right\|_\infty + \left\|\*X_{k}(\*e_i-\*e_j)\right\|_\infty + \alpha\left\|\*{\tilde{G}}_{k+1}\right\|_{1,\infty}\left\|\*e_i-\*e_j\right\|_1 + \alpha\left\|\*{\tilde{G}}_{k}\right\|_{1,\infty}\left\|\*e_i-\*e_j\right\|_1\\
< & 3(D_1\alpha nG_\infty+D_2 n\delta B_\theta) + 4\alpha G_\infty\\
\leq & (6D_1n + 8)\alpha G_\infty
\end{align*}
The last step is because $\delta=\frac{1}{12nD_2+2}$

Combining I and II we complete the proof.
\end{proof}

\begin{lemma}\label{D2_lemma2}
By defining
\begin{align*}
    C_1 &= \max\left\{\frac{3}{1-|v_n|^2}, \frac{3}{(1-\lambda_2)^2}\right\}\\
    C_2 &= \max\left\{\frac{3\lambda_n^2}{(1-|v_n|)^2}, \frac{3\lambda_2^2}{(1-\sqrt{\lambda_2})^2(1-\lambda_2)}\right\}\\
    C_3 &= \max\left\{\frac{3}{(1-|v_n|)^2}, \frac{3}{(1-\sqrt{\lambda_2})^2(1-\lambda_2)}\right\}
\end{align*}
we have
	\begin{align*}
	& (1 - 12C_2\alpha^2L^2)\sum_{i=1}^{n}\sum_{k=1}^{K}\mathbb{E}\left\|\overline{\*X}_k - \*x_{k,i}\right\|^2\\
	\leq & 3C_1\alpha^2n\sigma^2 + 3C_1\alpha^2n\varsigma_0^2 + 3C_1\alpha^2n\mathbb{E}\left\|\nabla f(\*0)\right\| + 6C_2\alpha^2n\sigma^2K + 3C_2\alpha^4\sigma^2L^2K\\
	& + 3C_2\alpha^4nL^2\sum_{k=1}^{K-1}\mathbb{E}\left\|\overline{\*G}_k\right\|^2 + C_3\sum_{k=1}^{K-1}\mathbb{E}\left\|\*\Omega_k\right\|^2_F
	\end{align*}
\end{lemma}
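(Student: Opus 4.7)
The plan is to mimic the structure of the D-PSGD consensus bound (Lemma~\ref{dss_lemma1}), but adapted to the second-order recursion that $D^2$ introduces. First I would use the orthogonal diagonalization $\*W = \*P\*\Lambda\*P^\top$ to rewrite $\sum_{i=1}^n \mathbb{E}\|\overline{\*X}_k - \*x_{k,i}\|^2 = \mathbb{E}\|\*X_k(\*I - \tfrac{\*1\*1^\top}{n})\|_F^2 = \sum_{i=2}^n \mathbb{E}\|\*y_{k,i}\|^2$, where $\*y_{k,i}$ is the $i$-th column of $\*X_k\*P$ and the $i=1$ eigenvector (corresponding to $\lambda_1 = 1$) is removed because it is precisely the stationary direction. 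This reduces the problem to bounding each non-stationary mode separately.

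Next I would unroll the per-mode recursion $\*y_{k+1,i} = \lambda_i(2\*y_{k,i} - \*y_{k-1,i}) + \lambda_i\*\beta_{k,i} + \*r_{k,i}$ using the Chebyshev-style closed form from Lemma~\ref{D2_lemma3}, splitting into the two eigenvalue regimes dictated by Assumption~\ref{D2_Assumption2}. For $\lambda_i < 0$ the coefficients decay geometrically like $|v_n|^{k-s}$; for $0 \le \lambda_i < 1$ they decay like $\lambda_2^{(k-s)/2}$ divided by $\sin\phi_i \ge \sqrt{1-\lambda_2}$. Then I would apply Cauchy--Schwarz in the form $\bigl(\sum_s a_s b_s\bigr)^2 \le \bigl(\sum_s a_s\bigr)\bigl(\sum_s a_s b_s^2\bigr)$ with $a_s$ being the geometric weight, so that taking expectation and summing over $k$ produces clean $\sum_k \mathbb{E}\|\*\beta_{k,i}\|^2$ and $\sum_k \mathbb{E}\|\*r_{k,i}\|^2$ bounds. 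The geometric-series normalizers are exactly what $C_1, C_2, C_3$ absorb.

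The remaining work is to decompose each driver. For the initial term $\*y_{1,i} = -\alpha\*{\tilde{G}}_0\*P\*e_i$, I would split $\mathbb{E}\|\*{\tilde{G}}_0\*P\*e_i\|^2$ into sampling variance $\sigma^2$, outer variance $\varsigma_0^2$, and $\|\nabla f(\*0)\|^2$, producing the three $3 C_1\alpha^2 n(\cdots)$ terms. For $\*\beta_{s,i} = -\alpha(\*h_{s,i} - \*h_{s-1,i})$ I would expand
\begin{align*}
\mathbb{E}\|\*\beta_{s,i}\|^2 \le 2\alpha^2\mathbb{E}\|\*h_{s,i}\|^2 + 2\alpha^2\mathbb{E}\|\*h_{s-1,i}\|^2,
\end{align*}
then in each term separate sampling noise ($\sigma^2$) from the true per-worker gradient and use $L$-Lipschitzness to further split the latter into $\mathbb{E}\|\overline{\*G}_k\|^2$ and $L^2 \mathbb{E}\|\overline{\*X}_k - \*x_{k,i}\|^2$; the latter is the same quantity appearing on the left side, which is precisely what forces the $(1 - 12 C_2\alpha^2 L^2)$ coefficient after rearranging. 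Finally $\*r_{s,i}$ pulls back to $\|\*\Omega_k\|_F^2$ with constant $C_3$.

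The main obstacle will be keeping the second-order recursion under control uniformly across all eigenvalues: the $u_i, v_i$ closed form is not as clean as the geometric decay $\rho^t$ available in the D-PSGD proof, and one must carefully take $\max$ over the two regimes (hence the three constants). I expect the cleanest route is to bound each mode's transfer kernel in absolute value by $\min\{|v_n|^{k-s}, \lambda_2^{(k-s)/2}/\sqrt{1-\lambda_2}\}$, then use Lemma~\ref{lemmasequence}-style manipulations for the resulting squared geometric sums, so that after summing over $i=2,\dots,n$ and $k=1,\dots,K$ and collecting constants into $C_1, C_2, C_3$ one exactly recovers the claimed inequality.
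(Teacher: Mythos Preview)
Your overall architecture (diagonalize, unroll via Lemma~\ref{D2_lemma3}, split into the two eigenvalue regimes, absorb geometric factors into $C_1,C_2,C_3$) matches the paper. The gap is in how you propose to handle $\*\beta_{s,i}$.

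You write $\mathbb{E}\|\*\beta_{s,i}\|^2 \le 2\alpha^2\mathbb{E}\|\*h_{s,i}\|^2 + 2\alpha^2\mathbb{E}\|\*h_{s-1,i}\|^2$ and then want to split each $\mathbb{E}\|\*h_{s,i}\|^2$ (after summing over $i$ this is $\mathbb{E}\|\*{\tilde{G}}_s\|_F^2$) into a $\sigma^2$ part, an $\|\overline{\*G}_s\|^2$ part, and an $L^2\|\overline{\*X}_s-\*x_{s,i}\|^2$ part. But there is no way to get from $\|\nabla f_i(\*x_{s,i})\|^2$ to $\|\overline{\*G}_s\|^2$ using only Lipschitzness: the residual $\|\nabla f_i(\overline{\*X}_s)-\nabla f(\overline{\*X}_s)\|^2$ is exactly the outer variance, which in the $D^2$ setting is \emph{not} assumed bounded (only $\varsigma_0^2$ at the single point $\*0$ is finite). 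Your split would therefore produce an uncontrolled $\varsigma^2 K$ term, and it also cannot generate the $\alpha^4 L^2$ prefactors on the $\sigma^2 K$ and $\sum_k\|\overline{\*G}_k\|^2$ terms that the statement requires.

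The paper instead keeps the \emph{difference} structure of $\*\beta_{s,i}=-\alpha(\*h_{s,i}-\*h_{s-1,i})$: after stripping sampling noise one has $3\alpha^2\sum_i\mathbb{E}\|\nabla f_i(\*x_{s,i})-\nabla f_i(\*x_{s-1,i})\|^2$, where the \emph{same} $f_i$ appears in both terms, so Lipschitzness gives $3\alpha^2 L^2\sum_i\mathbb{E}\|\*x_{s,i}-\*x_{s-1,i}\|^2 = 3\alpha^2 L^2\mathbb{E}\|\*Y_s-\*Y_{s-1}\|_F^2$ with no outer-variance term at all---this is precisely the variance-reduction mechanism of $D^2$. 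The $i=1$ component of $\|\*Y_s-\*Y_{s-1}\|_F^2$ equals $n\alpha^2\mathbb{E}\|\overline{\*{\tilde G}}_s\|^2\le \alpha^2\sigma^2+n\alpha^2\mathbb{E}\|\overline{\*G}_s\|^2$, which is where the $3C_2\alpha^4\sigma^2L^2K$ and $3C_2\alpha^4 nL^2\sum_k\|\overline{\*G}_k\|^2$ terms come from; the $i\ge 2$ components are bounded by $2(\|\*y_{s,i}\|^2+\|\*y_{s-1,i}\|^2)$ and feed back into the left-hand side, producing the $12C_2\alpha^2L^2$ coefficient after rearrangement.
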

\begin{proof}
\begin{displaymath}
    \begin{split}
        \sum_{i=1}^{n}\left\|\overline{\*X}_k - \*x_{k,i}\right\|^2 & = \sum_{i=1}^{n}\left\|\*X_k\left(\*e_i - \frac{\*1}{n}\right)\right\|^2\\
        & = \left\|\*X_k\left(\*I - \frac{\*1\*1^\top}{n}\right)\right\|^2_F\\
        & = \left\|\*X_k\*P\*P^\top - \*X_k\*v_1\*v_1^\top\right\|^2_F\\
        & \overset{\text{Lemma~\ref{D2_lemma4}}}{=} \left\|\*X_k\*P\begin{bmatrix}
        0 & 0 & 0 & \dots  & 0 \\
        0 & 1 & 0 & \dots  & 0 \\
        0 & 0 & 1 & \dots  & 0 \\
        \vdots & \vdots & \vdots & \ddots & \vdots \\
        0 & 0 & 0 & \dots  & 1
        \end{bmatrix}\right\|^2_F\\
        & = \sum_{i=2}^{n}\left\|\*y_{k,i}\right\|^2
    \end{split}
    \end{displaymath}
    From the update rule, we obtain,
	\begin{displaymath}
	\*y_{k+1,i} = \lambda_i(2\*y_{k,i} - \*y_{k-1,i} - \alpha \*h_{k,i} + \alpha \*h_{k-1,i}) + \*r_{k,i} = \lambda_i(2\*y_{k,i} - \*y_{k-1,i}) + \lambda_i\*\beta_{k,i} + \*r_{k,i}
	\end{displaymath}
	where$\*\beta_{k,i} = -\alpha \*h_{k,i} + \alpha \*h_{k-1,i}$, for all $\*y_i$ with $-\frac{1}{3}<\lambda_i<0$, from Lemma~\ref{D2_lemma3} we have
	\begin{displaymath}
	\*y_{k+1,i} = \*y_{1,i}\left(\frac{u_i^{k+1} - v_i^{k+1}}{u_i - v_i}\right) + \sum_{s=1}^{k}(\lambda_i\*\beta_{s,i}+\*r_{k,i})\frac{u_i^{k-s+1} - v_i^{k-s+1}}{u_i - v_i}
	\end{displaymath}
	where $u_i = \lambda_i + \sqrt{\lambda_i^2 - \lambda_i}$ and $v_i = \lambda_i - \sqrt{\lambda_i^2 - \lambda_i}$, we obtain
	\begin{align*}
	    \left\|\*y_{k+1,i}\right\|^2 &\leq 3\left\|\*y_{1,i}\right\|^2\left(\frac{u_i^{k+1} - v_i^{k+1}}{u_i - v_i}\right)^2 + 3\lambda_i^2\left(\sum_{s=1}^{k}\left\|\*\beta_{s,i}\right\|\left|\frac{u_i^{k-s+1} - v_i^{k-s+1}}{u_i - v_i}\right|\right)^2\\
	    &+ 3\left(\sum_{s=1}^{k}\left\|\*r_{s,i}\right\|\left|\frac{u_i^{k-s+1} - v_i^{k-s+1}}{u_i - v_i}\right|\right)^2
	\end{align*}
	Since
	\begin{displaymath}
	\left|\frac{u_i^{n+1} - v_i^{n+1}}{u_i - v_i}\right| \leq |v_i|^n\left|\frac{u_i\left(\frac{u_i}{v_i}\right)^n - v_i}{u_i - v_i}\right| \leq |v_i|^n
	\end{displaymath}
	We obtain
	\begin{displaymath}
	\left\|\*y_{k+1,i}\right\|^2 \leq 3\left\|\*y_{1,i}\right\|^2|v_i|^{2t} + 3\lambda_i^2\left(\sum_{s=1}^{k}\left\|\*\beta_{s,i}\right\||v_i|^{k-s}\right)^2 + 3\left(\sum_{s=1}^{k}\left\|\*r_{s,i}\right\||v_i|^{k-s}\right)^2
	\end{displaymath}
	Summing over from $k=0$ to $t=K-1$, we obtain
	\begin{align*}
	&\sum_{k=0}^{K-1}\left\|\*y_{k+1,i}\right\|^2 = \sum_{k=1}^{K}\left\|\*y_{k,i}\right\|^2\\
	\leq & 3\left\|\*y_{1,i}\right\|^2\sum_{k=0}^{K-1}|v_i|^{2k} + 3\lambda_i^2\sum_{k=1}^{K-1}\left(\sum_{s=1}^{k}\left\|\*\beta_{s,i}\right\||v_i|^{k-s}\right)^2 + 3\sum_{k=1}^{K-1}\left(\sum_{s=1}^{k}\left\|\*r_{s,i}\right\||v_i|^{k-s}\right)^2\\
	\leq & \frac{3\left\|\*y_{1,i}\right\|^2}{1-|v_i|^2} + \frac{3\lambda_i^2}{(1 - |v_i|)^2}\sum_{k=1}^{K-1}\left\|\*\beta_{k,i}\right\|^2 + \frac{3}{(1 - |v_i|)^2}\sum_{k=1}^{K-1}\left\|\*r_{k,i}\right\|^2\\
	\leq & \frac{3\left\|\*y_{1,i}\right\|^2}{1-|v_n|^2} + \frac{3\lambda_n^2}{(1 - |v_n|)^2}\sum_{k=1}^{K-1}\left\|\*\beta_{k,i}\right\|^2 + \frac{3}{(1 - |v_n|)^2}\sum_{k=1}^{K-1}\left\|\*r_{k,i}\right\|^2
	\end{align*}
	where $v_n = \lambda_n - \sqrt{\lambda_n^2 - \lambda_n}$.
	
	On the other hand, when $0\leq\lambda_i<1$, from Lemma~\ref{D2_lemma3} we have
	\begin{displaymath}
	\*y_{k+1,i}\sin\phi_i = \*y_{1,i}\lambda_i^{\frac{k}{2}}\sin[(t+1)\phi_i] + \lambda_i\sum_{s=1}^{k}\*\beta_{s,i}\lambda^{\frac{k-s}{2}}_i\sin[(k+1-s)\phi_i] + \sum_{s=1}^{k}\*r_{s,i}\lambda^{\frac{k-s}{2}}_i\sin[(k+1-s)\phi_i]
	\end{displaymath}
	And we have
	\begin{align*}
	\left\|\*y_{k+1,i}\right\|^2\sin^2\phi_i & \leq 3\left\|\*y_{1,i}\right\|^2\lambda_i^k\sin^2[(t+1)\phi_i] + 3\lambda_i^2\left(\sum_{s=1}^{k}\|\*\beta_{s,i}\|\lambda^{\frac{k-s}{2}}_i\sin[(k+1-s)\phi_i]\right)^2\\
	& + 3\left(\sum_{s=1}^{k}\|\*r_{s,i}\|\lambda^{\frac{k-s}{2}}_i\sin[(k+1-s)\phi_i]\right)^2\\
	& \leq 3\left\|\*y_{1,i}\right\|^2\lambda_i^k + 3\lambda_i^2\left(\sum_{s=1}^{k}\|\*\beta_{s,i}\|\lambda^{\frac{k-s}{2}}_i\right)^2 + 3\left(\sum_{s=1}^{k}\|\*r_{s,i}\|\lambda^{\frac{k-s}{2}}_i\right)^2
	\end{align*}
Summing from $k=0$ to $K-1$, we have
	\begin{align*}
	& \sum_{k=0}^{K-1}\left\|\*y_{k+1,i}\right\|^2\sin^2\phi_i = \sum_{k=1}^{K}\left\|\*y_{k,i}\right\|^2\sin^2\phi_i\\ 
	\leq & 3\left\|\*y_{1,i}\right\|^2\sum_{k=0}^{K-1}\lambda_i^t + 3\lambda_i^2\sum_{k=1}^{K-1}\left(\sum_{s=1}^{k}\left\|\*\beta_{s,i}\right\|\lambda_i^{\frac{t-s}{2}}\right)^2 + 3\sum_{k=1}^{K-1}\left(\sum_{s=1}^{k}\|\*r_{s,i}\|\lambda^{\frac{k-s}{2}}_i\right)^2\\
	\leq & \frac{3\left\|\*y_{1,i}\right\|^2}{1-\lambda_i} + \frac{3\lambda_i^2}{(1 - \sqrt{\lambda_i})^2}\sum_{k=1}^{K-1}\left\|\*\beta_{k,i}\right\|^2 + \frac{3}{(1-\sqrt{\lambda_i})^2}\sum_{k=1}^{K-1}\left\|\*r_{k,i}\right\|^2
	\end{align*}
Since $\sin^2\phi_i=1-\lambda_i$, we have
	\begin{align*}
	\sum_{k=1}^{K}\left\|\*y_{k,i}\right\|^2 & \leq \frac{3\left\|\*y_{1,i}\right\|^2}{(1-\lambda_i)^2} + \frac{3\lambda_i^2}{(1 - \sqrt{\lambda_i})^2(1-\lambda_i)}\sum_{k=1}^{K-1}\left\|\*\beta_{k,i}\right\|^2 + \frac{3}{(1-\sqrt{\lambda_i})^2(1-\lambda_i)}\sum_{k=1}^{K-1}\left\|\*r_{k,i}\right\|^2\\
	& \leq \frac{3\left\|\*y_{1,i}\right\|^2}{(1-\lambda_2)^2} + \frac{3\lambda_2^2}{(1 - \sqrt{\lambda_2})^2(1-\lambda_2)}\sum_{k=1}^{K-1}\left\|\*\beta_{k,i}\right\|^2 + \frac{3}{(1-\sqrt{\lambda_2})^2(1-\lambda_2)}\sum_{k=1}^{K-1}\left\|\*r_{k,i}\right\|^2\\
	\end{align*}
So there exists $C_1, C_2, C_3$
	\begin{align*}
	    C_1 &= \max\left\{\frac{3}{1-|v_n|^2}, \frac{3}{(1-\lambda_2)^2}\right\}\\
	    C_2 &= \max\left\{\frac{3\lambda_n^2}{(1-|v_n|)^2}, \frac{3\lambda_2^2}{(1-\sqrt{\lambda_2})^2(1-\lambda_2)}\right\}\\
	    C_3 &= \max\left\{\frac{3}{(1-|v_n|)^2}, \frac{3}{(1-\sqrt{\lambda_2})^2(1-\lambda_2)}\right\}
	\end{align*}
	\begin{displaymath}
	\sum_{k=1}^{K}\left\|\*y_{k,i}\right\|^2 \leq C_1\left\|\*y_{1,i}\right\|^2 + C_2\sum_{k=1}^{K-1}\left\|\*\beta_{k,i}\right\|^2 + C_3\sum_{k=1}^{K-1}\left\|\*r_{k,i}\right\|^2
	\end{displaymath}
By taking expectation we have
	\begin{displaymath}
	\sum_{k=1}^{K}\mathbb{E}\left\|\*y_{k,i}\right\|^2 \leq C_1\mathbb{E}\left\|\*y_{1,i}\right\|^2 + C_2\sum_{k=1}^{K-1}\mathbb{E}\left\|\*\beta_{k,i}\right\|^2 + C_3\sum_{k=1}^{K-1}\mathbb{E}\left\|\*r_{k,i}\right\|^2
	\end{displaymath}
We next analyze $\beta_{k,i}$:
	\begin{align*}
	& \sum_{i=2}^{n}\mathbb{E}\left\|\*\beta_{k,i}\right\|^2\\
	= & \alpha^2\sum_{i=2}^{n}\mathbb{E}\left\|\*h_{k,i} - \*h_{k-1,i}\right\|^2\\
	= & \alpha^2\sum_{i=2}^{n}\mathbb{E}\left\|\*{\tilde{G}}_{k}\*P\*e_i - \*{\tilde{G}}_{k-1}\*P\*e_i\right\|^2\\
	\leq & \alpha^2\sum_{i=1}^{n}\mathbb{E}\left\|\*{\tilde{G}}_{k}\*P\*e_i - \*{\tilde{G}}_{k-1}\*P\*e_i\right\|^2\\
	\leq & \alpha^2\mathbb{E}\left\|\*{\tilde{G}}_{k}\*P - \*{\tilde{G}}_{k-1}\*P\right\|^2_F\\
	\overset{\text{Lemma~\ref{D2_lemma4}}}{\leq} & \alpha^2\mathbb{E}\left\|\*{\tilde{G}}_{k} - \*{\tilde{G}}_{k-1}\right\|^2_F\\
	= & \alpha^2\sum_{i=1}^{n}\mathbb{E}\left\|\*{\tilde{G}}_{k}\*e_i - \*{\tilde{G}}_{k-1}\*e_i\right\|^2\\
	\leq & 3\alpha^2\sum_{i=1}^{n}\mathbb{E}\left\|\*{\tilde{G}}_{k}\*e_i - \*G_{k}\*e_i\right\|^2 + 3\alpha^2\sum_{i=1}^{n}\mathbb{E}\left\|\*{\tilde{G}}_{k-1}\*e_i - \*G_{k-1}\*e_i\right\|^2\\
	& + 3\alpha^2\sum_{i=1}^{n}\mathbb{E}\left\|\*G_{k}\*e_i - \*G_{k-1}\*e_i\right\|^2\\
	\leq & 6\alpha^2n\sigma^2 + 3\alpha^2\sum_{i=1}^{n}\mathbb{E}\left\|\*G_{k}\*e_i - \*G_{k-1}\*e_i\right\|^2\\
	\leq & 6\alpha^2n\sigma^2 + 3\alpha^2L^2\sum_{i=1}^{n}\mathbb{E}\left\|\*x_{k,i} - \*x_{k-1,i}\right\|^2\\
	\leq & 6\alpha^2n\sigma^2 + 3\alpha^2L^2\sum_{i=1}^{n}\mathbb{E}\left\|\*Y_k\*P^\top \*e_i - \*Y_{k-1}\*P^\top \*e_i\right\|^2\\
	\leq & 6\alpha^2n\sigma^2 + 3\alpha^2L^2\mathbb{E}\left\|\*Y_k\*P^\top - \*Y_{k-1}\*P^\top\right\|^2_F\\
	\overset{\text{Lemma~\ref{D2_lemma4}}}{\leq} & 6\alpha^2n\sigma^2 + 3\alpha^2L^2\mathbb{E}\left\|\*Y_k - \*Y_{k-1}\right\|^2_F\\
	\leq & 6\alpha^2n\sigma^2 + 3\alpha^2L^2\sum_{i=1}^{n}\mathbb{E}\left\|\*y_{k,i} - \*y_{k-1,i}\right\|^2\\
	\end{align*}
Putting it back, we have
	\begin{align*}
	& \sum_{i=2}^{n}\sum_{k=1}^{K}\mathbb{E}\left\|\*y_{k,i}\right\|^2\\
	\leq & C_1\mathbb{E}\left\|\*Y_{1}\right\|^2_F + C_2\sum_{i=2}^{n}\sum_{k=1}^{K-1}\mathbb{E}\left\|\*\beta_{k,i}\right\|^2 + C_3\sum_{k=1}^{K-1}\sum_{i=2}^{n}\mathbb{E}\left\|\*r_{k,i}\right\|^2\\
	\leq & C_1\mathbb{E}\left\|\*Y_{1}\right\|^2_F + C_2\sum_{k=1}^{K-1}\left(6\alpha^2n\sigma^2 + 3\alpha^2L^2\sum_{i=1}^{n}\mathbb{E}\left\|\*y_{k,i} - \*y_{k-1,i}\right\|^2\right) + C_3\sum_{k=1}^{K-1}\sum_{i=2}^{n}\mathbb{E}\left\|\*r_{k,i}\right\|^2\\
	\overset{\text{Lemma~\ref{D2_lemma4}}}{\leq} & C_1\mathbb{E}\left\|\*Y_{1}\right\|^2_F + 6C_2\alpha^2n\sigma^2K + 3C_2\alpha^2L^2\sum_{k=1}^{K-1}\sum_{i=1}^{n}\mathbb{E}\left\|\*y_{k,i} - \*y_{k-1,i}\right\|^2 + C_3\sum_{k=1}^{K-1}\mathbb{E}\left\|\*\Omega_k\right\|^2_F
	\end{align*}
Since
	\begin{align*}
	& \mathbb{E}\left\|\*y_{k,1} - \*y_{k-1,1}\right\|^2
	= \mathbb{E}\left\|\*X_k\*P\*e_1 - \*X_{k-1}\*P\*e_1\right\|^2
	= \mathbb{E}\left\|\*X_k\*v_1 - \*X_{k-1}\*v_1\right\|^2\\
	= & \mathbb{E}\left\|\*X_k\frac{1}{\sqrt{n}}\*1 - X_{k-1}\frac{1}{\sqrt{n}}\*1\right\|^2
	= n\mathbb{E}\left\|\overline{\*X}_k - \overline{\*X}_{k-1}\right\|^2
	= n\alpha^2\mathbb{E}\left\|\overline{\*{\tilde{G}}_k}\right\|^2\\
	\leq & n\alpha^2\mathbb{E}\left\|\overline{\*{\tilde{G}}_k} - \overline{\*G}_k\right\|^2 + n\alpha^2\mathbb{E}\left\|\overline{\*G}_k\right\|^2
	\leq n\alpha^2\frac{\sigma^2}{n} + n\alpha^2\mathbb{E}\left\|\overline{\*G}_k\right\|^2\\
	= & \alpha^2\sigma^2 + n\alpha^2\mathbb{E}\left\|\overline{\*G}_k\right\|^2
	\end{align*}
Putting it back, and we obtain
	\begin{align*}
	&\sum_{i=2}^{n}\sum_{k=1}^{K}\mathbb{E}\left\|\*y_{k,i}\right\|^2\\
	\leq & C_1\mathbb{E}\left\|\*Y_{1}\right\|^2_F + 6C_2\alpha^2n\sigma^2K + 3C_2\alpha^4\sigma^2L^2K + 3C_2\alpha^4nL^2\sum_{k=1}^{K-1}\mathbb{E}\left\|\overline{\*G}_k\right\|^2\\
	& + 3C_2\alpha^2L^2\sum_{k=1}^{K-1}\sum_{i=2}^{n}\mathbb{E}\left\|\*y_{k,i} - \*y_{k-1,i}\right\|^2 + C_3\sum_{k=1}^{K-1}\mathbb{E}\left\|\*\Omega_k\right\|^2_F\\
	\leq & C_1\mathbb{E}\left\|\*Y_{1}\right\|^2_F + 6C_2\alpha^2n\sigma^2K + 3C_2\alpha^4\sigma^2L^2K + 3C_2\alpha^4nL^2\sum_{k=1}^{K-1}\mathbb{E}\left\|\overline{\*G}_k\right\|^2\\
	& + 6C_2\alpha^2L^2\sum_{k=1}^{K-1}\sum_{i=2}^{n}\mathbb{E}\left(\left\|\*y_{k,i}\right\|^2 + \left\|\*y_{k-1,i}\right\|^2\right) + C_3\sum_{k=1}^{K-1}\mathbb{E}\left\|\*\Omega_k\right\|^2_F\\
	\leq & C_1\mathbb{E}\left\|\*Y_{1}\right\|^2_F + 6C_2\alpha^2n\sigma^2K + 3C_2\alpha^4\sigma^2L^2K + 3C_2\alpha^4nL^2\sum_{k=1}^{K-1}\mathbb{E}\left\|\overline{\*G}_k\right\|^2\\
	& + 12C_2\alpha^2L^2\sum_{k=1}^{K-1}\sum_{i=2}^{n}\mathbb{E}\left\|\*y_{k,i}\right\|^2 + C_3\sum_{k=1}^{K-1}\mathbb{E}\left\|\*\Omega_k\right\|^2_F\\
	\end{align*}
	Rearrange the terms, we get
	\begin{align*}
	& (1 - 12C_2\alpha^2L^2)\sum_{i=2}^{n}\sum_{k=1}^{K}\mathbb{E}\left\|\*y_{k,i}\right\|^2\\
	\leq & C_1\mathbb{E}\left\|\*Y_{1}\right\|^2_F + 6C_2\alpha^2n\sigma^2K + 3C_2\alpha^4\sigma^2L^2K + 3C_2\alpha^4nL^2\sum_{k=1}^{K-1}\mathbb{E}\left\|\overline{\*G}_k\right\|^2 + C_3\sum_{k=1}^{K-1}\mathbb{E}\left\|\*\Omega_k\right\|^2_F\\
	\leq & C_1\mathbb{E}\left\|\*X_{1}\right\|^2_F + 6C_2\alpha^2n\sigma^2K + 3C_2\alpha^4\sigma^2L^2K + 3C_2\alpha^4nL^2\sum_{k=1}^{K-1}\mathbb{E}\left\|\overline{\*G}_k\right\|^2 + C_3\sum_{k=1}^{K-1}\mathbb{E}\left\|\*\Omega_k\right\|^2_F
	\end{align*}
Considering
	\begin{align*}
	\mathbb{E}\left\|\*X_1\right\|^2_F & = \alpha^2\mathbb{E}\left\|\*{\tilde{G}}_0\right\|^2_F\\
	& = \alpha^2\sum_{i=1}^{n}\mathbb{E}\left\|\*{\tilde{G}}_{0,i} - \*G_{0,i} + \*G_{0,i} - \nabla f(\*0) + \nabla f(\*0)\right\|^2\\
	& \leq 3\alpha^2\sum_{i=1}^{n}\mathbb{E}\left\|\*{\tilde{G}}_{0,i} - \*G_{0,i}\right\|^2 + 3\alpha^2\sum_{i=1}^{n}\mathbb{E}\left\|\*G_{0,i} - \nabla f(\*0)\right\|^2 + 3\alpha^2\sum_{i=1}^{n}\mathbb{E}\left\|\nabla f(\*0)\right\|^2\\
	& \leq 3\alpha^2n\sigma^2 + 3\alpha^2n\varsigma_0^2 + 3\alpha^2n\mathbb{E}\left\|\nabla f(\*0)\right\|
	\end{align*}
We finally get
	\begin{align*}
	& (1 - 12C_2\alpha^2L^2)\sum_{i=2}^{n}\sum_{k=1}^{K}\mathbb{E}\left\|\*y_{k,i}\right\|^2\\
	= & (1 - 12C_2\alpha^2L^2)\sum_{i=1}^{n}\sum_{k=1}^{K}\mathbb{E}\left\|\overline{\*X}_k - \*x_{k,i}\right\|^2\\
	\leq & 3C_1\alpha^2n\sigma^2 + 3C_1\alpha^2n\varsigma_0^2 + 3C_1\alpha^2n\mathbb{E}\left\|\nabla f(\*0)\right\| + 6C_2\alpha^2n\sigma^2K + 3C_2\alpha^4\sigma^2L^2K\\
	& + 3C_2\alpha^4nL^2\sum_{k=1}^{K-1}\mathbb{E}\left\|\overline{\*G}_k\right\|^2 + C_3\sum_{k=1}^{K-1}\mathbb{E}\left\|\*\Omega_k\right\|^2_F
	\end{align*}
That completes the proof.
\end{proof}

\begin{lemma}\label{D2_lemma3}
	Given $\rho\in\left(-\frac{1}{3}, 0\right)\cup\left(0, 1\right)$, for any two sequence $\{a_t\}_{t=1}^{\infty}$, $\{b_t\}_{t=1}^{\infty}$ and $\{c_t\}_{t=1}^{\infty}$ that satisfying
	\begin{displaymath}
	\begin{split}
	a_0 & = b_0 = 0,\\
	a_{t+1} & = \rho\left(2a_t - a_{t-1}\right) + b_t - b_{t-1} + c_t, \forall t \geq 1
	\end{split}
	\end{displaymath}
	we have
	\begin{displaymath}
	a_{t+1} = a_1\left(\frac{u^{t+1} - v^{t+1}}{u - v}\right) + \sum_{s=1}^{t}(b_s - b_{s-1} + c_s)\left(\frac{u^{t-s+1} - v^{t-s+1}}{u - v}\right), \forall t\geq 0
	\end{displaymath}
	where
	\begin{displaymath}
	u = \rho + \sqrt{\rho^2 - \rho}, v = \rho - \sqrt{\rho^2 - \rho}
	\end{displaymath}
	Moreover, if $0<\rho<1$, we have
	\begin{displaymath}
    a_{t+1} = a_1\rho^{\frac{t}{2}}\frac{\sin[(t+1)\phi]}{\sin\phi} + \sum_{s=1}^{t}(b_s - b_{s-1} + c_s)\rho^{\frac{t-s}{2}}\frac{\sin[(t-s+1)\phi]}{\sin\phi}
    \end{displaymath}
	where
	\begin{displaymath}
	\phi = \arccos\left(\sqrt{\rho}\right)
	\end{displaymath}
\end{lemma}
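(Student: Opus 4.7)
The plan is to solve the second-order linear inhomogeneous recursion $a_{t+1} - 2\rho a_t + \rho a_{t-1} = d_t$, where $d_t := b_t - b_{t-1} + c_t$, by the standard characteristic-polynomial method. The characteristic polynomial $\lambda^2 - 2\rho\lambda + \rho = 0$ has roots $u,v$ satisfying $u+v = 2\rho$ and $uv = \rho$; solving gives exactly $u = \rho + \sqrt{\rho^2-\rho}$ and $v = \rho - \sqrt{\rho^2-\rho}$. The restriction $\rho \in (-1/3,0)\cup(0,1)$ guarantees $u \neq v$, so the quotients $(u^n-v^n)/(u-v)$ are well-defined, and the two sub-intervals distinguish the case where $u,v$ are real (for $\rho<0$) from the case where they are complex conjugates (for $0<\rho<1$), which is precisely why the lemma splits into an algebraic and a trigonometric form.

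First I would handle the algebraic formula. Since $a_0 = 0$, the homogeneous solution with this initial condition and free parameter $a_1$ is $a_t^{\text{hom}} = a_1\,(u^t - v^t)/(u-v)$; one checks $a_0^{\text{hom}}=0$ and $a_1^{\text{hom}}=a_1$ directly. For the particular piece, I would introduce the discrete Green's function $G_s(t) := (u^{t-s}-v^{t-s})/(u-v)$, which satisfies the homogeneous recursion in $t$ with boundary values $G_s(s)=0$ and $G_s(s+1)=1$. By linear superposition, the zero-initial-data response to the forcing $\{d_s\}_{s\ge 1}$ is $\sum_{s=1}^t d_s\,G_s(t+1)$, which matches the summation in the claim. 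The cleanest way to actually verify this closed form is induction on $t$, leveraging the identity $u^{n+1}-v^{n+1} = 2\rho(u^n - v^n) - \rho(u^{n-1} - v^{n-1})$, a direct consequence of $u,v$ being roots of the characteristic polynomial; the inductive step then reduces to checking the recursion term-by-term.

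For the trigonometric form valid when $0<\rho<1$, I would write $u = \sqrt{\rho}\,e^{i\phi}$ and $v = \sqrt{\rho}\,e^{-i\phi}$ with $\phi = \arccos(\sqrt{\rho})$; this is consistent since $uv = \rho$ and $u+v = 2\sqrt{\rho}\cos\phi = 2\rho$. Then $u^n - v^n = 2i\,\rho^{n/2}\sin(n\phi)$ and $u - v = 2i\sqrt{\rho}\sin\phi$, so $(u^n-v^n)/(u-v) = \rho^{(n-1)/2}\sin(n\phi)/\sin\phi$; substituting $n=t+1$ and $n=t-s+1$ in the algebraic formula yields the second form. The main obstacle is really just bookkeeping around the index offsets in the Green's function (ensuring that forcing at time $s$ produces a response of depth $t-s+1$ at time $t+1$); once that is pinned down, both forms follow from the same identity, with the trigonometric case only requiring care in passing through complex exponentials correctly.
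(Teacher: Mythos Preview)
Your proposal is correct, and the trigonometric substitution at the end is identical to the paper's. The derivation of the algebraic formula, however, follows a different route from the paper. You reason structurally: identify the characteristic roots, write down the homogeneous solution matching $a_0=0,\,a_1=a_1$, superpose the discrete Green's function $G_s(t)=(u^{t-s}-v^{t-s})/(u-v)$ for the forcing, and then verify the result by induction using $u^{n+1}-v^{n+1}=2\rho(u^n-v^n)-\rho(u^{n-1}-v^{n-1})$. The paper instead factors the second-order operator into two first-order pieces, rewriting the recursion as $a_{t+1}-ua_t = v(a_t-ua_{t-1}) + d_t$, unwinding this first-order recursion to obtain $a_{t+1}-ua_t = a_1 v^t + \sum_{s=1}^t d_s v^{t-s}$, then dividing by $u^{t+1}$ and telescoping again to isolate $a_{t+1}$. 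Your approach is cleaner conceptually and the induction step is mechanical once the formula is written down; the paper's approach is fully constructive and never requires guessing the closed form, at the cost of some index bookkeeping when swapping the double sum. Either argument is complete for this lemma.
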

\begin{proof}
when $t \geq 1$, we have
\begin{displaymath}
a_{t+1} = 2\rho a_t - \rho a_{t-1} + b_t - b_{t-1} + c_t
\end{displaymath}
since,
\begin{displaymath}
u = \rho + \sqrt{\rho^2 - \rho}, v = \rho - \sqrt{\rho^2 - \rho}
\end{displaymath}
we obtain
\begin{displaymath}
a_{t+1} - ua_t = (a_t - ua_{t-1})v + b_t - b_{t-1} + c_t
\end{displaymath}
Recursively we have
\begin{displaymath}
\begin{split}
    a_{t+1} - ua_t & = (a_t - ua_{t-1})v + b_t - b_{t-1} + c_t\\
    & = (a_{t-1} - ua_{t-2})v^2 + (b_{t-1} - b_{t-2} + c_{t-1})v + b_t - b_{t-1} + c_t\\
    & = (a_1 - ua_0)v^t + \sum_{s=1}^{t}(b_s - b_{s-1} + c_s)v^{t-s}\\
    & = a_1v^t + \sum_{s=1}^{t}(b_s - b_{s-1} + c_s)v^{t-s}
\end{split}
\end{displaymath}
Dividing both sides by $u^{t+1}$, we have
\begin{displaymath}
\begin{split}
    \frac{a_{t+1}}{u^{t+1}} & = \frac{a_t}{u^t} + u^{-(t+1)}\left(a_1v^t + \sum_{s=1}^{t}(b_s - b_{s-1} + c_s)v^{t-s}\right)\\
    & = \frac{a_{t-1}}{u^{t-1}} + u^{-t}\left(a_1v^{t-1} + \sum_{s=1}^{t-1}(b_s - b_{s-1} + c_s)v^{t-1-s}\right)\\
    & + u^{-(t+1)}\left(a_1v^t + \sum_{s=1}^{t}(b_s - b_{s-1} + c_s)v^{t-s}\right)\\
    & = \frac{a_1}{u} + \sum_{k=1}^{t}u^{-k-1}\left(a_1v^k + \sum_{s=1}^{k}(b_s - b_{s-1} + c_s)v^{k-s}\right)
\end{split}
\end{displaymath}
Multiplying both sides by $u^{t+1}$
\begin{displaymath}
\begin{split}
    a_{t+1} & = a_1u^t + \sum_{k=1}^{t}u^{t-k}\left(a_1v^k + \sum_{s=1}^{k}(b_s - b_{s-1} + c_s)v^{t-s}\right)\\
    & = a_1u^t\left(1 + \sum_{k=1}^{t}\left(\frac{v}{u}\right)^k\right) + u^t\sum_{k=1}^{t}\sum_{s=1}^{k}(b_s - b_{s-1} + c_s)v^{-s}\left(\frac{v}{u}\right)^k\\
    & = a_1u^t\sum_{k=0}^{t}\left(\frac{v}{u}\right)^k + u^t\sum_{s=1}^{t}\sum_{k=s}^{t}(b_s - b_{s-1} + c_s)v^{-s}\left(\frac{v}{u}\right)^k\\
    & = a_1u^t\left(\frac{1-\left(\frac{v}{u}\right)^{t+1}}{1-\frac{v}{u}}\right) + u^t\sum_{s=1}^{t}(b_s - b_{s-1} + c_s)v^{-s}\left(\frac{v}{u}\right)^s\frac{1-\left(\frac{v}{u}\right)^{t-s-1}}{1-\frac{v}{u}}\\
    & = a_1\left(\frac{u^{t+1} - v^{t+1}}{u-v}\right) + \sum_{s=1}^{t}(b_s - b_{s-1} + c_s)\frac{u^{t-s+1} - v^{t-s+1}}{u-v}
\end{split}
\end{displaymath}
Note that when $0 < \rho < 1$, both $u$ and $v$ are complex numbers, we have
\begin{displaymath}
u = \sqrt{\rho}e^{i\phi}, v = \sqrt{\rho}e^{-i\phi}
\end{displaymath}
where $\phi = \arccos{\sqrt{\rho}}$. And under this context, we have
\begin{displaymath}
a_{t+1} = a_1\rho^{\frac{t}{2}}\frac{\sin[(t+1)\phi]}{\sin\phi} + \sum_{s=1}^{t}(b_s - b_{s-1} + c_s)\rho^{\frac{t-s}{2}}\frac{\sin[(t-s+1)\phi]}{\sin\phi}
\end{displaymath}
That completes the proof.
\end{proof}

\begin{lemma}\label{D2_lemma4}
	For any matrix $\*X\in\mathbb{R}^{N\times n}$, we have
	\begin{displaymath}
	\begin{split}
	\sum_{i=2}^{n}\left\|\*X\*v_i\right\|^2 & \leq \sum_{i=1}^{n}\left\|\*X\*v_i\right\|^2 = \left\|\*X\right\|^2_F\\
	\sum_{i=1}^{n}\left\|\*X\*P^\top \*e_i\right\|^2 & = \left\|\*X\*P^\top\right\|^2_F = \left\|\*X\right\|^2_F
	\end{split}
	\end{displaymath}
\end{lemma}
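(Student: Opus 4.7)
The plan is to exploit the fact that $\*P$, being the matrix of eigenvectors of the symmetric matrix $\*W$, is orthogonal, so its columns $\{\*v_1, \ldots, \*v_n\}$ form an orthonormal basis of $\mathbb{R}^n$. Two elementary facts about the Frobenius norm will do all the work: (i) $\|\*A\|_F^2$ equals the sum of squared norms of the columns of $\*A$, and (ii) multiplying by an orthogonal matrix on the right preserves the Frobenius norm, since $\|\*X\*Q\|_F^2 = \operatorname{tr}(\*Q^\top \*X^\top \*X \*Q) = \operatorname{tr}(\*X^\top \*X \*Q \*Q^\top) = \operatorname{tr}(\*X^\top \*X) = \|\*X\|_F^2$ by the cyclic property of trace and $\*Q\*Q^\top = \*I$.

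First I would write $\*X\*P = [\*X\*v_1, \*X\*v_2, \ldots, \*X\*v_n]$, which by (i) gives $\|\*X\*P\|_F^2 = \sum_{i=1}^{n}\|\*X\*v_i\|^2$, and by (ii) applied to the orthogonal $\*Q = \*P$, the left-hand side equals $\|\*X\|_F^2$. This establishes the equality $\sum_{i=1}^{n}\|\*X\*v_i\|^2 = \|\*X\|_F^2$. The inequality $\sum_{i=2}^{n}\|\*X\*v_i\|^2 \leq \sum_{i=1}^{n}\|\*X\*v_i\|^2$ is then immediate, since dropping the non-negative term $\|\*X\*v_1\|^2$ can only decrease the sum.

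For the second line, I apply the same template with $\*P^\top$ in place of $\*P$: the identity $\sum_{i=1}^{n}\|\*X\*P^\top \*e_i\|^2 = \|\*X\*P^\top\|_F^2$ is again fact (i) applied to the columns of $\*X\*P^\top$, and $\|\*X\*P^\top\|_F^2 = \|\*X\|_F^2$ is fact (ii) applied to the orthogonal matrix $\*P^\top$ (orthogonality of $\*P$ implies orthogonality of $\*P^\top$).

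There is no real obstacle here; the lemma is an elementary consequence of the orthogonality of $\*P$ guaranteed by Assumption~\ref{Assumption2}. The only thing to be careful about is making explicit that $\*P^\top \*P = \*P \*P^\top = \*I$ — which follows from the spectral theorem for the symmetric doubly stochastic matrix $\*W$ — since without this the trace manipulation that underpins fact (ii) would not go through.
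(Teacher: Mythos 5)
Your proposal is correct and follows essentially the same route as the paper: both identify $\sum_i\|\*X\*v_i\|^2$ with $\|\*X\*P\|_F^2$ column by column and then use the trace identity $\operatorname{tr}(\*X\*P\*P^\top\*X^\top)=\operatorname{tr}(\*X\*X^\top)$ (and its transpose counterpart for $\*P^\top$) together with orthogonality of $\*P$, with the inequality coming from dropping the non-negative $i=1$ term. No substantive difference.
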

\begin{proof}
\begin{displaymath}
\sum_{i=2}^{n}\left\|\*X_t\*v_i\right\|^2 \leq \sum_{i=1}^{n}\left\|\*X_t\*v_i\right\|^2 = \left\|\*X_t\*P\right\|^2_F = Tr(\*X_t\*P\*P^\top \*X_t^\top) = Tr(\*X_t\*X_t^\top) = \left\|\*X_t\right\|^2_F
\end{displaymath}
And similarly,
\begin{displaymath}
\sum_{i=1}^{n}\left\|\*X\*P^\top \*e_i\right\|^2 = \left\|\*X\*P^\top\right\|^2_F = Tr(\*X_t\*P^\top \*P\*X_t^\top) = Tr(\*X_t\*X_t^\top) = \left\|\*X_t\right\|^2_F
\end{displaymath}
That completes the proof.
\end{proof}

\begin{lemma}\label{D2_lemma5}
If we run Algorithm~\ref{D2algo} for $K$ iterations the following inequality holds:
\begin{align*}
&\left(1-\frac{3C_1\alpha^2L^2}{C_4}\right)\mathbb{E}\left\|\nabla f(\*0)\right\| + \left(1-\alpha L-3\frac{C_2}{C_4}\alpha^4L^4\right)\frac{1}{K}\sum_{k=1}^{K-1}\mathbb{E}\left\|\overline{\*G}_k\right\|^2 + \frac{1}{K}\sum_{k=0}^{K-1}\mathbb{E}\left\|\nabla f(\overline{\*X}_k)\right\|^2\\ 
\leq & \frac{2(f(0) - f^*)}{\alpha K} + \frac{\alpha L}{n}\sigma^2 + \frac{3C_1\alpha^2L^2(\sigma^2+\varsigma_0^2)}{C_4K} + 6\frac{C_2}{C_4}\alpha^2\sigma^2L^2 + 3\frac{C_2}{nC_4}\alpha^4\sigma^2L^4\\
+ & \frac{C_3L^2}{C_4}\left(\frac{6D_1n+8}{6D_2n+1}\right)^2\alpha^2G_\infty^2d
\end{align*}
where
\begin{align*}
    C_1 &= \max\left\{\frac{3}{1-|v_n|^2}, \frac{3}{(1-\lambda_2)^2}\right\}\\
    C_2 &= \max\left\{\frac{3\lambda_n^2}{(1-|v_n|)^2}, \frac{3\lambda_2^2}{(1-\sqrt{\lambda_2})^2(1-\lambda_2)}\right\}\\
    C_3 &= \max\left\{\frac{3}{(1-|v_n|)^2}, \frac{3}{(1-\sqrt{\lambda_2})^2(1-\lambda_2)}\right\}\\
    C_4 &= 1 - 12C_2\alpha^2L^2
\end{align*}
\end{lemma}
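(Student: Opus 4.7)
The plan is to combine a standard descent argument on the averaged iterates with the consensus bound of Lemma~\ref{D2_lemma2} and the quantization noise bound implied by Lemma~\ref{D2_lemma1}, closely mirroring the template of Lemma~\ref{dss ready lemma} but using the $D^2$-specific consensus control.

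The key preliminary observation is that, because $\*W$ is doubly stochastic and $(\*W-\*I)\*1=\*0$, the quantization perturbation $\*\Omega_k$ vanishes on averaging, and the second-order $D^2$ recursion collapses on the mean: by induction on $k$ one verifies that $\overline{\*X}_{k+1}-\overline{\*X}_k=-\alpha\overline{\*{\tilde{G}}}_k$ (base case $\overline{\*X}_1-\overline{\*X}_0=-\alpha\overline{\*{\tilde{G}}}_0$ from line~5 of Algorithm~\ref{D2algo}; inductive step by telescoping the recursion $\overline{\*X}_{k+1}-\overline{\*X}_k=(\overline{\*X}_k-\overline{\*X}_{k-1})-\alpha(\overline{\*{\tilde{G}}}_k-\overline{\*{\tilde{G}}}_{k-1})$). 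Thus the averaged update behaves exactly like vanilla SGD. Applying $L$-smoothness, splitting $\nabla f(\overline{\*X}_k)-\overline{\*G}_k$ by $2\langle a,b\rangle=\|a\|^2+\|b\|^2-\|a-b\|^2$, and using Assumption~\ref{D2_Assumption3} then yields a per-step descent inequality of the form
\[
\mathbb{E}f(\overline{\*X}_{k+1}) \leq \mathbb{E}f(\overline{\*X}_k) - \tfrac{\alpha(1-\alpha L)}{2}\mathbb{E}\|\overline{\*G}_k\|^2 - \tfrac{\alpha}{2}\mathbb{E}\|\nabla f(\overline{\*X}_k)\|^2 + \tfrac{\alpha^2 L\sigma^2}{2n} + \tfrac{\alpha L^2}{2n}\sum_{i=1}^{n}\mathbb{E}\|\overline{\*X}_k-\*x_{k,i}\|^2.
\]

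Summing over $k=0,\dots,K-1$ telescopes the function values into $2(f(\*0)-\mathbb{E}f^*)$ (using Assumption~\ref{D2_Assumption4}) and leaves the cumulative consensus residual $\sum_{k,i}\mathbb{E}\|\overline{\*X}_k-\*x_{k,i}\|^2$, which I would control with Lemma~\ref{D2_lemma2}. That lemma converts the residual into the terms $3C_1\alpha^2 n(\sigma^2+\varsigma_0^2+\mathbb{E}\|\nabla f(\*0)\|)$, $6C_2\alpha^2 n\sigma^2 K$, $3C_2\alpha^4\sigma^2 L^2 K$, $3C_2\alpha^4 nL^2\sum_k\mathbb{E}\|\overline{\*G}_k\|^2$, and $C_3\sum_k\mathbb{E}\|\*\Omega_k\|_F^2$, all divided by $C_4=1-12C_2\alpha^2 L^2$. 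It remains to bound the quantization term: Lemma~\ref{D2_lemma1} gives the a priori bound $\|\*x_{k+1/2,i}-\*x_{k+1/2,j}\|_\infty<\theta=(6D_1 n+8)\alpha G_\infty$, so the argument of Lemma~\ref{modifynoise} (applied column-wise to $(\*{\hat{X}}_{k+1/2}-\*X_{k+1/2})(\*W-\*I)$) yields each coordinate of $\*\Omega_k$ bounded by $\frac{4\delta}{1-2\delta}\theta$; with the choice $\delta=1/(12nD_2+2)$ this gives $\mathbb{E}\|\*\Omega_k\|_F^2\lesssim nd\bigl(\frac{6D_1 n+8}{6D_2 n+1}\bigr)^2\alpha^2 G_\infty^2$. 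Plugging this back and dividing by $\alpha K/2$ produces the stated inequality.

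The main obstacle is the careful bookkeeping of the self-referential terms. In particular, the coefficient $1-\alpha L-3C_2\alpha^4 L^4/C_4$ multiplying $\tfrac{1}{K}\sum_k\mathbb{E}\|\overline{\*G}_k\|^2$ arises from merging the $(1-\alpha L)$ factor in the descent step with the $3C_2\alpha^4 nL^2\sum_k\mathbb{E}\|\overline{\*G}_k\|^2$ term coming back through Lemma~\ref{D2_lemma2} (after division by $n$ via the $\alpha L^2/n$ smoothness coefficient); similarly, the $\left(1-3C_1\alpha^2 L^2/C_4\right)\mathbb{E}\|\nabla f(\*0)\|$ coefficient arises from the initialization term $\mathbb{E}\|\*Y_1\|_F^2$ in Lemma~\ref{D2_lemma2}, which feeds back because $\*Y_1=\*X_1\*P$ contains $\nabla f(\*0)$. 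One must also ensure $\alpha$ is small enough for $C_4>0$ so that the consensus inversion is valid. Once these coefficients are tracked carefully, rearrangement gives the lemma exactly as stated.
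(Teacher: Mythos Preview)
Your proposal is correct and follows essentially the same route as the paper: establish that the averaged iterates satisfy $\overline{\*X}_{k+1}-\overline{\*X}_k=-\alpha\overline{\*{\tilde G}}_k$ so the D-PSGD descent bound applies verbatim, plug in Lemma~\ref{D2_lemma2} for the consensus residual, bound $\sum_k\mathbb{E}\|\*\Omega_k\|_F^2$ via Lemma~\ref{modifynoise} together with the a priori discrepancy of Lemma~\ref{D2_lemma1} and the choice $\delta=1/(12nD_2+2)$, and rearrange the self-referential $\overline{\*G}_k$ and $\nabla f(\*0)$ terms. Your tracking of the coefficients $1-\alpha L-3C_2\alpha^4L^4/C_4$ and $1-3C_1\alpha^2L^2/C_4$ matches the paper's rearrangement step.
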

\begin{proof}
Since
\begin{align*}
\overline{\*X}_{k+1} & = (2\*X_k - \*X_{k-1} - \alpha\*{\tilde{G}}_k + \alpha\*{\tilde{G}}_{k-1})\*W\frac{\*1}{n} + (\*{\hat{X}}_{k+\frac{1}{2}}-\*X_{k+\frac{1}{2}})(\*W-\*I)\frac{\*1}{n}\\
& = 2\overline{\*X}_k - \overline{\*X}_{k-1} - \alpha\overline{\*{\tilde{G}}}_k + \alpha\overline{\*{\tilde{G}}}_{k-1}
\end{align*}
and we have
\begin{align*}
\overline{\*X}_{k+1} - \overline{\*X}_k & = \overline{\*X}_k - \overline{\*X}_{k-1} - \alpha\overline{\*{\tilde{G}}}_k + \alpha\overline{\*{\tilde{G}}}_{k-1}\\
& = \overline{\*X}_1 - \overline{\*X}_{0} - \alpha\sum_{t=1}^{k}(\overline{\*{\tilde{G}}}_t - \overline{\*{\tilde{G}}}_{t-1})\\
& = -\alpha\overline{\*{\tilde{G}}}_k
\end{align*}
Note that the update of the averaged model is exactly the same as D-PSGD, thus we can reuse the result from D-PSGD for $D^2$ as follows:
\begin{displaymath}
\frac{1 - \alpha L}{K}\sum_{k=0}^{K-1}\mathbb{E}\left\|\overline{\*G}_k\right\|^2 + \frac{1}{K}\sum_{k=0}^{K-1}\mathbb{E}\left\|\nabla f(\overline{\*X}_k)\right\|^2 
\leq \frac{2(f(0) - f^*)}{\alpha K} + \frac{\alpha L}{n}\sigma^2 + \frac{ L^2}{nK}\sum_{k=0}^{K-1}\sum_{i=1}^{n}\mathbb{E}\left\|\overline{\*X}_k - \*x_{k,i}\right\|^2
\end{displaymath}
From Lemma~\ref{D2_lemma2} we obatin
\begin{align*}
&\frac{1 - \alpha L}{K}\sum_{k=0}^{K-1}\mathbb{E}\left\|\overline{\*G}_k\right\|^2 + \frac{1}{K}\sum_{k=0}^{K-1}\mathbb{E}\left\|\nabla f(\overline{\*X}_k)\right\|^2\\ 
\leq & \frac{2(f(\*0) - f^*)}{\alpha K} + \frac{\alpha L}{n}\sigma^2 + \frac{3C_1\alpha^2L^2(\sigma^2+\varsigma_0^2+\mathbb{E}\left\|\nabla f(\*0)\right\|)}{C_4K} + 6\frac{C_2}{C_4}\alpha^2\sigma^2L^2 + 3\frac{C_2}{nC_4}\alpha^4\sigma^2L^4\\
+ & 3\frac{C_2}{C_4}\alpha^4L^4\frac{1}{K}\sum_{k=1}^{K-1}\mathbb{E}\left\|\overline{\*G}_k\right\|^2 + \frac{C_3L^2}{C_4nK}\sum_{k=1}^{K-1}\mathbb{E}\left\|\*\Omega_k\right\|^2_F
\end{align*}
Rearrange the terms, we get
\begin{align*}
&\left(1-\frac{3C_1\alpha^2L^2}{C_4}\right)\mathbb{E}\left\|\nabla f(\*0)\right\| + \left(1-\alpha L-3\frac{C_2}{C_4}\alpha^4L^4\right)\frac{1}{K}\sum_{k=1}^{K-1}\mathbb{E}\left\|\overline{\*G}_k\right\|^2 + \frac{1}{K}\sum_{k=0}^{K-1}\mathbb{E}\left\|\nabla f(\overline{\*X}_k)\right\|^2\\ 
\leq & \frac{2(f(0) - f^*)}{\alpha K} + \frac{\alpha L}{n}\sigma^2 + \frac{3C_1\alpha^2L^2(\sigma^2+\varsigma_0^2)}{C_4K} + 6\frac{C_2}{C_4}\alpha^2\sigma^2L^2 + 3\frac{C_2}{nC_4}\alpha^4\sigma^2L^4 + \frac{C_3L^2}{C_4nK}\sum_{k=1}^{K-1}\mathbb{E}\left\|\*\Omega_k\right\|^2_F
\end{align*}
Similar to the case in D-PSGD, we have
\begin{align*}
\sum_{k=0}^{K-1}\mathbb{E}\left\|\*\Omega_k\right\|^2_F
= & \sum_{k=0}^{K-1}\sum_{i=1}^{n}\mathbb{E}\left\|\sum_{j=1}^{n}\left((\*{\hat{x}}_{k+{\frac{1}{2}},j}-\*x_{k+{\frac{1}{2}},j}) - (\*{\hat{x}}_{k+{\frac{1}{2}},i}-\*x_{k+{\frac{1}{2}},i})\right)\*W_{ji}\right\|^2\\
\overset{\text{Lemma}~\ref{modifynoise}}{\leq} & 4\sum_{k=0}^{K-1}\sum_{i=1}^{n}\delta^2 B_\theta^2d
\leq \left(\frac{6D_1n+8}{6D_2n+1}\right)^2\alpha^2G_\infty^2dnK
\end{align*}
Putting it back, we obtain
\begin{align*}
&\left(1-\frac{3C_1\alpha^2L^2}{C_4}\right)\mathbb{E}\left\|\nabla f(\*0)\right\| + \left(1-\alpha L-3\frac{C_2}{C_4}\alpha^4L^4\right)\frac{1}{K}\sum_{k=1}^{K-1}\mathbb{E}\left\|\overline{\*G}_k\right\|^2 + \frac{1}{K}\sum_{k=0}^{K-1}\mathbb{E}\left\|\nabla f(\overline{\*X}_k)\right\|^2\\
\leq & \frac{2(f(\*0) - f^*)}{\alpha K} + \frac{\alpha L}{n}\sigma^2 + \frac{3C_1\alpha^2L^2(\sigma^2+\varsigma_0^2)}{C_4K} + 6\frac{C_2}{C_4}\alpha^2\sigma^2L^2 + 3\frac{C_2}{nC_4}\alpha^4\sigma^2L^4 + \frac{C_3L^2}{C_4}\left(\frac{6D_1n+8}{6D_2n+1}\right)^2\alpha^2G_\infty^2d
\end{align*}
That completes the proof.
\end{proof}
\section{Moniqua on AD-PSGD (Proof to Theorem~\ref{thmMoniquaAD-PSGD})}\label{Async Moniqua}
\subsection{Definition and Notation}
In the original analysis of AD-PSGD, to better capture the nature of workers computing at different speed, the objective function is expressed as 
\begin{align*}
f(\*x) = \sum_{i=1}^{n}p_if_i(\*x)
\end{align*}
where $p_i$ is a parameter denoting the speed of $i$-th worker gradient updates. In the rest of the proof, we denote $p=\max_i\{p_i\}$

For simplicity, we also define the following terms
\begin{align*}
\nabla F(\*X_k) & = n\left[p_1\*g_{k,1}, \cdots, p_n\*g_{k,n}\right]\in\mathbb{R}^{d\times n}\\
\nabla\widetilde{F}(\*X_k) & = n\left[p_1\*{\tilde{g}}_{k,1}, \cdots, p_n\*{\tilde{g}}_{k,n}\right]\in\mathbb{R}^{d\times n}\\
\*{\tilde{G}}_k & = \left[\cdots, \*{\tilde{g}}_{k,i_k}, \cdots\right]\\
\*G_k & = \left[\cdots, \*g_{k,i_k}, \cdots\right]\\
\*\Lambda_{a}^{b} & = \frac{\*1\*1^\top}{n} - \prod_{q=a}^{b}\*W_q
\end{align*}
\subsection{Setting}
The pseudo code can be found in Algorithm~\ref{async_algo}.
\begin{algorithm}[t]
	\caption{Moniqua with Asynchronous Communication}\label{async_algo}
	\begin{algorithmic}[1]
		\Require initial point $\*x_{0,i} = \*x_0$, step size $\alpha$, the discrepency bound $B_\theta$, number of iterations $K$, quantization function $\mathcal{Q}_\delta$, initial random seed
		
		\For{$k=0,1,2,\cdots,K-1$}
		\State worker $i_k$ is updating the gradient while during this iteration the global communication behaviour is written in the form of $\*W_k$.
		
		\State Compute a local stochastic gradient with model delayed by $\tau_k$: $\*{\tilde{g}}_{k-\tau_k,i_k}$
		
		\State Send modulo-ed model to one randomly selected neighbor $j_k$: $\*q_{k,i_k}\leftarrow \mathcal{Q}_\delta\left(\frac{\*x_{k,i_k}}{B_\theta} \bmod 1\right)$
		
		\State Compute local biased term $\*{\hat{x}}_{k,i_k}$ as:
		\begin{align*}
		    \*{\hat{x}}_{k,i_k}=\*q_{k,i_k}B_\theta-\*x_{k,i_k}\bmod B_\theta + \*x_{k,i_k}
		\end{align*}
		
		\State Randomly select one neighbor $j_k$ and recover its model as:
		\begin{align*}
		    \*{\hat{x}}_{k,j_k}=(\*q_{k,j_k}B_\theta-\*x_{k,i_k})\bmod B_\theta + \*x_{k,i}
		\end{align*}
		
		\State Average with neighboring workers: $\*x_{k,i_k} \leftarrow \*x_{k,i_k} + \sum_{j\in\mathcal{N}_i}(\*{\hat{x}}_{k,j_k}-\*{\hat{x}}_{k,i_k})\*W_{ji}$
		
		\State Update the local weight with local gradient: $\*x_{k+1,i_k} \leftarrow \*x_{k,i_k} - \alpha \*{\tilde{g}}_{k-\tau_k,i_k}$		
		\EndFor
		\Ensure $\overline{\*X}_K=\frac{1}{n}\sum\nolimits_{i=1}^{n}\*x_{K,i}$
	\end{algorithmic}
\end{algorithm}
We makes the following assumptions:
\begin{enumerate}
\item \textbf{Lipschitzian Gradient}: All the function $f_i$ have L-Lipschitzian gradients.\label{A_Assumption1}
\item \textbf{Communication Matrix \footnote{Please refer to Section~\ref{matrix} for more details}}: The communication matrix $\*W_k$ is doubly stochastic for any $k\geq 0$ and for any $b\geq a\geq 0$, there exists $\tmix{}$ such that
\begin{align*}
\left\|\prod_{q=a}^{b}\*W_q\left(\*I - \frac{\*1\*1^\top}{n}\right)\right\|_1 \leq 2\cdot 2^{-\left\lfloor\frac{b-a+1}{\tmix{}}\right\rfloor}
\end{align*}\label{A_Assumption2}
\item \textbf{Bounded Variance}:
\begin{align*}
& \mathbb{E}_{\xi_i\sim\mathcal{D}_i}\left\|\nabla\tilde{f}_i(\*x;\xi_i) - \nabla f_i(\*x)\right\|^2 \leq \sigma^2, \forall i\\
& \mathbb{E}_{i\sim\{1, \cdots, n\}}\left\|\nabla f_i(\*x) - \nabla f(\*x)\right\|^2 \leq \varsigma^2, \forall i
\end{align*}
where $\nabla\tilde{f}_i(\*x;\xi_i)$ denotes gradient sample on worker $i$ computed via data sample $\xi_i$.\label{A_Assumption3}
\item \textbf{Bounded Staleness}: There exists $T$ such that $\tau_k\leq T, \forall k$ \label{A_Assumption4}
\item \textbf{Gradient magnitude}: The norm of a sampled gradient is bounded by $\left\|\*{\tilde{g}}_{k,i}\right\|_\infty \leq G_\infty$ for some constant $G_\infty$. \label{A_Assumption5}
\end{enumerate}

\subsection{Proof to Theorem~\ref{thmMoniquaAD-PSGD}.}
\begin{proof}
We start from
\begin{align*}
&\frac{1}{K}\sum_{k=0}^{K-1}\mathbb{E}\left\|\nabla f(\overline{\*X}_{k})\right\|^2 + \left(1-\frac{2\alpha L}{n}\right)\frac{1}{K}\sum_{k=0}^{K-1}\mathbb{E}\left\|\nabla\overline{F}(\*X_{k-\tau_k})\right\|^2\\
\overset{\text{Lemma~\ref{Async_lemma4}}}{\leq} & \frac{2n(f(\*0) - f^*)}{\alpha K} + \frac{(\sigma^2+6\varsigma^2)\alpha L}{n} + \left(2L^2 + \frac{12\alpha L^3}{n}\right)\frac{1}{K}\sum_{k=0}^{K-1}\sum_{i=1}^{n}p_i\mathbb{E}\left\|\*X_{k-\tau_k}\left(\frac{\*1}{n} - \*e_i\right)\right\|^2\\
+ & \frac{2L^2}{K}\sum_{k=0}^{K-1}\mathbb{E}\left\|\frac{(\*X_k - \*X_{k-\tau_k})\*1}{n}\right\|^2\\
\overset{\text{Lemma~\ref{Async_lemma5}}}{\leq} & \frac{2n(f(\*0) - f^*)}{\alpha K} + \frac{(\sigma^2+6\varsigma^2)\alpha L}{n} + \frac{2\alpha^2T^2(\sigma^2+6\varsigma^2)L^2}{n^2} + \frac{4\alpha^2T^2L^2}{n^2K}\sum_{k=0}^{K-1}\mathbb{E}\left\| \sum_{i=1}^{n}p_i\*g_{k-\tau_k,i}\right\|^2\\
+ & \left(2L^2 + \frac{12\alpha L^3}{n} + \frac{24L^4\alpha^2T^2}{n^2}\right)\frac{1}{K}\sum_{k=0}^{K-1}\sum_{i=1}^{n}p_i\mathbb{E}\left\|\*X_{k-\tau_k}\left(\frac{\*1}{n} - e_i\right)\right\|^2\\
\overset{\text{Lemma~\ref{Async_lemma3}}}{\leq} & \frac{2n(f(\*0) - f^*)}{\alpha K} + \frac{(\sigma^2+6\varsigma^2)\alpha L}{n} + \frac{2\alpha^2T^2(\sigma^2+6\varsigma^2)L^2}{n^2} + \frac{4\alpha^2T^2L^2}{n^2K}\sum_{k=0}^{K-1}\mathbb{E}\left\| \sum_{i=1}^{n}p_i\*g_{k-\tau_k,i}\right\|^2\\
+ & \frac{128\alpha^2\tmix^2{}L^2}{A_1}\left((\sigma^2+6\varsigma^2)p + \frac{2p}{K}\sum_{k=0}^{K-1}\mathbb{E}\left\| \sum_{i=1}^{n}p_i\*g_{k-\tau_k,i}\right\|^2 + G_\infty^2d\right)\\
\end{align*}
where $A_1 = 1-192p\alpha^2\tmix^2{}L^2$ as defined in Lemma~\ref{Async_lemma3}.

Rearrange the terms, we get
\begin{align*}
\frac{1}{K}\sum_{k=0}^{K-1}\mathbb{E}\left\|\nabla f(\overline{\*X}_{k})\right\|^2 & \leq \frac{2n(f(\*0) - f^*)}{\alpha K} + \frac{(\sigma^2+6\varsigma^2)\alpha L}{n} + \frac{2\alpha^2T^2(\sigma^2+6\varsigma^2)L^2}{n^2}\\
& + \frac{128p\alpha^2\tmix^2{}L^2}{A_1}(\sigma^2+6\varsigma^2) + \frac{128\alpha^2\tmix^2{}L^2}{A_1}G_\infty^2d
\end{align*}
By setting $\alpha=\frac{n}{2L + \sqrt{K(\sigma^2+6\varsigma^2)}}$
\begin{align*}
\frac{1}{K}\sum_{k=0}^{K-1}\mathbb{E}\left\|\nabla f(\overline{\*X}_{k})\right\|^2 \lesssim & \frac{1}{K} + \frac{\sqrt{\sigma^2+6\varsigma^2}}{\sqrt{K}} + \frac{p\tmix^2{}(\sigma^2+6\varsigma^2)n^2}{(\sigma^2+6\varsigma^2)K+4L^2} + \frac{n^2\tmix^2{}G_\infty^2d}{(\sigma^2+6\varsigma^2)K+4L^2}\\
\lesssim & \frac{1}{K} + \frac{\sqrt{\sigma^2+6\varsigma^2}}{\sqrt{K}} + \frac{(\sigma^2+6\varsigma^2)\tmix^2{}n^2}{(\sigma^2+6\varsigma^2)K+1} + \frac{n^2\tmix^2{}G_\infty^2d}{(\sigma^2+6\varsigma^2)K+1}
\end{align*}
\end{proof}


\subsection{Lemma for Moniqua on AD-PSGD}
\begin{lemma}\label{Async_lemma1}
\begin{displaymath}
\mathbb{E}\left\|\*{\tilde{G}}_{k-\tau_k}\frac{\*1}{n}\right\|^2 \leq \frac{\sigma^2}{n^2} + \frac{1}{n^2}\sum_{i=1}^{n}p_i\mathbb{E}\left\|\*g_{k-\tau_k,i}\right\|^2, \forall k\geq 0.
\end{displaymath}
\end{lemma}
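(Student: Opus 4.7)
The plan is to unpack the definitions, apply the tower property to handle the randomness in the worker index $i_k$, and then use the standard variance decomposition together with Assumption~\ref{A_Assumption3}. First I would observe that under the AD-PSGD formalism, $\*{\tilde{G}}_k$ is a sparse matrix whose only nonzero column sits in position $i_k$ and equals $\*{\tilde{g}}_{k,i_k}$. Consequently
\[
\*{\tilde{G}}_{k-\tau_k}\tfrac{\*1}{n} \;=\; \tfrac{1}{n}\,\*{\tilde{g}}_{k-\tau_k,\,i_k},
\]
so that $\mathbb{E}\|\*{\tilde{G}}_{k-\tau_k}\*1/n\|^2 = \tfrac{1}{n^2}\mathbb{E}\|\*{\tilde{g}}_{k-\tau_k,\,i_k}\|^2$.

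Next I would condition on everything except the random choice of active worker $i_k$. Since worker $i$ is activated with probability $p_i$, taking expectation over $i_k$ gives
\[
\mathbb{E}\bigl\|\*{\tilde{g}}_{k-\tau_k,\,i_k}\bigr\|^2 \;=\; \sum_{i=1}^{n} p_i\, \mathbb{E}\bigl\|\*{\tilde{g}}_{k-\tau_k,\,i}\bigr\|^2.
\]
Then for each worker $i$, I would split the stochastic gradient into its mean plus noise and exploit the fact that the gradient sampling noise is zero-mean and independent of the model state, yielding
\[
\mathbb{E}\bigl\|\*{\tilde{g}}_{k-\tau_k,\,i}\bigr\|^2
= \mathbb{E}\bigl\|\*{\tilde{g}}_{k-\tau_k,\,i}-\*g_{k-\tau_k,\,i}\bigr\|^2 + \mathbb{E}\bigl\|\*g_{k-\tau_k,\,i}\bigr\|^2
\le \sigma^2 + \mathbb{E}\bigl\|\*g_{k-\tau_k,\,i}\bigr\|^2,
\]
by Assumption~\ref{A_Assumption3}.

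Finally I would plug this bound back and use $\sum_{i=1}^n p_i = 1$ (which holds since $f=\sum_i p_i f_i$ is a convex combination) to obtain
\[
\mathbb{E}\bigl\|\*{\tilde{G}}_{k-\tau_k}\tfrac{\*1}{n}\bigr\|^2
\le \tfrac{1}{n^2}\Bigl(\sigma^2 + \sum_{i=1}^{n} p_i\,\mathbb{E}\bigl\|\*g_{k-\tau_k,\,i}\bigr\|^2\Bigr),
\]
which is the claimed inequality. There is no real obstacle here: the only subtlety worth stating carefully is the independence between the sampling of $i_k$ and the freshness of the iterate $\*x_{k-\tau_k,\,i}$, which justifies pulling the $p_i$ outside the inner expectation; everything else is a two-line variance decomposition.
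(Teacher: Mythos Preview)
Your proposal is correct and follows essentially the same route as the paper: identify $\*{\tilde{G}}_{k-\tau_k}\tfrac{\*1}{n}=\tfrac{1}{n}\*{\tilde{g}}_{k-\tau_k,i_k}$, average over $i_k$ with weights $p_i$, and apply the variance decomposition together with Assumption~\ref{A_Assumption3}. The paper's proof is terser (it writes the first step as an inequality without unpacking the sparse structure of $\*{\tilde{G}}_k$), but the argument is the same.
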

\begin{proof}
\begin{align*}
\mathbb{E}\left\|\*{\tilde{G}}_{k-\tau_k}\frac{\*1}{n}\right\|^2 & \leq \sum_{i=1}^{n}p_i\mathbb{E}\left\|\frac{\*{\tilde{g}}_{k-\tau_k,i}}{n}\right\|^2\\
& = \sum_{i=1}^{n}p_i\mathbb{E}\left\|\frac{\*{\tilde{g}}_{k-\tau_k,i} - \*g_{k-\tau_k,i}}{n}\right\|^2 + \sum_{i=1}^{n}p_i\mathbb{E}\left\|\frac{\*g_{k-\tau_k,i}}{n}\right\|^2\\
& \leq \frac{\sigma^2}{n^2} + \frac{1}{n^2}\sum_{i=1}^{n}p_i\mathbb{E}\left\|\*g_{k-\tau_k,i}\right\|^2
\end{align*}
\end{proof}

\begin{lemma}\label{Async_lemma2}	\begin{displaymath}
	\sum_{i=1}^{n}p_i\mathbb{E}\left\|\*g_{k-\tau_k,i}\right\|^2 \leq 12L^2\sum_{i=1}^{n}p_i\mathbb{E}\left\|\*X_{k-\tau_k}\left(\frac{\*1}{n} - \*e_i\right)\right\|^2 + 6\varsigma^2 + 2\mathbb{E}\left\| \sum_{i=1}^{n}p_i\*g_{k-\tau_k,i}\right\|^2, \forall k \geq 0.
\end{displaymath}
\end{lemma}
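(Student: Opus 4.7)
The plan is to bound $\sum_i p_i \mathbb{E}\|\*g_{k-\tau_k,i}\|^2$ by a standard bias-variance split that isolates the three ingredients on the right-hand side: a consensus-error term controlled by the Lipschitz assumption (A1), an outer-variance term controlled by (A3), and a ``signal'' term $\|\sum_j p_j\*g_{k-\tau_k,j}\|^2$. I will write $\bar{\*x} = \*X_{k-\tau_k}\frac{\*1}{n}$ for brevity and use the identification $\|\*X_{k-\tau_k}(\frac{\*1}{n}-\*e_i)\| = \|\bar{\*x}-\*x_{k-\tau_k,i}\|$ to match the form in the statement.

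The first step is to split the local full-batch gradient $\*g_{k-\tau_k,i} = \nabla f_i(\*x_{k-\tau_k,i})$ into three pieces,
\[
\*g_{k-\tau_k,i} = \bigl[\nabla f_i(\*x_{k-\tau_k,i}) - \nabla f_i(\bar{\*x})\bigr] + \bigl[\nabla f_i(\bar{\*x}) - \nabla f(\bar{\*x})\bigr] + \nabla f(\bar{\*x}).
\]
The first summand is bounded pointwise by $L\|\bar{\*x}-\*x_{k-\tau_k,i}\|$ via (A1), and the second summand, once summed against $p_i$, is bounded by $\varsigma^2$ via (A3). The third summand depends only on the consensus average $\bar{\*x}$, so the work is to tie it to the signal term $\sum_j p_j\*g_{k-\tau_k,j}$ rather than leaving it as $\nabla f(\bar{\*x})$.

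For that tie, I will use $\nabla f(\bar{\*x}) = \sum_j p_j \nabla f_j(\bar{\*x})$ together with the identity
\[
\nabla f(\bar{\*x}) - \sum_j p_j\*g_{k-\tau_k,j} = \sum_j p_j\bigl[\nabla f_j(\bar{\*x}) - \nabla f_j(\*x_{k-\tau_k,j})\bigr],
\]
so that Jensen's inequality followed by (A1) converts the deviation of $\nabla f(\bar{\*x})$ from $\sum_j p_j\*g_{k-\tau_k,j}$ into another consensus-error contribution. Combining all the pieces via $\|\*a+\*b\|^2 \le 2(\|\*a\|^2+\|\*b\|^2)$ at the outermost split yields the tight factor of $2$ on the signal term, and slacker Young-style splits in the interior produce the $12L^2$ consensus-error coefficient and the $6\varsigma^2$ outer-variance coefficient. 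Taking expectation at the very end passes through unchanged, since the whole argument is a deterministic inequality on the sampled iterates.

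I do not expect a serious obstacle here; the lemma is a straightforward bias-variance decomposition adapted to heterogeneous local objectives and client weights $p_i$. The only point to watch is keeping the final signal term as $\|\sum_j p_j\*g_{k-\tau_k,j}\|^2$ rather than letting it collapse into $\|\nabla f(\bar{\*x})\|^2$, which is exactly what the identity in the previous display is designed to enforce.
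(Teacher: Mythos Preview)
Your proposal is correct and essentially identical to the paper's proof. Both arguments write
\[
\*g_{k-\tau_k,i}-\sum_j p_j\*g_{k-\tau_k,j}
=\bigl[\nabla f_i(\*x_{k-\tau_k,i})-\nabla f_i(\bar{\*x})\bigr]
+\bigl[\nabla f_i(\bar{\*x})-\nabla f(\bar{\*x})\bigr]
+\bigl[\nabla f(\bar{\*x})-\sum_j p_j\nabla f_j(\*x_{k-\tau_k,j})\bigr],
\]
apply the outer two-term split to get the factor $2$ on the signal, and then use $\|a+b+c\|^2\le3(\|a\|^2+\|b\|^2+\|c\|^2)$ on the three pieces above together with (A1), (A3), and Jensen to produce the $12L^2$ and $6\varsigma^2$ constants; the only cosmetic difference is that the paper isolates the signal term first and then decomposes the residual, while you decompose first and then regroup.
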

\begin{proof}
\begin{align*}
	\sum_{i=1}^{n}p_i\mathbb{E}\left\|\*g_{k-\tau_k,i}\right\|^2 & = \sum_{i=1}^{n}p_i\mathbb{E}\left\|\*g_{k-\tau_k,i} -  \sum_{i=1}^{n}p_i\*g_{k-\tau_k,i} +  \sum_{i=1}^{n}p_i\*g_{k-\tau_k,i}\right\|^2\\
	& \leq 2\sum_{i=1}^{n}p_i\mathbb{E}\left\|\*g_{k-\tau_k,i} -  \sum_{i=1}^{n}p_i\*g_{k-\tau_k,i}\right\|^2 + 2\sum_{i=1}^{n}p_i\mathbb{E}\left\| \sum_{i=1}^{n}p_i\*g_{k-\tau_k,i}\right\|^2\\
	& = 2\sum_{i=1}^{n}p_i\mathbb{E}\left\|\*g_{k-\tau_k,i} -  \sum_{i=1}^{n}p_i\*g_{k-\tau_k,i}\right\|^2 + 2\mathbb{E}\left\| \sum_{i=1}^{n}p_i\*g_{k-\tau_k,i}\right\|^2
\end{align*}
And
\begin{align*}
&\sum_{i=1}^{n}p_i\mathbb{E}\left\|\*g_{k-\tau_k,i} -  \sum_{i=1}^{n}p_i\*g_{k-\tau_k,i}\right\|^2\\
\leq & 3\sum_{i=1}^{n}p_i\mathbb{E}\left\|\*g_{k-\tau_k,i} - \nabla f_i(\overline{\*X}_{k-\tau_k})\right\|^2 + 3\sum_{i=1}^{n}p_i\mathbb{E}\left\|\nabla f_i(\overline{\*X}_{k-\tau_k}) - \sum_{j=1}^{n}p_j\nabla f_j(\overline{\*X}_{k-\tau_k})\right\|^2\\
& + 3\sum_{i=1}^{n}p_i\mathbb{E}\left\| \sum_{i=1}^{n}p_i\*g_{k-\tau_k,i} - \sum_{j=1}^{n}p_j\nabla f_j(\overline{\*X}_{k-\tau_k})\right\|^2\\
\leq & 3L^2\sum_{i=1}^{n}p_i\mathbb{E}\left\|\*x_{k-\tau_k,i} - \overline{\*X}_{k-\tau_k}\right\|^2 + 3\sum_{i=1}^{n}p_i\mathbb{E}\left\|\nabla f_i(\overline{\*X}_{k-\tau_k}) - \sum_{j=1}^{n}p_j\nabla f_j(\overline{\*X}_{k-\tau_k})\right\|^2\\
& + 3\mathbb{E}\left\| \sum_{i=1}^{n}p_i\*g_{k-\tau_k,i} - \sum_{j=1}^{n}p_j\nabla f_j(\overline{\*X}_{k-\tau_k})\right\|^2\\
\leq & 3L^2\sum_{i=1}^{n}p_i\mathbb{E}\left\|\*X_{k-\tau_k}\left(\frac{\*1}{n} - e_i\right)\right\|^2 + 3\sum_{i=1}^{n}p_i\mathbb{E}\left\|\nabla f_i(\overline{\*X}_{k-\tau_k}) - \nabla f(\overline{\*X}_{k-\tau_k})\right\|^2\\
& + 3\sum_{j=1}^{n}p_j\mathbb{E}\left\|\*g_{k-\tau_k,j} - \nabla f_j(\overline{\*X}_{k-\tau_k})\right\|^2\\
\leq & 6L^2\sum_{i=1}^{n}p_i\mathbb{E}\left\|\*X_{k-\tau_k}\left(\frac{\*1}{n} - \*e_i\right)\right\|^2 + 3\varsigma^2
\end{align*}
That completes the proof.
\end{proof}

\begin{lemma}\label{Async_lemma3}
Let $A_1 = 1-192p\alpha^2\tmix^2{}L^2$,
\begin{align*}
\sum_{k=0}^{K-1}\sum_{i=1}^{n}p_i\mathbb{E}\left\|\*X_{{k-\tau_k}}\left(\frac{\*1}{n} - e_i\right)\right\|^2
\leq & \frac{32\alpha^2\tmix^2{}}{A_1}\left((\sigma^2+6\varsigma^2)pK + 2p\sum_{k=0}^{K-1}\mathbb{E}\left\| \sum_{i=1}^{n}p_i\*g_{k-\tau_k,i}\right\|^2 + G_\infty^2dK\right)
\end{align*}
\end{lemma}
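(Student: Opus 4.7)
The plan is to unroll the AD-PSGD-with-Moniqua recursion $\*X_{k+1}=\*X_k\*W_k+(\*{\hat{X}}_k-\*X_k)(\*W_k-\*I)-\alpha\*{\tilde{G}}_{k-\tau_k}$ starting from $\*X_0=\*0$, so that for any index $m\ge 0$ (we will take $m=k-\tau_k$)
\[
\*X_m=-\alpha\sum_{t=0}^{m-1}\*{\tilde{G}}_{t-\tau_t}\!\!\prod_{q=t+1}^{m-1}\!\!\*W_q+\sum_{t=0}^{m-1}\*\Omega_t\!\!\prod_{q=t+1}^{m-1}\!\!\*W_q,\qquad \*\Omega_t=(\*{\hat{X}}_t-\*X_t)(\*W_t-\*I).
\]
Since $\tfrac{\*1}{n}-\*e_i=\bigl(\tfrac{\*1\*1^\top}{n}-\*I\bigr)\*e_i$ and each $\*W_q$ is doubly stochastic, right-multiplying by $\tfrac{\*1}{n}-\*e_i$ introduces a factor of the form $\*\Lambda_{t+1}^{m-1}\*e_i=\bigl(\tfrac{\*1\*1^\top}{n}-\prod_{q=t+1}^{m-1}\*W_q\bigr)\*e_i$, which is exactly what the mixing-time bound from Section~\ref{matrix} controls: $\|\*\Lambda_{t+1}^{m-1}\|_1\le 2\cdot 2^{-\lfloor (m-1-t)/\tmix{}\rfloor}$.

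From here I would split the bound into the gradient contribution and the Moniqua-quantization contribution and apply Cauchy--Schwarz / Jensen over the $m-t$ weights given by the mixing bound. For the gradient part, rewrite $\*{\tilde{G}}_{t-\tau_t}=(\*{\tilde{G}}_{t-\tau_t}-\*G_{t-\tau_t})+\*G_{t-\tau_t}$; the sampling-noise piece contributes $\sigma^2$ via Assumption~\ref{A_Assumption3}, and the full-gradient piece can be further decomposed as in Lemma~\ref{Async_lemma2} into outer-variance $\varsigma^2$ and the average gradient $\|\sum_ip_i\*g_{k-\tau_k,i}\|^2$, together with an $L^2\,\|\*X_{k-\tau_k}(\tfrac{\*1}{n}-\*e_i)\|^2$ term. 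For the quantization piece, apply Lemma~\ref{modifynoise} together with the a~priori bound $\theta=16\tmix{}\alpha G_\infty$ and $\delta=1/(64\tmix{}+2)$ chosen in Theorem~\ref{thmMoniquaAD-PSGD}, so that $\|\*\Omega_t\|_{1,\infty}\le G_\infty$ (up to universal constants) and the entries are bounded in $\ell_\infty$ by a small multiple of $\alpha G_\infty$.

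Summing over $t$ against $2\cdot 2^{-\lfloor(m-1-t)/\tmix{}\rfloor}$ yields the geometric series $\sum_{t\ge 0}2\cdot 2^{-\lfloor t/\tmix{}\rfloor}\le 4\tmix{}$ (an analog of the $\log(16n)/(1-\rho)$ sum used in Lemma~\ref{dss_bound}), producing a factor $\tmix^2{}$ after squaring. Summing over $k=0,\ldots,K-1$ and over $i$ with weights $p_i$ (using $p=\max_i p_i$) then gives
\[
\sum_{k,i}p_i\,\mathbb{E}\Bigl\|\*X_{k-\tau_k}\bigl(\tfrac{\*1}{n}-\*e_i\bigr)\Bigr\|^2\le C\alpha^2\tmix^2{}\Bigl(p(\sigma^2+6\varsigma^2)K+p\!\sum_k\!\mathbb{E}\|\textstyle\sum_ip_i\*g_{k-\tau_k,i}\|^2+G_\infty^2dK\Bigr)+C'p\alpha^2\tmix^2{}L^2\!\sum_{k,i}p_i\,\mathbb{E}\|\*X_{k-\tau_k}(\tfrac{\*1}{n}-\*e_i)\|^2.
\]
Choosing the step size small enough that the self-referential coefficient satisfies $C'p\alpha^2\tmix^2{}L^2\le 1/2$ (which matches the definition $A_1=1-192p\alpha^2\tmix^2{}L^2$ up to the explicit constant $192$) lets us absorb the last term into the left side and obtain the stated bound with factor $32/A_1$.

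The main obstacle I anticipate is carefully booking the constants when $\tau_k$ is arbitrary up to $T$ and the $\*W_q$ are time-inhomogeneous: one has to verify that re-indexing the mixing-time bound around the stale step $k-\tau_k$ still yields geometric decay, and then to pick universal constants so that the final self-referential inequality resolves to exactly $A_1=1-192p\alpha^2\tmix^2{}L^2$ with a clean leading factor $32\alpha^2\tmix^2{}/A_1$. Everything else is a standard combination of Cauchy--Schwarz, the geometric sum from the mixing assumption, and the Moniqua quantization-noise bound already established in Lemma~\ref{modifynoise}.
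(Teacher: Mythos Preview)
Your proposal is correct and follows essentially the same route as the paper: unroll the recursion, split into a stochastic-gradient term and a quantization term, control each via the mixing-time geometric decay, invoke Lemma~\ref{Async_lemma2} to expose the $\sigma^2+6\varsigma^2$, $\|\sum_ip_i\*g_{k-\tau_k,i}\|^2$, and self-referential $L^2\|\*X_{k-\tau_k}(\tfrac{\*1}{n}-\*e_i)\|^2$ pieces, then absorb the latter using $A_1$. The ``main obstacle'' you flag---handling the stale index $k-\tau_k$ inside the geometric sum---is exactly what the paper isolates as Lemma~\ref{Async_lemma6}, so once you prove that auxiliary bound the constants $32$ and $192$ fall out directly.
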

\begin{proof}
\begin{align*}
& \sum_{i=1}^{n}p_i\mathbb{E}\left\|\*X_{k}\left(\frac{\*1}{n} - e_i\right)\right\|^2\\
= & \sum_{i=1}^{n}p_i\mathbb{E}\left\|\left(\*X_{k-1}\*W_{k-1} - \alpha\*{\tilde{G}}_{k-1-\tau_{k-1}} + \*\Omega_{k-1}\right)\left(\frac{\*1}{n} - \*e_i\right)\right\|^2\\
\overset{\text{$X_0=0$}}{=} & \sum_{i=1}^{n}p_i\mathbb{E}\left\|\sum_{t=0}^{k-1}\left(- \alpha\*{\tilde{G}}_{t-\tau_t} + \*\Omega_t\right)\*\Lambda_{t+1}^{k-1}\*e_i\right\|^2\\
\leq & 2\sum_{i=1}^{n}p_i\mathbb{E}\left\|\sum_{t=0}^{k-1}\alpha\*{\tilde{G}}_{t-\tau_t}\*\Lambda_{t+1}^{k-1}\*e_i\right\|^2 + 2\sum_{i=1}^{n}p_i\mathbb{E}\left\|\sum_{t=0}^{k-1}\*\Omega_t\*\Lambda_{t+1}^{k-1}\*e_i\right\|^2\\
\end{align*}
Now for the first term, we have
\begin{align*}
2\sum_{i=1}^{n}p_i\mathbb{E}\left\|\sum_{t=0}^{k-1}\alpha\*{\tilde{G}}_{t-\tau_t}\*\Lambda_{t+1}^{k-1}e_i\right\|^2
\leq & 2p\alpha^2\mathbb{E}\left\|\sum_{t=0}^{k-1}\*{\tilde{G}}_{t-\tau_t}\*\Lambda_{t+1}^{k-1}\right\|^2_F\\
\leq & 2p\alpha^2\mathbb{E}\left(\sum_{t=0}^{k-1}\left\|\*{\tilde{G}}_{t-\tau_t}\right\|_F\left\|\*\Lambda_{t+1}^{k-1}\right\|\right)^2\\
\leq & 2p\alpha^2\mathbb{E}\left(\sum_{t=0}^{k-1}\left\|\*{\tilde{G}}_{t-\tau_t}\right\|_F\left\|\*\Lambda_{t+1}^{k-1}\right\|_1\right)^2\\
\leq & 8p\alpha^2\mathbb{E}\left(\sum_{t=0}^{k-1}\left\|\*{\tilde{G}}_{t-\tau_t}\right\|_F2^{-\left\lfloor\frac{k-t-1}{\tmix{}}\right\rfloor}\right)^2
\end{align*}
Now we replace $k$ with $k-\tau_k$, that is
\begin{align*}
\sum_{i=1}^{n}p_i\mathbb{E}\left\|\*X_{k-\tau_k}\left(\frac{\*1}{n} - \*e_i\right)\right\|^2
\leq 8p\alpha^2\mathbb{E}\left(\sum_{t=0}^{k-\tau_k-1}\left\|\*{\tilde{G}}_{t-\tau_t}\right\|_F2^{-\left\lfloor\frac{k-\tau_k-t-1}{\tmix{}}\right\rfloor}\right)^2 + 2\sum_{i=1}^{n}p_i\mathbb{E}\left\|\sum_{t=0}^{k-\tau_k-1}\*\Omega_t\*\Lambda_{t+1}^{k-\tau_k-1}\*e_i\right\|^2
\end{align*}
Summing from $k=0$ to $K-1$ on both sides, we obtain
\begin{align*}
& \sum_{k=0}^{K-1}\sum_{i=1}^{n}p_i\mathbb{E}\left\|\*X_{k-\tau_k}\left(\frac{\*1}{n} - \*e_i\right)\right\|^2\\
\leq & 8p\alpha^2\sum_{k=0}^{K-1}\mathbb{E}\left(\sum_{t=0}^{k-\tau_k-1}\left\|\*{\tilde{G}}_{t-\tau_t}\right\|_F2^{-\left\lfloor\frac{k-\tau_k-t-1}{\tmix{}}\right\rfloor}\right)^2\\
& + 2\sum_{i=1}^{n}p_i\sum_{k=0}^{K-1}\mathbb{E}\left\|\sum_{t=0}^{k-\tau_k-1}\*\Omega_t\*\Lambda_{t+1}^{k-\tau_k-1}e_i\right\|^2\\
\leq & 8p\alpha^2\sum_{k=0}^{K-1}\mathbb{E}\left(\sum_{t=0}^{k-\tau_k-1}\left\|\*{\tilde{G}}_{t-\tau_t}\right\|_F2^{-\left\lfloor\frac{k-\tau_k-t-1}{\tmix{}}\right\rfloor}\right)^2\\
& + 2\sum_{i=1}^{n}p_i\sum_{k=0}^{K-1}\mathbb{E}\left(\sum_{t=0}^{k-\tau_k-1}\left\|\*\Omega_t\right\|_{1,2}\left\|\*\Lambda_{t+1}^{k-\tau_k-1}\right\|_1\left\|\*e_i\right\|_1\right)^2\\
\leq & 8p\alpha^2\sum_{k=0}^{K-1}\mathbb{E}\left(\sum_{t=0}^{k-\tau_k-1}\left\|\*{\tilde{G}}_{t-\tau_t}\right\|_F2^{-\left\lfloor\frac{k-\tau_k-t-1}{\tmix{}}\right\rfloor}\right)^2\\
& + 8\sum_{i=1}^{n}p_i\sum_{k=0}^{K-1}\mathbb{E}\left(\sum_{t=0}^{k-\tau_k-1}\left\|\*\Omega_t\right\|_{1,2}2^{-\left\lfloor\frac{k-\tau_k-t-1}{\tmix{}}\right\rfloor}\right)^2\\
\overset{\text{Lemma~\ref{Async_lemma6}}}{\leq} & 8p\alpha^2\sum_{k=0}^{K-1}\mathbb{E}\left(\sum_{t=0}^{k-\tau_k-1}\left\|\*{\tilde{G}}_{t-\tau_t}\right\|_F2^{-\left\lfloor\frac{k-\tau_k-t-1}{\tmix{}}\right\rfloor}\right)^2 + 32\tmix^2{}\sum_{i=1}^{n}p_i\sum_{k=0}^{K-1}\mathbb{E}\left\|\*\Omega_k\right\|_{1,2}^2\\
\leq & 8p\alpha^2\sum_{k=0}^{K-1}\mathbb{E}\left(\sum_{t=0}^{k-\tau_k-1}\left\|\*{\tilde{G}}_{t-\tau_t}\right\|_F2^{-\left\lfloor\frac{k-\tau_k-t-1}{\tmix{}}\right\rfloor}\right)^2 + 128\delta^2 B_\theta^2d\tmix^2{}K\\
\overset{\text{Lemma~\ref{Async_lemma6}}}{\leq} & 32p\alpha^2\tmix^2{}\sum_{k=0}^{K-1}\mathbb{E}\left\|\*{\tilde{G}}_{k-\tau_k}\right\|_F^2 + 128\delta^2 B_\theta^2d\tmix^2{}K
\end{align*}
Note that for the first term, we have
\begin{align*}
& \sum_{k=0}^{K-1}\mathbb{E}\left\|\*{\tilde{G}}_{k-\tau_k}\right\|^2_F\\ 
= & \sum_{k=0}^{K-1}\mathbb{E}\left\|\*{\tilde{g}}_{k-\tau_k,i_k}\right\|^2\\
= & \sum_{k=0}^{K-1}\mathbb{E}\left\|\*{\tilde{g}}_{k-\tau_k,i_k} - \*g_{k-\tau_k,i_k}\right\|^2 + \sum_{k=0}^{K-1}\mathbb{E}\left\|\*g_{k-\tau_k,i_k}\right\|^2\\
\leq & \sigma^2K + \sum_{k=0}^{K-1}\sum_{i=1}^{n}p_i\mathbb{E}\left\|\*g_{t-\tau_t,i}\right\|^2\\
\leq & (\sigma^2+6\varsigma^2)K + 12L^2\sum_{k=0}^{K-1}\sum_{i=1}^{n}p_i\mathbb{E}\left\|\*X_{k-\tau_k}\left(\frac{\*1}{n} - \*e_i\right)\right\|^2 + 2\sum_{k=0}^{K-1}\mathbb{E}\left\| \sum_{i=1}^{n}p_i\*g_{k-\tau_k,i}\right\|^2
\end{align*}
Putting these two terms back, we obtain
\begin{align*}
& \sum_{k=0}^{K-1}\sum_{i=1}^{n}p_i\mathbb{E}\left\|\*X_{{k-\tau_k}}\left(\frac{\*1}{n} - \*e_i\right)\right\|^2\\
\leq & 32p\alpha^2\tmix^2{}\left((\sigma^2+6\varsigma^2)K + 12L^2\sum_{k=0}^{K-1}\sum_{i=1}^{n}p_i\mathbb{E}\left\|\*X_{k-\tau_k}\left(\frac{\*1}{n} - \*e_i\right)\right\|^2 + 2\sum_{k=0}^{K-1}\mathbb{E}\left\| \sum_{i=1}^{n}p_i\*g_{k-\tau_k,i}\right\|^2\right)\\
+ & 128\delta^2 B_\theta^2d\tmix^2{}K
\end{align*}
Rearrange the terms, we obtain
\begin{align*}
& \left(1-192p\alpha^2\tmix^2{}L^2\right)\sum_{k=0}^{K-1}\sum_{i=1}^{n}p_i\mathbb{E}\left\|\*X_{{k-\tau_k}}\left(\frac{\*1}{n} - \*e_i\right)\right\|^2\\ 
\leq & 32p\alpha^2\tmix^2{}\left((\sigma^2+6\varsigma^2)K + 2\sum_{k=0}^{K-1}\mathbb{E}\left\| \sum_{i=1}^{n}p_i\*g_{k-\tau_k,i}\right\|^2\right) + 128\delta^2 B_\theta^2\tmix^2{}K\\
\overset{\text{Lemma~\ref{Async_lemma7}}}{\leq} & 32\alpha^2\tmix^2{}\left((\sigma^2+6\varsigma^2)pK + 2p\sum_{k=0}^{K-1}\mathbb{E}\left\| \sum_{i=1}^{n}p_i\*g_{k-\tau_k,i}\right\|^2 + G_\infty^2dK\right)
\end{align*}
Let $A_1 = 1-192p\alpha^2\tmix^2{}L^2$, we obtain
\begin{align*}
\sum_{k=0}^{K-1}\sum_{i=1}^{n}p_i\mathbb{E}\left\|\*X_{{k-\tau_k}}\left(\frac{\*1}{n} - \*e_i\right)\right\|^2
\leq & \frac{32\alpha^2\tmix^2{}}{A_1}\left((\sigma^2+6\varsigma^2)pK + 2p\sum_{k=0}^{K-1}\mathbb{E}\left\| \sum_{i=1}^{n}p_i\*g_{k-\tau_k,i}\right\|^2 + G_\infty^2dK\right)
\end{align*}
\end{proof}

\begin{lemma}\label{Async_lemma4}
\begin{align*}
&\frac{1}{K}\sum_{k=0}^{K-1}\mathbb{E}\left\|\nabla f(\overline{\*X}_{k})\right\|^2 + \left(1-\frac{2\alpha L}{n}\right)\frac{1}{K}\sum_{k=0}^{K-1}\mathbb{E}\left\|\nabla\overline{F}(\*X_{k-\tau_k})\right\|^2\\
\leq & \frac{2n(f(\*0) - f^*)}{\alpha K} + \frac{2L^2}{K}\sum_{k=0}^{K-1}\mathbb{E}\left\|\frac{(\*X_k - \*X_{k-\tau_k})\*1}{n}\right\|^2\\
+ & \left(2L^2 + \frac{12\alpha L^3}{n}\right)\frac{1}{K}\sum_{k=0}^{K-1}\sum_{i=1}^{n}p_i\mathbb{E}\left\|\*X_{k-\tau_k}\left(\frac{\*1}{n} - e_i\right)\right\|^2 + \frac{(\sigma^2+6\varsigma^2)\alpha L}{n}
\end{align*}
\end{lemma}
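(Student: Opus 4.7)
The plan is to apply the standard descent-lemma argument to the averaged iterate $\overline{\*X}_k$ and then to split the cross term into a gradient-squared piece and a consensus/staleness piece. The starting observation is that, because every $\*W_k$ is doubly stochastic and $(\*W_k-\*I)\*1 = 0$, averaging the global update rule $\*X_{k+1} = \*X_k\*W_k + (\*{\hat X}_k-\*X_k)(\*W_k-\*I) - \alpha\*{\tilde G}_{k-\tau_k}$ against $\*1/n$ gives $\overline{\*X}_{k+1} = \overline{\*X}_k - \frac{\alpha}{n}\,\*{\tilde g}_{k-\tau_k,i_k}$. Thus the averaged trajectory obeys a clean SGD-like recursion whose stochasticity comes from both the sample $\xi_{k-\tau_k,i_k}$ and the choice of the active worker $i_k$.

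Using assumption~\ref{A_Assumption1} (the $L$-Lipschitz gradient of $f$), I would write the descent inequality
\[
\mathbb{E} f(\overline{\*X}_{k+1}) \le \mathbb{E} f(\overline{\*X}_k) - \frac{\alpha}{n}\,\mathbb{E}\langle \nabla f(\overline{\*X}_k),\, \*{\tilde g}_{k-\tau_k,i_k}\rangle + \frac{L\alpha^2}{2n^2}\,\mathbb{E}\bigl\|\*{\tilde g}_{k-\tau_k,i_k}\bigr\|^2.
\]
Taking conditional expectation first over $\xi$ and then over $i_k$, the inner product becomes $\mathbb{E}\langle \nabla f(\overline{\*X}_k),\nabla\overline{F}(\*X_{k-\tau_k})\rangle$; I would decompose it with the polarization identity $2\langle a,b\rangle = \|a\|^2+\|b\|^2-\|a-b\|^2$ to expose the two gradient-squared terms appearing on the left-hand side of the statement. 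The remaining discrepancy term $\|\nabla f(\overline{\*X}_k)-\nabla\overline{F}(\*X_{k-\tau_k})\|^2 = \|\sum_i p_i(\nabla f_i(\overline{\*X}_k)-\nabla f_i(\*x_{k-\tau_k,i}))\|^2$ would be bounded by Jensen and Lipschitzness by $L^2\sum_i p_i\|\overline{\*X}_k-\*x_{k-\tau_k,i}\|^2$, and then split into a staleness piece $2L^2\|(\*X_k-\*X_{k-\tau_k})\*1/n\|^2$ and a consensus piece $2L^2\sum_i p_i\|\*X_{k-\tau_k}(\*1/n-\*e_i)\|^2$.

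For the Hessian-term $\mathbb{E}\|\*{\tilde g}_{k-\tau_k,i_k}\|^2$ I would plug in Lemma~\ref{Async_lemma1} to peel off the stochastic-gradient variance $\sigma^2/n^2$, then apply Lemma~\ref{Async_lemma2} to dominate $\sum_i p_i\|\*g_{k-\tau_k,i}\|^2$ by $12L^2\sum_i p_i\|\*X_{k-\tau_k}(\*1/n-\*e_i)\|^2 + 6\varsigma^2 + 2\|\nabla\overline{F}(\*X_{k-\tau_k})\|^2$. The $2\|\nabla\overline{F}(\*X_{k-\tau_k})\|^2$ piece is precisely what creates the $(1-2\alpha L/n)$ coefficient on the second left-hand-side term after I move it across; the $12L^2(\cdots)$ piece merges with the consensus term to produce the factor $(2L^2+12\alpha L^3/n)$; the $\sigma^2+6\varsigma^2$ constants collect into the $(\sigma^2+6\varsigma^2)\alpha L/n$ term.

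Finally I would sum the inequality over $k=0,\dots,K-1$, telescope using $f(\overline{\*X}_0)-\mathbb{E} f(\overline{\*X}_K)\le f(\*0)-f^*$, divide by $K$, and rearrange. The main obstacle I anticipate is bookkeeping: correctly accounting for the asynchronous averaging $\mathbb{E}_{i_k}[\cdot] = \sum_i p_i(\cdot)$ when the update touches only one worker (so that $\|\nabla\overline{F}\|^2$ appears with the right constant) and making sure the staleness separation $\overline{\*X}_k - \*x_{k-\tau_k,i} = (\overline{\*X}_k-\overline{\*X}_{k-\tau_k}) + (\overline{\*X}_{k-\tau_k}-\*x_{k-\tau_k,i})$ produces exactly the two error terms on the right-hand side without introducing extra constants. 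The remainder of the argument is arithmetic and telescoping.
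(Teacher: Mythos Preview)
Your plan is correct and matches the paper's own proof essentially step for step: descent lemma on $\overline{\*X}_k$, polarization identity on the cross term, Lipschitz bound on $\|\nabla f(\overline{\*X}_k)-\nabla\overline{F}(\*X_{k-\tau_k})\|^2$ split into staleness plus consensus, then Lemmas~\ref{Async_lemma1} and~\ref{Async_lemma2} on the quadratic term before telescoping. The only cosmetic difference is that the paper inserts the intermediate point $\nabla f(\overline{\*X}_{k-\tau_k})$ and applies the triangle inequality before Lipschitz, whereas you apply Jensen first and then split $\overline{\*X}_k-\*x_{k-\tau_k,i}$; both routes yield the identical constants $2L^2$ and $2L^2+12\alpha L^3/n$.
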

\begin{proof}
We start from $f(\overline{\*X}_{k+1})$
Since
\begin{align*}
\overline{\*X}_{k+1}=\*X_k\*W_k\frac{\*1}{n} + (\*{\hat{X}}_k-\*X_k)(\*W_k-\*I)\frac{\*1}{n} - \alpha\overline{\*{\tilde{G}}}_{k-\tau_k}=\overline{\*X}_{k}- \alpha\overline{\*{\tilde{G}}}_{k-\tau_k}
\end{align*}
Then from Taylor Expansion, we have
\begin{align*}
  & \mathbb{E}f(\overline{\*X}_{k+1})\\
= & \mathbb{E}f\left(\overline{\*X}_{k} - \alpha\overline{\*{\tilde{G}}}_{k-\tau_k}\right)\\
\leq & \mathbb{E}f(\overline{\*X}_{k}) - \alpha\mathbb{E}\langle\nabla f(\overline{\*X}_{k}), \overline{\*{\tilde{G}}}_{k-\tau_k}\rangle + \frac{\alpha^2L}{2}\mathbb{E}\left\|\overline{\*{\tilde{G}}}_{k-\tau_k}\right\|^2\\
= & \mathbb{E}f(\overline{\*X}_{k}) - \alpha\mathbb{E}\langle\nabla f(\overline{\*X}_{k}), \overline{\*G}_{k-\tau_k}\rangle - \alpha\mathbb{E}\langle\nabla f(\overline{\*X}_{k}), \overline{\*{\tilde{G}}}_{k-\tau_k} - \overline{\*G}_{k-\tau_k}\rangle +  \frac{\alpha^2L}{2}\mathbb{E}\left\|\overline{\*{\tilde{G}}}_{k-\tau_k}\right\|^2 \\
= & \mathbb{E}f(\overline{\*X}_{k}) - \frac{\alpha}{n}\mathbb{E}\langle\nabla f(\overline{\*X}_{k}), \nabla\overline{F}(\*X_{k-\tau_k})\rangle + \frac{\alpha^2L}{2}\mathbb{E}\left\|\frac{\*{\tilde{g}}_{k-\tau_k,i_k}}{n}\right\|^2\\
\leq & \mathbb{E}f(\overline{\*X}_{k}) - \frac{\alpha}{n}\mathbb{E}\langle\nabla f(\overline{\*X}_{k}), \nabla\overline{F}(\*X_{k-\tau_k})\rangle\\
& + \frac{\alpha^2L}{2}\sum_{i=1}^{n}p_i\mathbb{E}\left\|\frac{\*{\tilde{g}}_{k-\tau_k,i_k} - \*g_{k-\tau_k,i_k}}{n}\right\|^2 + \frac{\alpha^2L}{2}\sum_{i=1}^{n}p_i\mathbb{E}\left\|\frac{\*g_{k-\tau_k,i}}{n}\right\|^2\\
\leq &\mathbb{E}f(\overline{\*X}_{k}) - \frac{\alpha}{n}\mathbb{E}\langle\nabla f(\overline{\*X}_{k}), \nabla\overline{F}(\*X_{k-\tau_k})\rangle + \frac{\alpha^2L\sigma^2}{2n^2} + \frac{\alpha^2L}{2n^2}\sum_{i=1}^{n}p_i\mathbb{E}\|\*g_{k-\tau_k,i}\|^2\\
= & \mathbb{E}f(\overline{\*X}_{k}) + \frac{\alpha }{2n}\mathbb{E}\left\|\nabla f(\overline{\*X}_{k}) - \nabla\overline{F}(\*X_{k-\tau_k})\right\|^2 - \frac{\alpha }{2n}\mathbb{E}\left\|\nabla f(\overline{\*X}_{k})\right\|^2 - \frac{\alpha }{2n}\mathbb{E}\left\|\nabla\overline{F}(\*X_{k-\tau_k})\right\|^2\\
+ & \frac{\alpha^2L\sigma^2}{2n^2} + \frac{\alpha^2L}{2n^2}\sum_{i=1}^{n}p_i\mathbb{E}\|\*g_{k-\tau_k,i}\|^2
 \end{align*}
Rearrange these terms, we can get
\begin{align*}
&\frac{\alpha }{2n}\mathbb{E}\left\|\nabla f(\overline{\*X}_{k})\right\|^2 + \frac{\alpha}{2n}\mathbb{E}\left\|\nabla\overline{F}(\*X_{k-\tau_k})\right\|^2\\
&\leq \mathbb{E}f(\overline{\*X}_{k}) - \mathbb{E}f(\overline{\*X}_{k+1}) + \frac{\alpha}{2n}\mathbb{E}\left\|\nabla f(\overline{\*X}_{k}) - \nabla\overline{F}(\*X_{k-\tau_k})\right\|^2\\
& + \frac{\alpha^2L\sigma^2}{2n^2} + \frac{\alpha^2L}{2n^2}\sum_{i=1}^{n}p_i\mathbb{E}\|\*g_{k-\tau_k,i}\|^2
\end{align*}
Summing over $k=0$ to $K-1$ on both sides, we can get
\begin{align*}
&\frac{1}{K}\sum_{k=0}^{K-1}\mathbb{E}\left\|\nabla f(\overline{\*X}_{k})\right\|^2 + \frac{1}{K}\sum_{k=0}^{K-1}\mathbb{E}\left\|\nabla\overline{F}(\*X_{k-\tau_k})\right\|^2\\
&\leq \frac{2n(f(\*0) - f^*)}{\alpha K} + \frac{1}{K}\sum_{k=0}^{K-1}\mathbb{E}\left\|\nabla f(\overline{\*X}_{k}) - \nabla\overline{F}(\*X_{k-\tau_k})\right\|^2 + \frac{\alpha L\sigma^2}{n} + \frac{\alpha L}{nK}\sum_{k=0}^{K-1}\sum_{i=1}^{n}p_i\mathbb{E}\|\*g_{k-\tau_k,i}\|^2
\end{align*}
For $\sum_{k=0}^{K-1}\mathbb{E}\left\|\nabla f(\overline{\*X}_{k}) -\nabla\overline{F}(\*X_{k-\tau_k})\right\|^2$, we have
\begin{align*}
&\sum_{k=0}^{K-1}\mathbb{E}\left\|\nabla f(\overline{\*X}_{k}) -\nabla\overline{F}(\*X_{k-\tau_k})\right\|^2\\
\leq & 2\sum_{k=0}^{K-1}\mathbb{E}\left\|\nabla f(\overline{\*X}_{k}) - \nabla f(\overline{\*X}_{k-\tau_k})\right\|^2 + 2\sum_{k=0}^{K-1}\mathbb{E}\left\|\nabla f(\overline{\*X}_{k-\tau_k}) - \nabla\overline{F}(\*X_{k-\tau_k})\right\|^2\\
= & 2\sum_{k=0}^{K-1}\mathbb{E}\left\|\nabla f(\overline{\*X}_{\*k}) - \nabla f(\overline{\*X}_{k-\tau_k})\right\|^2 + 2\sum_{k=0}^{K-1}\mathbb{E}\left\|\sum_{i=1}^{n}p_i\left(\nabla f_i(\overline{\*X}_{k-\tau_k}) - \*g_{k-\tau_k,i}\right)\right\|^2\\
\leq &2\sum_{k=0}^{K-1}\mathbb{E}\left\|\nabla f(\overline{\*X}_{k}) - \nabla f(\overline{\*X}_{k-\tau_k})\right\|^2 + 2\sum_{k=0}^{K-1}\mathbb{E}\sum_{i=1}^{n}p_i\left\|\nabla f_i(\overline{\*X}_{k-\tau_k}) - \*g_{k-\tau_k,i}\right\|^2\\
\leq & 2L^2\sum_{k=0}^{K-1}\mathbb{E}\left\|\frac{(\*X_k - \*X_{k-\tau_k})\*1}{n}\right\|^2 + 2L^2\sum_{k=0}^{K-1}\sum_{i=1}^{n}p_i\mathbb{E}\left\|\*X_{k-\tau_k}\left(\frac{\*1}{n} - e_i\right)\right\|^2\\
\end{align*}
Putting it back, we have
\begin{align*}
&\frac{1}{K}\sum_{k=0}^{K-1}\mathbb{E}\left\|\nabla f(\overline{\*X}_{k})\right\|^2 + \frac{1}{K}\sum_{k=0}^{K-1}\mathbb{E}\left\|\nabla\overline{F}(\*X_{k-\tau_k})\right\|^2\\
\leq & \frac{2n(f(\*0) - f^*)}{\alpha K} + \frac{2L^2}{K}\sum_{k=0}^{K-1}\mathbb{E}\left\|\frac{(\*X_k - \*X_{k-\tau_k})\*1}{n}\right\|^2\\
& + \frac{2L^2}{K}\sum_{k=0}^{K-1}\sum_{i=1}^{n}p_i\mathbb{E}\left\|\*X_{k-\tau_k}\left(\frac{\*1}{n} - e_i\right)\right\|^2 + \frac{\alpha L\sigma^2}{n} + \frac{\alpha L}{nK}\sum_{k=0}^{K-1}\sum_{i=1}^{n}p_i\mathbb{E}\|\*g_{k-\tau_k,i}\|^2\\
\overset{Lemma~\ref{Async_lemma2}}{\leq} & \frac{2n(f(\*0) - f^*)}{\alpha K} + \frac{2L^2}{K}\sum_{k=0}^{K-1}\mathbb{E}\left\|\frac{(\*X_k - \*X_{k-\tau_k})\*1}{n}\right\|^2\\
& + \frac{2L^2}{K}\sum_{k=0}^{K-1}\sum_{i=1}^{n}p_i\mathbb{E}\left\|\*X_{k-\tau_k}\left(\frac{\*1}{n} - e_i\right)\right\|^2 + \frac{\alpha L\sigma^2}{n}\\
& + \frac{\alpha L}{nK}\sum_{k=0}^{K-1}\left(12L^2\sum_{i=1}^{n}p_i\mathbb{E}\left\|\*X_{k-\tau_k}\left(\frac{\*1}{n} - \*e_i\right)\right\|^2
+ 6\varsigma^2 + 2\mathbb{E}\left\| \sum_{i=1}^{n}p_i\*g_{k-\tau_k,i}\right\|^2\right)\\
= & \frac{2n(f(\*0) - f^*)}{\alpha K} + \frac{2L^2}{K}\sum_{k=0}^{K-1}\mathbb{E}\left\|\frac{(\*X_k - \*X_{k-\tau_k})\*1}{n}\right\|^2\\
& + \left(2L^2 + \frac{12\alpha L^3}{n}\right)\frac{1}{K}\sum_{k=0}^{K-1}\sum_{i=1}^{n}p_i\mathbb{E}\left\|\*X_{k-\tau_k}\left(\frac{\*1}{n} - \*e_i\right)\right\|^2\\
& + \frac{(\sigma^2+6\varsigma^2)\alpha L}{n}+ \frac{2\alpha L}{nK}\sum_{k=0}^{K-1}\mathbb{E}\left\| \sum_{i=1}^{n}p_i\*g_{k-\tau_k,i}\right\|^2
\end{align*}
Note that
\begin{align*}
\mathbb{E}\left\| \sum_{i=1}^{n}p_i\*g_{k-\tau_k,i}\right\|^2 = \mathbb{E}\left\|\nabla\overline{F}(\*X_{k-\tau_k})\right\|^2
\end{align*}
Moving it to the left side, we finally get
\begin{align*}
&\frac{1}{K}\sum_{k=0}^{K-1}\mathbb{E}\left\|\nabla f(\overline{\*X}_{k})\right\|^2 + \left(1-\frac{2\alpha L}{n}\right)\frac{1}{K}\sum_{k=0}^{K-1}\mathbb{E}\left\|\nabla\overline{F}(\*X_{k-\tau_k})\right\|^2\\
\leq & \frac{2n(f(\*0) - f^*)}{\alpha K} + \frac{2L^2}{K}\sum_{k=0}^{K-1}\mathbb{E}\left\|\frac{(\*X_k - \*X_{k-\tau_k})\*1}{n}\right\|^2\\
+ & \left(2L^2 + \frac{12\alpha L^3}{n}\right)\frac{1}{K}\sum_{k=0}^{K-1}\sum_{i=1}^{n}p_i\mathbb{E}\left\|\*X_{k-\tau_k}\left(\frac{\*1}{n} - \*e_i\right)\right\|^2
+ \frac{(\sigma^2+6\varsigma^2)\alpha L}{n}
\end{align*}
That completes the proof.
\end{proof}

\begin{lemma}\label{Async_lemma5}
For all $k \geq 0$, we have
\begin{align*}
& \frac{2L^2}{K}\sum_{k=0}^{K-1}\mathbb{E}\left\|(\*X_k - \*X_{k-\tau_k})\frac{\*1}{n}\right\|^2\\
\leq & \frac{2\alpha^2T^2(\sigma^2+6\varsigma^2)L^2}{n^2} + \frac{24L^4\alpha^2T^2}{n^2K}\sum_{k=0}^{K-1}\sum_{i=1}^{n}p_i\mathbb{E}\left\|\*X_{k-\tau_k}\left(\frac{\*1}{n} - e_i\right)\right\|^2\\
& + \frac{4\alpha^2T^2L^2}{n^2K}\sum_{k=0}^{K-1}\mathbb{E}\left\| \sum_{i=1}^{n}p_i\*g_{k-\tau_k,i}\right\|^2
\end{align*}
\end{lemma}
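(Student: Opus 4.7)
The plan is to exploit the fact that in AD-PSGD the averaged model $\overline{\*X}_k = \*X_k \*1 /n$ is invariant under the mixing step (since $\*W_k$ is doubly stochastic) and under the modular correction (since $(\*{\hat X}_k - \*X_k)(\*W_k - \*I)\*1 = \*0$). Consequently, the only contribution to $\overline{\*X}_{k+1} - \overline{\*X}_k$ is the single stale gradient applied at iteration $k$, so that
\[
\overline{\*X}_k - \overline{\*X}_{k-\tau_k} = -\frac{\alpha}{n}\sum_{t=k-\tau_k}^{k-1} \*{\tilde g}_{t-\tau_t,i_t}.
\]

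First I would apply Cauchy--Schwarz (Jensen) to a sum of at most $\tau_k \le T$ terms to obtain
\[
\left\|(\*X_k - \*X_{k-\tau_k})\tfrac{\*1}{n}\right\|^2 \le \frac{\alpha^2 T}{n^2} \sum_{t=k-\tau_k}^{k-1} \left\|\*{\tilde g}_{t-\tau_t,i_t}\right\|^2.
\]
Taking expectations, summing over $k=0,\dots,K-1$, and swapping the order of summation, each term $\mathbb{E}\|\*{\tilde g}_{t-\tau_t,i_t}\|^2$ appears in at most $T$ outer summands, yielding
\[
\sum_{k=0}^{K-1}\mathbb{E}\left\|(\*X_k - \*X_{k-\tau_k})\tfrac{\*1}{n}\right\|^2 \le \frac{\alpha^2 T^2}{n^2}\sum_{k=0}^{K-1}\mathbb{E}\left\|\*{\tilde g}_{k-\tau_k,i_k}\right\|^2.
\]

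Next I would decompose each stale gradient norm using the bounded-variance assumption (A3) into a $\sigma^2$ term plus $\mathbb{E}\|\*g_{k-\tau_k,i_k}\|^2$, and then take the expectation over the random choice of $i_k$ to turn the latter into $\sum_i p_i \mathbb{E}\|\*g_{k-\tau_k,i}\|^2$. Applying Lemma~\ref{Async_lemma2} bounds this sum by $12L^2 \sum_i p_i \mathbb{E}\|\*X_{k-\tau_k}(\tfrac{\*1}{n}-\*e_i)\|^2 + 6\varsigma^2 + 2\mathbb{E}\|\sum_i p_i \*g_{k-\tau_k,i}\|^2$. Multiplying through by $\frac{2L^2}{K}$ and collecting terms produces exactly the three summands on the right-hand side of the stated inequality.

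The main obstacle I anticipate is purely bookkeeping: verifying carefully that the one-worker-per-iteration update indeed preserves $\overline{\*X}_k$ up to a single $\alpha \*{\tilde g}_{k-\tau_k,i_k}/n$ contribution (so that no modulo/quantization error leaks into the averaged model), and then correctly arguing that the double sum telescopes so each $\|\*{\tilde g}_{t-\tau_t,i_t}\|^2$ is counted at most $T$ times rather than $T^2$ times. Once the invariance of the average and the counting are established, the remaining steps are the standard staleness--variance decomposition already used elsewhere in the appendix.
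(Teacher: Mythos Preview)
Your proposal is correct and follows essentially the same route as the paper: establish that the averaging step and modular correction annihilate on $\*1/n$ so that $\overline{\*X}_{k}-\overline{\*X}_{k-\tau_k}$ is a sum of at most $T$ stale single-worker gradients, apply Cauchy--Schwarz to pick up one factor of $T$, re-index the double sum to pick up the second factor of $T$, split each gradient norm via the $\sigma^2$ variance bound, and then invoke Lemma~\ref{Async_lemma2}. The only cosmetic difference is ordering: the paper applies Lemma~\ref{Async_lemma1} and Lemma~\ref{Async_lemma2} inside the inner sum over $t$ and then collapses that sum to a single $k-\tau_k$ index at the end, whereas you swap the summations first and then decompose; both yield the same bound.
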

\begin{proof}
From Lemma~\ref{Async_lemma4}, we know the fact
\begin{align*}
\overline{\*X}_{k+1}=\*X_k\*W_k\frac{\*1}{n} + (\*{\hat{X}}_k-\*X_k)(\*W_k-\*I)\frac{\*1}{n} - \alpha\overline{\*{\tilde{G}}}_{k-\tau_k}=\overline{\*X}_{k}- \alpha\overline{\*{\tilde{G}}}_{k-\tau_k}
\end{align*}
As a result
\begin{align*}
& \sum_{k=0}^{K-1}\mathbb{E}\left\|(\*X_k - \*X_{k-\tau_k})\frac{\*1}{n}\right\|^2\\
= & \sum_{k=0}^{K-1}\mathbb{E}\left\|\sum_{t=1}^{\tau_k}\alpha\*{\tilde{G}}_{k-t}\frac{\*1}{n}\right\|^2\\
\leq & \alpha^2\sum_{k=0}^{K-1}\tau_k\sum_{t=1}^{\tau_k}\mathbb{E}\left\|\*{\tilde{G}}_{k-t}\frac{\*1}{n}\right\|^2\\
\leq & \alpha^2\sum_{k=0}^{K-1}\tau_k\sum_{t=1}^{\tau_k}\left(\frac{\sigma^2}{n^2} + \frac{1}{n^2}\sum_{i=1}^{n}p_i\mathbb{E}\left\|\*g_{k-t,i}\right\|^2\right)\\
\leq & \frac{\alpha^2T^2\sigma^2K}{n^2} + \frac{\alpha^2T}{n^2}\sum_{k=0}^{K-1}\sum_{t=1}^{\tau_k}\sum_{i=1}^{n}p_i\mathbb{E}\left\|\*g_{k-t,i}\right\|^2\\
\leq & \frac{\alpha^2T^2\sigma^2K}{n^2} + \frac{\alpha^2T}{n^2}\sum_{k=0}^{K-1}\sum_{t=1}^{\tau_k}\left(12L^2\sum_{i=1}^{n}p_i\mathbb{E}\left\|\*X_{k-t}\left(\frac{\*1}{n} - \*e_i\right)\right\|^2 + 6\varsigma^2 + 2\mathbb{E}\left\| \sum_{i=1}^{n}p_i\*g_{k-t,i}\right\|^2\right)\\
\leq & \frac{\alpha^2T^2\sigma^2K}{n^2} + \frac{\alpha^2T^2}{n^2}\sum_{k=0}^{K-1}\left(12L^2\sum_{i=1}^{n}p_i\mathbb{E}\left\|\*X_{k-\tau_k}\left(\frac{\*1}{n} - \*e_i\right)\right\|^2 + 6\varsigma^2 + 2\mathbb{E}\left\| \sum_{i=1}^{n}p_i\*g_{k-\tau_k,i}\right\|^2\right)\\
= & \frac{\alpha^2T^2(\sigma^2+6\varsigma^2)K}{n^2} + \frac{12L^2\alpha^2T^2}{n^2}\sum_{k=0}^{K-1}\sum_{i=1}^{n}p_i\mathbb{E}\left\|\*X_{k-\tau_k}\left(\frac{\*1}{n} - \*e_i\right)\right\|^2\\
& + \frac{2\alpha^2T^2}{n^2}\sum_{k=0}^{K-1}\mathbb{E}\left\| \sum_{i=1}^{n}p_i\*g_{k-\tau_k,i}\right\|^2
\end{align*}
And we get
\begin{align*}
& \frac{2L^2}{K}\sum_{k=0}^{K-1}\mathbb{E}\left\|(\*X_k - \*X_{k-\tau_k})\frac{\*1}{n}\right\|^2\\
\leq & \frac{2\alpha^2T^2(\sigma^2+6\varsigma^2)L^2}{n^2} + \frac{24L^4\alpha^2T^2}{n^2K}\sum_{k=0}^{K-1}\sum_{i=1}^{n}p_i\mathbb{E}\left\|\*X_{k-\tau_k}\left(\frac{\*1}{n} - \*e_i\right)\right\|^2\\
& + \frac{4\alpha^2T^2L^2}{n^2K}\sum_{k=0}^{K-1}\mathbb{E}\left\| \sum_{i=1}^{n}p_i\*g_{k-\tau_k,i}\right\|^2
\end{align*}
That completes the proof.
\end{proof}

\begin{lemma}\label{Async_lemma6}
Given non-negative sequences $\{a_t\}_{t=1}^{\infty}$, $\{b_t\}_{t=1}^{\infty}$ and $\{\tau_t\}_{t=1}^{\infty}$ and a positive number $T$ that satisfying
\begin{displaymath}
a_t = \sum_{s=1}^{t-\tau_t}\rho^{\left\lfloor\frac{t-\tau_t-s}{T}\right\rfloor}b_s
\end{displaymath}
with $0\leq\rho<1$,we have
\begin{align*}
S_k & = \sum_{t=1}^{k}a_t \leq \frac{(2-\rho)T}{1-\rho}\sum_{s=1}^{k}b_s\\
D_k & = \sum_{t=1}^{k}a_t^2 \leq \frac{(2-\rho)T^2}{(1-\rho)^2}\sum_{s=1}^{k}b_s^2\\
\end{align*}
\end{lemma}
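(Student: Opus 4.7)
The plan is to adapt the proof of the synchronous analog Lemma~\ref{lemmasequence} to the staleness-shifted setting $u_t := t - \tau_t$, proving the two bounds in order so that the first can serve as a tool for the second.

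First, I would bound $S_k$ by interchanging the order of summation: write $S_k = \sum_{s \geq 1} b_s \cdot N_s$, where $N_s := \sum_{t=1}^{k} \mathbbm{1}[u_t \geq s]\,\rho^{\lfloor (u_t - s)/T\rfloor}$. After substituting $m = u_t - s \geq 0$, the exponent becomes $\lfloor m/T \rfloor$, so each term contributes one element of a geometrically decaying sequence. Grouping indices into blocks of length $T$ then yields a bound $N_s \leq \frac{(2-\rho)T}{1-\rho}$: the $(2-\rho)$ rather than a clean $1$ absorbs the at-most-one-block overshoot coming from the staleness shift being independent of $s$. Summing over $s$ yields the first inequality.

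Next, I would handle $D_k$ by the symmetrization technique used in Lemma~\ref{lemmasequence}. Expanding the square and applying $2 b_{s_1} b_{s_2} \leq b_{s_1}^2 + b_{s_2}^2$ together with the $s_1 \leftrightarrow s_2$ symmetry gives
\[
a_t^2 \;\leq\; \Bigl(\sum_{s_2 = 1}^{u_t} \rho^{\lfloor(u_t - s_2)/T\rfloor}\Bigr) \sum_{s_1 = 1}^{u_t} b_{s_1}^2\, \rho^{\lfloor(u_t - s_1)/T\rfloor}.
\]
The first factor is a one-sided geometric sum bounded by $T/(1-\rho)$ by the same block-grouping argument. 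Summing the remaining expression over $t$ leaves something of exactly the form treated in the first step (with $b_s$ replaced by $b_s^2$), so a second invocation of the $S_k$ bound delivers the stated $D_k$ inequality with constant $\frac{(2-\rho)T^2}{(1-\rho)^2}$.

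The hard part will be tracking constants carefully through the block-grouping argument. Because $t \mapsto u_t$ can be non-monotone (different outer iterations may use gradients computed at overlapping underlying times), the naive change-of-variables $m = u_t - s$ picks up an extra partial block that a monotone substitution would avoid; this is precisely what the factor $(2-\rho)$ encodes. Ensuring that no boundary block is double-counted, and that the estimate does not quietly rely on a uniform bound on $\tau_t$ beyond what Assumption~\ref{A_Assumption4} already supplies, is the delicate step.
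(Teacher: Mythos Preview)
Your proposal is correct and follows essentially the same route as the paper: for $S_k$ you interchange the order of summation and bound the resulting inner sum via a block-geometric estimate, and for $D_k$ you expand the square, apply $2b_{s_1}b_{s_2}\le b_{s_1}^2+b_{s_2}^2$ with the $s_1\leftrightarrow s_2$ symmetry, pull out one geometric factor bounded by $T/(1-\rho)$, and then invoke the $S_k$ bound on what remains. The paper's only cosmetic difference is that, rather than working with $u_t=t-\tau_t$ and counting how many outer indices hit each shifted value, it first pads the inner sum from $s\le t-\tau_t$ to $s\le t$ by replacing the exponent with $\max(\lfloor(t-\tau_t-s)/T\rfloor,0)$, then splits the resulting $t$-sum into a genuine geometric part (contributing $T/(1-\rho)$) and at most $\tau_k\le T$ leftover terms at exponent $0$ (contributing the extra $T$), which together give $T+T/(1-\rho)=\tfrac{(2-\rho)T}{1-\rho}$; this is exactly the ``one extra block'' you anticipated, and it does use the bounded-staleness assumption you flagged.
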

\begin{proof}
\begin{align*}
S_k & = \sum_{t=1}^{k}a_t = \sum_{t=1}^{k}\sum_{s=1}^{t-\tau_t}\rho^{\left\lfloor\frac{t-\tau_t-s}{T}\right\rfloor}b_s \leq \sum_{t=1}^{k}\sum_{s=1}^{t}\rho^{\max\left(\left\lfloor\frac{t-\tau_t-s}{T}\right\rfloor, 0\right)}b_s = \sum_{s=1}^{k}\sum_{t=s}^{k}\rho^{\max\left(\left\lfloor\frac{t-\tau_t-s}{T}\right\rfloor, 0\right)}b_s\\
& = \sum_{s=1}^{k}\sum_{t=0}^{k-\tau_k-s}\rho^{\left\lfloor\frac{t}{T}\right\rfloor}b_s + \sum_{s=1}^{k}\sum_{t=1}^{\tau_k}\rho^{0}b_s \leq \sum_{s=1}^{k}\left(\sum_{t=0}^{T-1}\sum_{m=0}^{\infty}\rho^{m}\right)b_s + \tau_k\sum_{s=1}^{k}b_s\leq \left(T + \frac{T}{1-\rho}\right)\sum_{s=1}^{k}b_s\\
D_k & = \sum_{t=1}^{k}a_t^2 = \sum_{t=1}^{k}\sum_{s=1}^{t-\tau_t}\rho^{\left\lfloor\frac{t-\tau_t-s}{T}\right\rfloor}b_s\sum_{r=1}^{t-\tau_t}\rho^{\left\lfloor\frac{t-\tau_t-r}{T}\right\rfloor}b_r = \sum_{t=1}^{k}\sum_{s=1}^{t-\tau_t}\sum_{r=1}^{t-\tau_t}\rho^{\left\lfloor\frac{t-\tau_t-s}{T}\right\rfloor+\left\lfloor\frac{t-\tau_t-r}{T}\right\rfloor}b_sb_r\\
& \leq \sum_{t=1}^{k}\sum_{s=1}^{t-\tau_t}\sum_{r=1}^{t-\tau_t}\rho^{\left\lfloor\frac{t-\tau_t-s}{T}\right\rfloor+\left\lfloor\frac{t-\tau_t-r}{T}\right\rfloor}\frac{b_s^2 + b_r^2}{2} = \sum_{t=1}^{k}\sum_{s=1}^{t-\tau_t}\sum_{r=1}^{t-\tau_t}\rho^{\left\lfloor\frac{t-\tau_t-s}{T}\right\rfloor+\left\lfloor\frac{t-\tau_t-r}{T}\right\rfloor}b_s^2\\
& \leq \sum_{t=1}^{k}\sum_{s=1}^{t-\tau_t}b_s^2\rho^{\left\lfloor\frac{t-\tau_t-s}{T}\right\rfloor}\sum_{r=1}^{t-\tau_t}\rho^{\left\lfloor\frac{t-\tau_t-r}{T}\right\rfloor}\leq \sum_{t=1}^{k}\sum_{s=1}^{t-\tau_t}b_s^2\rho^{\left\lfloor\frac{t-\tau_t-s}{T}\right\rfloor}\sum_{r=0}^{T-1}\sum_{m=0}^{\infty}\rho^m\\
cs6& \leq \frac{T}{1-\rho}\sum_{t=1}^{k}\sum_{s=1}^{t-\tau_t}\rho^{\left\lfloor\frac{t-\tau_t-s}{T}\right\rfloor}b_s^2 \overset{\text{Using} S_k}{\leq} \frac{(2-\rho)T^2}{(1-\rho)^2}\sum_{s=1}^{k}b_s^2
\end{align*}
\end{proof}

\begin{lemma}\label{Async_lemma7}
for $\forall i,j$ and $\forall k\geq 0$, we have
\begin{align*}
\left\|\*X_k(\*e_i - \*e_j)\right\|_\infty <\theta = 16\tmix{}\alpha G_\infty
\end{align*}
\end{lemma}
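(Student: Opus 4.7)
My plan is to prove Lemma~\ref{Async_lemma7} by strong induction on $k$, mirroring the proof of Lemma~\ref{dss_bound} used for synchronous Moniqua but with the time-inhomogeneous mixing bound from Section~\ref{matrix} in place of the spectral-gap argument. The base case $k=0$ is immediate from $\*X_0 = \*0$ (Assumption~\ref{Assumption4}).

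For the inductive step, I would unroll the AD-PSGD recurrence $\*X_{k+1} = \*X_k\*W_k + \*\Omega_k - \alpha\*{\tilde{G}}_{k-\tau_k}$ starting from $\*X_0=\*0$ to obtain
\begin{align*}
\*X_{k+1}(\*e_i - \*e_j) = \sum_{t=0}^{k}\bigl(-\alpha\*{\tilde{G}}_{t-\tau_t} + \*\Omega_t\bigr)\prod_{q=t+1}^{k}\*W_q(\*e_i - \*e_j),
\end{align*}
then apply $\|\*A\*x\|_\infty \le \|\*A\|_{1,\infty}\|\*x\|_1$ termwise. Since $\*1^\top(\*e_i - \*e_j)=0$, I can insert $\*I - \tfrac{\*1\*1^\top}{n}$ for free and invoke the time-inhomogeneous mixing bound from Section~\ref{matrix}, giving $\|\prod_{q=t+1}^{k}\*W_q(\*e_i - \*e_j)\|_1 \le \min\bigl(2,\,4\cdot 2^{-\lfloor(k-t)/\tmix{}\rfloor}\bigr)$. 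Splitting the sum over $t$ into the first two mixing windows (where the trivial bound of $2$ is tighter) plus the geometric tail, the total is at most $6\tmix{}$.

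The two remaining factors are $\|\*{\tilde{G}}_{t-\tau_t}\|_{1,\infty}\le G_\infty$ (from Assumption~\ref{A_Assumption5}; note that only one column is nonzero in the asynchronous update) and $\|\*\Omega_t\|_{1,\infty}\le \tfrac{4\delta}{1-2\delta}\theta$ from Lemma~\ref{modifynoise}, which is precisely where the inductive hypothesis $\|\*X_t(\*e_i-\*e_j)\|_\infty<\theta$ enters. Combining gives
\begin{align*}
\|\*X_{k+1}(\*e_i - \*e_j)\|_\infty \le 6\tmix{}\Bigl(\alpha G_\infty + \tfrac{4\delta}{1-2\delta}\theta\Bigr).
\end{align*}
Substituting the stated $\delta=\tfrac{1}{64\tmix{}+2}$ yields $\tfrac{4\delta}{1-2\delta}=\tfrac{1}{16\tmix{}}$, and with $\theta = 16\tmix{}\alpha G_\infty$ the right-hand side collapses to $12\tmix{}\alpha G_\infty < \theta$, closing the induction.

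The main obstacle I anticipate is getting the constants right so the induction closes: the geometric-sum prefactor ($6\tmix{}$) must leave enough slack against the defining constant of $\theta$ ($16\tmix{}$) to absorb both the gradient and quantization contributions, while the chosen $\delta$ must automatically satisfy $\delta<1/2$ (which holds since $\tmix{}\ge 1$). The time-varying nature of $\*W_k$ introduces no new structural issue beyond replacing the spectral-gap estimate with the mixing-time estimate, because $\*\Omega_t$ still has the pairwise-difference structure required by Lemma~\ref{modifynoise}.
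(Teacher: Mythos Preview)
Your proposal is correct and follows essentially the same induction-and-unrolling strategy as the paper: base case $\*X_0=\*0$, unroll the recurrence, bound $\|\*{\tilde G}_{t-\tau_t}\|_{1,\infty}\le G_\infty$ and $\|\*\Omega_t\|_{1,\infty}\le 2\delta B_\theta$ via Lemma~\ref{modifynoise}, sum the mixing-time decay, and close with the given $\delta$ and $\theta$. The only cosmetic difference is that the paper routes through $\|\*X_{k+1}(\*I-\tfrac{\*1\*1^\top}{n})\|_{1,\infty}$ and uses only the mixing bound (yielding a prefactor $8\tmix{}$), whereas you work directly with $\*e_i-\*e_j$ and take the minimum with the trivial bound $2$, giving the slightly sharper $6\tmix{}$; both close the induction against $\theta=16\tmix{}\alpha G_\infty$.
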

\begin{proof}
We use mathmatical induction to prove this.

I. First, for $k=0$, we have
\begin{align*}
\left\|\*X_k(\*e_i - \*e_j)\right\|_\infty=0<\theta=16\tmix{}\alpha G_\infty
\end{align*}
II. Suppose for $k\geq 0$, we have $\left\|\*X_t(\*e_i - \*e_j)\right\|_\infty<\theta$, $\forall t\leq k$, then we have
\begin{align*}
    & \left\|\*X_{k+1}(\*e_i - \*e_j)\right\|_\infty\\
    \leq & \left\|\*X_{k+1}\left(\frac{\*1}{n} - \*e_i\right)\right\|_\infty + \left\|\*X_{k+1}\left(\frac{\*1}{n} - \*e_j\right)\right\|_\infty\\
    \leq & \left\|\*X_{k+1}\left(\*I-\frac{\*1\*1^\top}{n}\right)\right\|_{1,\infty}\left\|\*e_i\right\|_1 + \left\|\*X_{k+1}\left(\*I-\frac{\*1\*1^\top}{n}\right)\right\|_{1,\infty}\left\|\*e_j\right\|_1\\
    = & 2\left\|\*X_{k+1}\left(\*I-\frac{\*1\*1^\top}{n}\right)\right\|_{1,\infty}\\
    \leq & 2\left\|\left(\*X_{k}\*W_{k} - \alpha\*{\tilde{G}}_{k-\tau_{k}} + \*\Omega_{k}\right)\left(\frac{\*1}{n} - \*e_i\right)\right\|_{1,\infty}\\
    = & 2\left\|\sum_{t=0}^{k}\left(-\alpha\*{\tilde{G}}_{t-\tau_t}+\*\Omega_t\right)\left(\prod_{q=t+1}^{k}\*W_q-\frac{\*1\*1^\top}{n}\right)\right\|_{1,\infty}\\
    \leq & 2\sum_{t=0}^{k}\left\|\left(-\alpha\*{\tilde{G}}_{t-\tau_t}+\*\Omega_t\right)\left(\prod_{q=t+1}^{k}\*W_q-\frac{\*1\*1^\top}{n}\right)\right\|_{1,\infty}\\
    \leq & 2\sum_{t=0}^{k}\left\|-\alpha\*{\tilde{G}}_{t-\tau_t}+\*\Omega_t\right\|_{1,\infty}\left\|\prod_{q=t+1}^{k}\*W_q-\frac{\*1\*1^\top}{n}\right\|_1\\
    \leq & 4(\alpha G_\infty+2\delta B_\theta)\sum_{t=0}^{k}2^{-\lfloor (k-t)/\tmix{}\rfloor}\\
    < & 4(\alpha G_\infty+2\delta B_\theta)\sum_{t=0}^{\tmix{}-1}\sum_{r=0}^{\infty}2^{-r}\\
    \leq & 8(\alpha G_\infty+2\delta B_\theta)\tmix{}
\end{align*}
Put in $\delta=\frac{1}{64\tmix{}+2}$, we obtain
\begin{align*}
\left\|\*X_{k+1}(\*e_i - \*e_j)\right\|_2< 8(\alpha G_\infty+2\delta B_\theta)\tmix{}=8\tmix{}\alpha G_\infty+8\tmix{}\alpha G_\infty= 16\tmix{}\alpha G_\infty
\end{align*}
Combining I and II and we complete the proof.
\end{proof}

\end{document}